\newtheorem{theorem}{Theorem}
\newtheorem{lemma}[theorem]{Lemma}
\newtheorem{corollary}{Corollary}[theorem]
\newtheorem{definition}{Definition}
\newcommand{\EB}{\ensuremath{E_{S\cup T}}}
\newcommand{\ES}{\ensuremath{E_{S}}}
\newcommand{\ET}{\ensuremath{E_{T}}}
\newcommand{\JH}[1]{\textcolor{magenta}{JH: #1}}
\newcommand{\AM}[1]{\textcolor{cyan}{AM: #1}}
\newtheoremstyle{TheoremRep}
        {\topsep}{\topsep}              
        {\itshape}                      
        {}                              
        {\bfseries}                     
        {.}                             
        { }                             
        {\thmname{#1}\thmnote{ \bfseries #3}}
\theoremstyle{TheoremRep}
\newtheorem{theoremrep}{Theorem}
\newtheorem{corollaryrep}[theoremrep]{Corollary}
\title{Understanding the Limits of Unsupervised Domain Adaptation via Data Poisoning}
\author{Akshay Mehra\textsuperscript{1}, Bhavya Kailkhura\textsuperscript{2}, Pin-Yu Chen\textsuperscript{3} and Jihun Hamm\textsuperscript{1}\\
{\small \textsuperscript{1}Tulane University \quad \textsuperscript{2}Lawrence Livermore National Laboratory \quad \textsuperscript{3}IBM Research}\\ 
{\tt\small\{amehra, jhamm3\}@tulane.edu, kailkhura1@llnl.gov, pin-yu.chen@ibm.com}\\
}
\begin{document}

\maketitle
\begin{abstract}
Unsupervised domain adaptation (UDA) enables cross-domain learning without target domain labels by transferring knowledge from a labeled source domain whose distribution differs from that of the target.
However, UDA is not always successful and several accounts of `negative transfer' have been reported in the literature.
In this work, we prove a simple lower bound on the target domain error that complements the existing upper bound.
Our bound shows the insufficiency of minimizing source domain error and marginal distribution mismatch for a guaranteed reduction in the target domain error, due to the possible increase of induced labeling function mismatch. 
This insufficiency is further illustrated through simple distributions for which the same UDA approach succeeds, fails, and may succeed or fail with an equal chance. 
Motivated from this, we propose novel data poisoning attacks to fool UDA methods into learning representations that produce large target domain errors.  
We evaluate the effect of these attacks on popular UDA methods using benchmark datasets where they have been previously shown to be successful.
Our results show that poisoning can significantly decrease the target domain accuracy, dropping it to 
almost 0\% in some cases, with the addition of only 10\% poisoned data in the source domain. 
The failure of these UDA methods demonstrates their limitations 
at guaranteeing cross-domain generalization consistent
with our lower bound. 
Thus, evaluating UDA methods in adversarial settings such as data poisoning provides a better sense of their robustness to data distributions unfavorable for UDA.
\end{abstract}

\section{Introduction}
The problem of domain adaptation (DA) arises when the training and the test data distributions are different, violating the common assumption of supervised learning. 
In this paper, we focus on unsupervised DA (UDA), which is a special case of DA when no labeled information from the target domain is available.
This setting is useful for applications where obtaining large-scale well-curated datasets is both time-consuming and costly.  
The seminal works \cite{ben2010theory, ben2007analysis} proved an upper bound on a classifier's target domain error in the UDA setting leading to several algorithms for learning in this setting. 
Many of these algorithms rely on learning a domain invariant representation by minimizing the error on the source domain and a divergence measure between the marginal feature distributions of the source and target domains. Popular divergence measures include total variation distance, Jensen-Shannon divergence \cite{ganin2016domain,tzeng2017adversarial,zhao2018adversarial}, Wasserstein distance \cite{shen2018wasserstein,lee2017minimax,courty2017joint}, and maximum mean discrepancy \cite{long2014transfer,long2015learning,long2016unsupervised}. 
The success of these algorithms is argued in terms of minimization of the upper bound proposed in \cite{ben2010theory} along with an improvement in the target domain accuracy on benchmark UDA tasks.

Despite this success on benchmark datasets, some works \cite{ganin2016domain,liu2019transferable,zhao2019learning,wang2019characterizing} have presented evidence of the failure of these methods in different scenarios.
Recent works have explained this apparent failure of UDA methods by proposing new upper bounds \cite{zhao2019learning, combes2020domain, johansson2019support, wu2020representation} on the target domain error while others have demonstrated the cause of failure using experiments showing that learning a domain invariant representation that minimizes the source domain error can cause an increase in the error of the ideal joint hypothesis \cite{liu2019transferable}.
To provably explain this apparent failure of learning in the UDA setting we propose a lower bound on the target domain error. 
Our lower bound provides a necessary condition for successful learning in the UDA setting, complementing the existing upper bound of \cite{ben2010theory} 
and is dependent
on the difference between the labeling functions of source and target domain data induced by the representation map.
%
For cases where the induced labeling functions match on the source and the target domain data (i.e., favorable case), the success of UDA is explained using the upper bounds proposed by previous works \cite{ben2010theory, zhao2019learning, johansson2019support}.
For a representation that aligns the source and the target domain data and minimizes the error on the source but induces labeling functions that don't agree on the source and the target domain data (i.e., unfavorable case), our lower bound explains the failure of UDA.
Our analysis brings to light yet another case  (i.e., ambiguous case) of data distributions where success and failure of UDA are equally likely. 
This happens due to the lack of label information from the target domain. 
This case opens doors for adversarial attacks against UDA methods since a small amount of misinformation about the target domain labels can lead the UDA methods into producing a representation similar to the unfavorable case, incurring a significant increase in the target domain error.

Motivated from this analysis of UDA methods under different data distributions, 
we evaluate the extent to which the performance of current UDA methods can suffer in presence of a small amount of adversarially crafted data. For this purpose, we propose novel data poisoning attacks, using mislabeled and clean-label points. 
We evaluate the effect of our poisoning attacks 
on popular UDA methods using benchmark datasets, where they were previously shown to be very effective.
We find that our poisoning attacks cause UDA methods to either
align incorrect classes from the two domains or prevent correct classes from being very close in the representation space. 
Both of these lead to the failure of UDA methods at reducing target domain error. 
With just 10\% poison data in the source domain, target domain accuracy for current UDA methods is significantly reduced, dropping to almost 0\% in some cases.
This dramatic failure of UDA methods demonstrates their limits 
and suggests that the future UDA methods must be evaluated in adversarial settings along with evaluation on benchmark datasets to truly gauge their effectiveness at learning under the UDA setting.

Our main contributions are summarized as follows:
\begin{itemize}[leftmargin=0.75cm]
    \item We prove a lower bound on the target domain error that provides a necessary condition for successful learning in the UDA setting. Our bound shows the failure of learning a domain invariant representation while minimizing the source domain error at guaranteeing target generalization.
    \item We present example data distributions where UDA succeeds, fails, and where success and failure are equally likely. This sensitivity of UDA methods to the data distribution brings to light a new vulnerability of UDA methods to adversarial attacks such as data poisoning. 
    \item To concretely understand the extent of this vulnerability of UDA methods, we propose novel data poisoning attacks using clean-label and mislabeled data. Our results show a dramatic failure of current UDA methods at target generalization in presence of poisoned data. Thus, our poisoning attacks can provide better insights into the robustness of UDA methods than those obtained from performance evaluation on benchmark datasets.
\end{itemize}

\if0
\begin{itemize}
    \item DA helps to tackle distribution shift. Adapts model learned from label rich to label scarce domain. We consider the UDA setting.
    \item Most DA has focused on learning an invariant rep for source and target task. This is motivated by ben Davids result but since $\lambda$ (ideal joint hypothesis) is not controllable, minimizing source error and aligned representations may not lower the upper bound. cite examples from "on learning invariant" and "transferable adv training". 
    \item Another failure mode of DA is a shift in the label distributions but most data sets are well balanced.
    \item Provide a lower bound on the success of DA in this scenario and analyze a simple problem to show how/when DA can fail.
    \item ideal joint hypothesis is dependent on true labeling distributions in of the source and target. In particular how $f_S$ classifies T and how $f_T$ classifies S. Since in UDA we only have $f_S$ on the source we want to explore how robust are DA methods when $f_S \neq f_T$ on a small subset of target data. How important is the role of $f_S$ in the success of alignment of the two domains? 
    \item Develop practical algorithms/attacks to show this is a real threat by the use of data poisoning. The lower bound can be realized by the use of data poisoning. Highlight the difference between the success of this kind of poisoning and conventional poisoning to again show that poisoning can go beyond hurting accuracy.
\end{itemize}
\fi


\section{Background and related work}

{\bf Analysis of unsupervised domain adaptation:}
\if0
\JH{We don't need these results.}
We consider domain adaptation for a binary classification problem. Following the notation of \cite{ben2010theory}, we define a domain as a pair comprising of a distribution $\mathcal{D}$ on inputs $\mathcal{X}$ and a labeling function $f:\mathcal{X} \xrightarrow{} [0, 1]$. 
Consider three domains, a source domain $\langle \mathcal{D}_{S}, f_{S} \rangle$, a poison domain $\langle \mathcal{D}_{P}, f_{P} \rangle$, and a target domain $\langle \mathcal{D}_{T}, f_{T} \rangle$.
A hypothesis is a function $h: \mathcal{X} \xrightarrow{} \{0,1\}$. 

The probability according to a distribution function $\mathcal{D}$ that a hypothesis $h$ disagrees with a labeling function $f$ (or another hypothesis $h'$) is 
$\epsilon(h, f) = \mathbb{E}_{{\bf x} \sim \mathcal{D}}\big[ |h({\bf x}) - f({\bf x})|\big]$.

To measure the divergence between the distributions $\mathcal{D}$ and $\mathcal{D'}$, we use the $L^1$ or variation divergence which makes use of the set $\mathcal{B}$, which contains measurable subsets under the two distributions. Formally,
$d_1(\mathcal{D}, \mathcal{D'}) = 2 \sup_{B \in \mathcal{B}} |\mathbb{P}_{\mathcal{D}}(B) - \mathbb{P}_{\mathcal{D'}}(B)|$.

Alternative (https://jerryzli.github.io/robust-ml-fall19/lec2.pdf): \\ $d_1(\mathcal{D}, \mathcal{D'}) = \int |P_{\mathcal{D}}(x) - {P}_{\mathcal{D'}}(x)|\;dx $.

\begin{lemma}\label{theorem_1_bendavid}(Theorem 1 \cite{ben2010theory}) 
For a hypothesis $h$,
\begin{equation}
    \epsilon_{T}(h, f_{T}) \leq  \epsilon_{S}(h, f_{S})  + d_1(\mathcal{D_{S}}, \mathcal{D_{T}}) 
 + \min \big\{\mathbb{E}_{\mathcal{D_{S}}}\big[ |f_{S}({\bf x}) - f_{T}({\bf x})|\big], \mathbb{E}_{\mathcal{D_{T}}}\big[ |f_{S}({\bf x}) - f_{T}({\bf x})|\big] \big\}.
\end{equation}
\end{lemma}

\begin{definition}
Let $\mathcal{D}$ and $\mathcal{D'}$ be probability distributions defined over the domain $\mathcal{X}$, $\mathcal{H}$ be a hypothesis class on $\mathcal{X}$ and $I(h)$ be the set fro which $h \in \mathcal{H}$ is the characteristic function; i.e. ${\bf x} \in I(h) \Leftrightarrow h({\bf x}) = 1$. Then $\mathcal{H}-divergence$ between $\mathcal{D}$ and $\mathcal{D'}$ is defined as $d_{\mathcal{H}}(\mathcal{D}, \mathcal{D'}) = 2\sup_{h\in\mathcal{H}}|\mathbb{P}_{\mathcal{D}}(I(h)) - \mathbb{P}_{\mathcal{D'}}(I(h))|$.
\end{definition}

\begin{definition}
For a hypothesis space $\mathcal{H}$, the symmetric difference hypothesis space $\mathcal{H}\Delta\mathcal{H}$ is a set of hypotheses, $\gamma \in \mathcal{H}\Delta\mathcal{H} \Leftrightarrow \gamma({\bf x}) = h({\bf x}) \oplus h'({\bf x})$, for some $h, h' \in \mathcal{H}$.
\end{definition}

\begin{lemma}
For any hypotheses $h, h' \in \mathcal{H}$, $|\epsilon_{S}(h, h') - \epsilon_{T}(h, h')| \leq \frac{1}{2}d_{\mathcal{H}\Delta\mathcal{H}}(\mathcal{D}_{S}, \mathcal{D}_{T}) = \sup_{h,h'\in \mathcal{H}} |\epsilon_{S}(h, h') - \epsilon_{T}(h, h')|$. 
\end{lemma}

\begin{definition}
The ideal joint hypothesis for source and target domains is a hypothesis that minimizes the error, $h^\ast = \arg\min_{h \in \mathcal{H}} \epsilon_{S}(h) + \epsilon_{T}(h).$ 
The error of $h^\ast$ is $\lambda_{ST} = \epsilon_{S}(h^\ast) + \epsilon_{T}(h^\ast).$
\end{definition}

\begin{lemma}\label{theorem_2_bendavid}(Similar to Theorem 2 \cite{ben2010theory}) 
For a hypothesis $h$,
\begin{equation}
    \epsilon_{T}(h, f_{T}) \leq  \epsilon_{S}(h, f_{S})  
    + \frac{1}{2}d_{\mathcal{H}\Delta\mathcal{H}}(\mathcal{D}_{S}, \mathcal{D}_{T}) 
    + \lambda_{ST}
\end{equation}
\end{lemma}
\fi
Several previous works have studied the problem of UDA and have provided conditions under which UDA is possible \cite{ben2007analysis,ben2010theory,mansour2009domain,mansour2014robust,david2010impossibility,ben2012hardness,wu2020representation}. 
\cite{ben2010theory, ben2007analysis} proposed an upper bound on the target domain error which has inspired many UDA algorithms. 
Recent works \cite{zhao2019learning,combes2020domain,johansson2019support} have improved the upper bounds on target domain error and have proposed a lower bound dependent on the labeling functions in the input space. Using this lower bound, the failure of learning in the UDA setting was explained in the case when marginal label distributions are different between the two domains. 
Our lower bound, on the other hand, is dependent on the labeling functions for the source and target domains, induced by the representation. 
This suggests that if a representation induces labeling functions that disagree on source and target domains then UDA provably fails even if the representation is domain invariant and minimizes error on the source domain. 
Thus our lower bound directly explains the observations of failure of UDA in many previous works \cite{ganin2016domain,liu2019transferable,wang2019characterizing,zhao2019learning,wu2020representation}. 
A detailed comparison of our work with other works analyzing the failure of learning in the UDA setting is present in Appendix~\ref{app:additional_related_work}. 

{\bf Algorithms for UDA: }
Many algorithms \cite{ganin2016domain, tzeng2017adversarial, long2017conditional, zhao2018adversarial,shu2018dirt,hoffman2018cycada} for UDA 
learn a domain invariant representation while minimizing error on the source domain. 
A popular adversarial approach to domain adaptation is DANN \cite{ganin2016domain} which uses a discriminator to distinguish points from source and target domains based on their representations.
Another popular method CDAN\cite{long2017conditional}, uses classifier output along with representations to identify the domains of the points. 
IW-DAN and IW-CDAN \cite{combes2020domain} were recently proposed as extensions of the original DANN and CDAN with an importance weighting scheme to minimize the mismatch between the labeling distributions of the two domains.
A different approach MCD \cite{saito2018maximum}, makes use of two task-specific classifiers as discriminators to align the two domains. This method adversarially trains the representation to minimize the disagreement between the two classifiers on the target domain data (classifier discrepancy) while training the classifiers to maximize this discrepancy. 
Another recent approach, SSL \cite{xu2019self} uses self-supervised learning tasks (e.g. rotation angle prediction) to better align the two domains.
In this work, we study the effect of poisoning on these methods as they have been shown to be effective at various UDA tasks.

{\bf Data poisoning:}
Data poisoning \cite{biggio2012poisoning, mei2015using, jagielski2018manipulating, chen2017targeted, ji2017backdoor, turner2018clean, mehra2020robust} is a training time attack where the attacker has access to the data which will be used by the victim for training. 
Most works \cite{zhu2019transferable, munoz2017towards, shafahi2018poison, mehra2019penalty, huang2020metapoison} have considered data poisoning in a fully supervised setting where train and test sets are drawn from the same underlying data distribution (single domain setting).
These works either target the classification of a single test point by adding a large number of poisoned data or require modifying all points in a class to affect the model's performance on that class after retraining. 
Our work studies the effect of poisoning on the entire target domain in the UDA setting using popular UDA algorithms for training. The success of our poisoning attacks at significantly reducing the target domain accuracy with a small amount of poisoned data shows the ease of poisoning in the UDA setting. In contrast, poisoning leads to a small decrease in the overall test accuracy in the single domain setting, especially when using state-of-the-art classifiers such as deep neural networks. 

\section{When does learning fail in the unsupervised domain adaption setting?}\label{sec:analysis}
{\bf Notations and settings: }
Let $\mathcal{X}$ be the data domain and $\mathcal{D}$ be the distribution over $\mathcal{X}$ with the corresponding pdf $p(x)$. We assume there is a deterministic labeling function $f:\mathcal{X}\rightarrow{}[0,1]$ for the given binary classification task. The $f(x)$ can be interpreted as $Pr[y=1|x]$.
Let $g:\mathcal{X}\rightarrow{}\mathcal{Z}$ denote the representation map that maps an input instance $x$ to its features where $\mathcal{Z}$ is called feature or representation space.
Let $h:\mathcal{Z}\rightarrow{}[0,1]$ be a hypothesis for binary classification on the representation  space. 
Note that the representation map $g$ induces a distribution over $\mathcal{Z}$ denoted by $Pr_{\tilde{\mathcal{D}}}[B] := Pr_{\mathcal{D}} [g^{-1}(B)]$ and the corresponding density function $\tilde{p}(z)$ on $\mathcal{Z}$ \cite{ben2007analysis}.
The $g$ also induces the labeling function 
\begin{equation}\label{eq:labeling function}
\tilde{f}(z) := E_\mathcal{D}[f(x) | g(x)=z],
\end{equation}
for any $B$ such that $g^{-1}(B)$ is $\mathcal{D}$-measurable. 
The misclassification error $e(h)$ w.r.t. the induced labeling function is
$e(h) = E_{z \sim \tilde{\mathcal{D}}}[| \tilde{f}(z) - h(z)|]$ where $\tilde{\mathcal{D}}$ is the induced distribution over $\mathcal{Z}$. 
Similarly, we define $e(\tilde{f},\tilde{f}') = E_{z \sim \tilde{\mathcal{D}}}[| \tilde{f}(z) - \tilde{f}'(z)|]$
and $e(h,h') = E_{z \sim \tilde{\mathcal{D}}} [| h(z) - h'(z)|]$.
The distributions $\tilde{p}$ and the labeling functions $\tilde{f}$ for the source and the target domains will be written as $\tilde{p}_S$, $\tilde{p}_T$, $\tilde{f}_S$ and $\tilde{f}_T$, respectively.
The total variation distance is $D_1(\tilde{p},\tilde{p}')=\int_{\mathcal{Z}} |\tilde{p}(z)-\tilde{p}'(z)|dz$.

\if0
\subsection{Settings}

Let $g:\mathcal{X}\to\mathcal{Z}$ be the representation map or the feature map.(measurable? continuous? differentiable?)
Induced hypothesis. If $h^Z(z): \mathcal{Z} \to [0,1]$ is a hypothesis on $\mathcal$,
then the induced hypothesis on $\mathcal{X}$ is $h^X(x):=h^Z(g(x))$.

Induced distribution. Suppose $Pr^X[\cdot]$ and is given. Then, $Pr^Z[Z] = Pr^X(g^{-1}(Z))$ where $g^{-1}(Z)$ is the preimage.

Suppose $f^X(x)$ is the true labeling function $f:\mathcal{X} \to [0,1]$. (This includes both regression and classification.)
We define the true labeling function on $\mathcal{Z}$ as follows.
For $z$ where the preimage $g^{-1}(z)$ is a singleton, it is simply $f^Z(z):=f^X(g^{-1}(x))$.
For non-singleton $g^{-1}(z)$, it is defined as
\[
f^Z(z): = \frac{\int_{x \in g^{-1}(z)} f^X(x) p(x)\;dx}{\int_{x \in g^{-1}(z)} p(x)\;dx}.
\]
With abuse of notation, we will omit the superscripts when it is clear from the context.

Anyway, what is 
$e(f^X_S,f^X_T)=  E^X[ |f^X_S-f^X_T| ] = \int |f^X_S(x) - f^X_T(x)| p(x) dx 
= \int |f^X_S(x) - f^X_T(x)| p(x) dx $

$e_{S/T}(f^Z_S,f^Z_T)=  E^Z[ |f^Z_S-f^Z_T| ] = \int |f^Z_S(z) - f^Z_T(z)| p(z) dz
= \int \frac{\int_{x \in g^{-1}(z)} |f^X_S(x)- f^X_T(x)| p(x)\;dx}{\int_{x \in g^{-1}(z)} p(x)\;dx} p(z) dz$
\fi

\subsection{Lower bound on the target domain error}
Most adversarial DA methods learn a domain invariant representation $g:\mathcal{X}\to\mathcal{Z}$ by minimizing error on the source domain and penalizing the mismatch between the {\bf marginal} source and target distributions since the conditional distribution $\tilde{p}_T(z|y)$ for target domain is unavailable in the UDA setting.
Some works \cite{wang2019characterizing, zhao2019learning, liu2019transferable} have shown this to be insufficient at guaranteeing target generalization. Recent works have proposed a new upper bound \cite{zhao2019learning} or argued failure in terms of the upper bound 
in \cite{ben2010theory,ben2007analysis} 
({\(e_T(h) \leq \min\{e_T(\tilde{f}_S,\tilde{f}_T),e_S(\tilde{f}_S,\tilde{f}_T)\} + e_S(h) + D_1(\tilde{p}_S,\tilde{p}_T)\)}) 
being large.
%
However, 
a large upper bound does not guarantee failure. 
Thus, we prove a simple lower bound on the target domain error which shows the necessary condition for the success of learning in the UDA setting and also explains the failure of current UDA methods at guaranteeing target generalization.
\begin{theorem}\label{thm:lower_bound}
Let $\mathcal{H}$ be the hypothesis class and $\mathcal{G}$ be the class  representation maps. Then, for all $h\in \mathcal{H}$ and $g \in \mathcal{G}$, 
\begin{equation}\label{eq:lower bound}
e_T(h) \geq \max\{e_S(\tilde{f}_S,\tilde{f}_T),e_T(\tilde{f}_S,\tilde{f}_T)\}  - e_S(h) - D_1(\tilde{p}_S,\tilde{p}_T).
\end{equation}
\end{theorem}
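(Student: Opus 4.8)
The plan is to mirror the derivation of the standard upper bound (the one quoted just above the theorem as $e_T(h) \le \min\{e_T(\tilde f_S,\tilde f_T),e_S(\tilde f_S,\tilde f_T)\} + e_S(h) + D_1(\tilde p_S,\tilde p_T)$), but run the triangle inequality in the opposite direction. The key observation is that for any two functions $a,b:\mathcal Z\to[0,1]$ and any distribution $\tilde p$ on $\mathcal Z$, the reverse triangle inequality gives $\|a-b\|_{L^1(\tilde p)} \ge \big|\,\|a-c\|_{L^1(\tilde p)} - \|b-c\|_{L^1(\tilde p)}\,\big|$ for any third function $c$. First I would fix $h\in\mathcal H$ and $g\in\mathcal G$, and apply this with $a=\tilde f_S$, $b=\tilde f_T$, $c=h$, all measured under the target-induced distribution $\tilde p_T$. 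This yields
\begin{equation*}
e_T(\tilde f_S,\tilde f_T) \;=\; E_{z\sim\tilde{\mathcal D}_T}\big[|\tilde f_S(z)-\tilde f_T(z)|\big] \;\le\; E_{z\sim\tilde{\mathcal D}_T}\big[|\tilde f_S(z)-h(z)|\big] + e_T(h),
\end{equation*}
so rearranging gives $e_T(h) \ge e_T(\tilde f_S,\tilde f_T) - E_{z\sim\tilde{\mathcal D}_T}[|\tilde f_S(z)-h(z)|]$.

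The second step is to control the cross term $E_{z\sim\tilde{\mathcal D}_T}[|\tilde f_S(z)-h(z)|]$ by the source-domain quantities $e_S(h)$ and $D_1(\tilde p_S,\tilde p_T)$. I would write the expectation under $\tilde p_T$ as the expectation under $\tilde p_S$ plus an error term bounded by $\int_{\mathcal Z}|\tilde f_S(z)-h(z)|\,|\tilde p_T(z)-\tilde p_S(z)|\,dz \le \int_{\mathcal Z}|\tilde p_T(z)-\tilde p_S(z)|\,dz = D_1(\tilde p_S,\tilde p_T)$, using that the integrand $|\tilde f_S - h|$ is bounded by $1$. Then under $\tilde p_S$, the reverse triangle inequality again (now with $c=\tilde f_S$) gives $E_{z\sim\tilde{\mathcal D}_S}[|\tilde f_S(z)-h(z)|] = e_S(h)$ by definition of the induced-labeling misclassification error. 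Combining, $E_{z\sim\tilde{\mathcal D}_T}[|\tilde f_S(z)-h(z)|] \le e_S(h) + D_1(\tilde p_S,\tilde p_T)$, and substituting into the inequality from the first step proves $e_T(h) \ge e_T(\tilde f_S,\tilde f_T) - e_S(h) - D_1(\tilde p_S,\tilde p_T)$.

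To get the $\max$ on the right-hand side, I would repeat the identical argument but measure the first reverse triangle inequality under $\tilde p_S$ instead: $e_S(\tilde f_S,\tilde f_T) \le e_S(h) + E_{z\sim\tilde{\mathcal D}_S}[|\tilde f_T(z)-h(z)|]$, then transport $E_{z\sim\tilde{\mathcal D}_S}[|\tilde f_T(z)-h(z)|]$ to $\tilde p_T$ at a cost of $D_1(\tilde p_S,\tilde p_T)$, and bound the resulting $E_{z\sim\tilde{\mathcal D}_T}[|\tilde f_T(z)-h(z)|] = e_T(h)$ by definition. This gives $e_T(h) \ge e_S(\tilde f_S,\tilde f_T) - e_S(h) - D_1(\tilde p_S,\tilde p_T)$. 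Since both lower bounds hold, so does the one with the maximum, which is exactly \eqref{eq:lower bound}.

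I do not anticipate a serious obstacle — this is essentially the Ben-David-style argument read backwards. The only point requiring a little care is the bookkeeping on which distribution each expectation is taken under when transporting the cross term between $\tilde p_S$ and $\tilde p_T$; one must use the $[0,1]$-boundedness of the integrands ($|\tilde f_S - h|$, $|\tilde f_T - h|$) so that the transport error is controlled by the total variation distance $D_1$ rather than some larger quantity. A minor subtlety is that $\tilde f_S$ and $\tilde f_T$ are themselves defined as conditional expectations of the respective labeling functions under the respective source/target distributions (equation \eqref{eq:labeling function}), so I would keep the notation $e_S(\tilde f_S,\tilde f_T)$ and $e_T(\tilde f_S,\tilde f_T)$ as atomic quantities defined exactly as in the ``Notations and settings'' paragraph, and not attempt to unfold them further.
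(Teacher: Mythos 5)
Your proposal is correct and follows essentially the same argument as the paper's proof: the triangle inequality $|\tilde f_S-\tilde f_T|\le|\tilde f_S-h|+|h-\tilde f_T|$ integrated under one induced distribution, followed by transporting the cross term $E[|\tilde f_S-h|]$ (resp.\ $E[|\tilde f_T-h|]$) between $\tilde p_T$ and $\tilde p_S$ at cost $D_1(\tilde p_S,\tilde p_T)$ using the $[0,1]$-boundedness of the integrand. The paper merely packages the same steps as a single chain of inequalities starting from $e_S(h)+e_T(h)$, so the difference is purely presentational.
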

The proof is in Appendix~\ref{app:analysis}. 
Even though our bound depends on the total variation distance which is a strict measure of distance and is difficult to estimate, the bound still explains the limitations of UDA. Extending the bound to different divergence metrics is left as future work.
The following corollary is immediate from Theorem~\ref{thm:lower_bound}.
\begin{corollary}
For all $h\in \mathcal{H}$, $g \in \mathcal{G}$, 
\[
e_T(h) \geq \max \{e_S(\tilde{f}_S,\tilde{f}_T), e_T(\tilde{f}_S,\tilde{f}_T)\} - e_S(h) - \sqrt{0.5 D_{KL}(\tilde{p}_S||\tilde{p}_T)}.
\]
\end{corollary}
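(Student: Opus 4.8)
The plan is to obtain this as a direct corollary of Theorem~\ref{thm:lower_bound} by replacing the total variation term with its bound in terms of the KL divergence. Theorem~\ref{thm:lower_bound} already gives, for all $h \in \mathcal{H}$ and $g \in \mathcal{G}$,
\[
e_T(h) \geq \max\{e_S(\tilde{f}_S,\tilde{f}_T),e_T(\tilde{f}_S,\tilde{f}_T)\} - e_S(h) - D_1(\tilde{p}_S,\tilde{p}_T).
\]
The right-hand side is monotonically decreasing in $D_1(\tilde{p}_S,\tilde{p}_T)$, so substituting any valid upper bound for $D_1(\tilde{p}_S,\tilde{p}_T)$ produces a (possibly looser but still) valid lower bound on $e_T(h)$.

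The one substantive ingredient is Pinsker's inequality, which bounds the variation divergence between the induced feature distributions by $D_1(\tilde{p}_S,\tilde{p}_T) \leq \sqrt{0.5\, D_{KL}(\tilde{p}_S\|\tilde{p}_T)}$ in the normalization used for $D_1$ in the notation section. Plugging this into the displayed inequality of Theorem~\ref{thm:lower_bound}, leaving the terms $\max\{e_S(\tilde{f}_S,\tilde{f}_T),e_T(\tilde{f}_S,\tilde{f}_T)\}$ and $e_S(h)$ untouched, immediately yields
\[
e_T(h) \geq \max\{e_S(\tilde{f}_S,\tilde{f}_T),e_T(\tilde{f}_S,\tilde{f}_T)\} - e_S(h) - \sqrt{0.5\, D_{KL}(\tilde{p}_S\|\tilde{p}_T)},
\]
which is exactly the claim, and it holds for every $h$ and $g$ since the inequality of Theorem~\ref{thm:lower_bound} does.

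There is no real obstacle here; the only point requiring care is bookkeeping of constants, i.e., making sure the coefficient inside the square root is consistent with the convention chosen for $D_1$ (the $L^1$ / variation divergence $\int_{\mathcal{Z}}|\tilde{p}_S(z)-\tilde{p}_T(z)|\,dz$) and that $D_{KL}$ is taken with a matching logarithm base. One should also note that $D_{KL}(\tilde{p}_S\|\tilde{p}_T)$ is to be read as $+\infty$ whenever $\tilde{p}_S$ is not absolutely continuous with respect to $\tilde{p}_T$, in which case the bound is vacuous but still correct. If a sharper statement were desired one could instead invoke the Bretagnolle--Huber inequality, but Pinsker suffices for the corollary as stated.
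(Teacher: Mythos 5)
Your proof is correct and is exactly the paper's argument: the corollary follows from Theorem~1 by applying Pinsker's inequality $D_1(\tilde{p}_S,\tilde{p}_T) \leq \sqrt{0.5\, D_{KL}(\tilde{p}_S\|\tilde{p}_T)}$ to the total-variation term, which only loosens the lower bound. Your additional remarks on normalization conventions and the $D_{KL}=+\infty$ case are sensible bookkeeping but not needed beyond what the paper states.
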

This is due to the Pinsker's inequality: {\small $D_1(p,p') \leq \sqrt{0.5 D_{KL}(p||p')}$}.
The interpretation of Eq.~\ref{eq:lower bound} is as follows. 
Since $f_T$ is not observable, the goal of UDA is to minimize the observable source classification error
$e_S(h)$ and the domain mismatch $D_1(\tilde{p}_S,\tilde{p}_T)$ (or other metrics such as $d_{\mathcal{H}\Delta \mathcal{H}}(\tilde{p}_s,\tilde{p}_T)$):
\begin{equation}\label{eq:UDA}
\min_{g,h}\;e_s(h) + D_1(\tilde{p}_S,\tilde{p}_T).
\end{equation}
This leads to maximization of the second and the third term in the RHS of Eq.~\ref{eq:lower bound}.
With overparameterized models and large datasets, minimizing the empirical estimates of the quantities in Eq.~\ref{eq:UDA} can drive $e_S(h) \simeq 0$ and $D_1(\tilde{p}_S,\tilde{p}_T) \simeq 0$. Consequently, $e_T(h) \geq \max \{e_S(\tilde{f}_S,\tilde{f}_T), e_T(\tilde{f}_S,\tilde{f}_T)\}$. If $\tilde{f}_S$ and $\tilde{f}_T$ disagree on the source and target domains in the representation space, 
target domain error $e_T(h)$ will be {\bf provably} large. 
\begin{corollary}
\label{cor:two}
For all $h\in \mathcal{H}$ and $g \in \mathcal{G}$,
\begin{equation*}
|e_T(h) - {e_S(\tilde{f}_S,\tilde{f}_T)| \leq e_S(h) + D_1(\tilde{p}_S,\tilde{p}_T)},\;\;\mathrm{and}\;\;
|e_T(h) - {e_T(\tilde{f}_S,\tilde{f}_T)| \leq e_S(h) + D_1(\tilde{p}_S,\tilde{p}_T)}.
\end{equation*}
\end{corollary}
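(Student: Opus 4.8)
The plan is to obtain Corollary~\ref{cor:two} as an immediate packaging of two bounds already in hand: the lower bound of Theorem~\ref{thm:lower_bound} delivers one side of each absolute value, and the Ben-David--style upper bound $e_T(h)\le \min\{e_T(\tilde{f}_S,\tilde{f}_T),e_S(\tilde{f}_S,\tilde{f}_T)\} + e_S(h) + D_1(\tilde{p}_S,\tilde{p}_T)$ recalled just before Theorem~\ref{thm:lower_bound} delivers the other. Concretely, fix $h\in\mathcal{H}$ and $g\in\mathcal{G}$ and abbreviate the common slack by $R := e_S(h) + D_1(\tilde{p}_S,\tilde{p}_T)$. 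First I would apply $\max\{a,b\}\ge a$ with $a = e_S(\tilde{f}_S,\tilde{f}_T)$ inside Theorem~\ref{thm:lower_bound}, getting $e_T(h)\ge e_S(\tilde{f}_S,\tilde{f}_T) - R$, i.e. $e_S(\tilde{f}_S,\tilde{f}_T) - e_T(h)\le R$. Then I would apply $\min\{a,b\}\le a$ with the same $a$ inside the upper bound, getting $e_T(h)\le e_S(\tilde{f}_S,\tilde{f}_T) + R$, i.e. $e_T(h) - e_S(\tilde{f}_S,\tilde{f}_T)\le R$. The two one-sided estimates combine to $|e_T(h) - e_S(\tilde{f}_S,\tilde{f}_T)|\le R$, which is the first claim; repeating the argument verbatim with $a = e_T(\tilde{f}_S,\tilde{f}_T)$ in both the $\max$ and the $\min$ step yields the second.

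There is essentially no obstacle: the corollary is just the sandwich $\max\{\cdots\} - R\le e_T(h)\le \min\{\cdots\} + R$ read one entry at a time. The only point requiring a little care is making sure the cited upper bound is genuinely stated (it is, in the paragraph preceding Theorem~\ref{thm:lower_bound}); if a referee prefers a self-contained derivation I would instead argue directly in the style of the proof of Theorem~\ref{thm:lower_bound}. Namely, write $e_T(h) - e_T(\tilde{f}_S,\tilde{f}_T) = E_{z\sim\tilde{p}_T}\big[|\tilde{f}_T(z)-h(z)| - |\tilde{f}_S(z)-\tilde{f}_T(z)|\big]$, bound the integrand in absolute value by the reverse triangle inequality by $|\tilde{f}_S(z)-h(z)|$, and then change measure from $\tilde{p}_T$ to $\tilde{p}_S$ at cost $D_1(\tilde{p}_S,\tilde{p}_T)$ using $|\int \phi(z)(\tilde{p}_T(z)-\tilde{p}_S(z))\,dz|\le \sup_z|\phi(z)|\, D_1(\tilde{p}_S,\tilde{p}_T)\le D_1(\tilde{p}_S,\tilde{p}_T)$ since $0\le \phi\le 1$; this gives $|e_T(h) - e_T(\tilde{f}_S,\tilde{f}_T)|\le e_S(h) + D_1(\tilde{p}_S,\tilde{p}_T)$, and the $e_S(\tilde{f}_S,\tilde{f}_T)$ version follows by the symmetric computation. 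I would put the one-line corollary argument in the main text and, if needed, defer this alternative to Appendix~\ref{app:analysis} alongside the proof of Theorem~\ref{thm:lower_bound}.
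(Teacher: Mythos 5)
Your proposal is correct and matches the paper's own proof: the appendix derives the corollary by exactly this sandwich, taking the one-sided inequalities $e_T(h) - e(\tilde{f}_S,\tilde{f}_T) \leq e_S(h) + D_1(\tilde{p}_S,\tilde{p}_T)$ from the recalled upper bound and $e_T(h) - e(\tilde{f}_S,\tilde{f}_T) \geq -e_S(h) - D_1(\tilde{p}_S,\tilde{p}_T)$ from Theorem~\ref{thm:lower_bound}, then combining them. The self-contained change-of-measure argument you sketch as a fallback is not needed, but it is a valid direct route.
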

This is obtained by combining the upper and the lower bounds (Appendix~\ref{app:analysis}). Thus a UDA method can only guarantee the target error $e_T(h)$ to be close to the labeling function mismatch $e(\tilde{f}_S,\tilde{f}_T)$. Whether $e(\tilde{f}_S,\tilde{f}_T)$ becomes larger or smaller after solving Eq.~\ref{eq:UDA} is data/model dependent. For concreteness, we analyze the sensitivity of the performance of UDA methods to different data distributions using the following illustrative examples.

\begin{figure}[tb]
  \centering
    {\includegraphics[width=1\columnwidth]{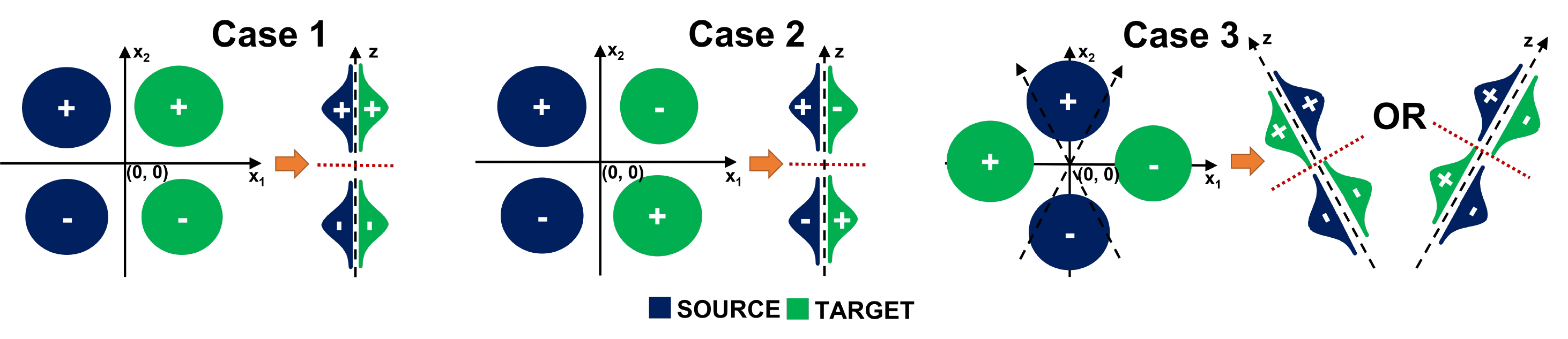}}  
  \caption{Illustrative cases for UDA. In UDA, one can observe the source data (blue blobs), the source labels (class + and -), and the target data (green blobs) but not the target labels. The optimal decision boundary (red dotted line) and linear representation $g(x)=u^Tx$ from the input space in $\mathbb{R}^2$ to the feature space in $\mathbb{R}$ (dashed line) minimizing the source and the alignment losses in Eq.~\ref{eq:UDA} can be computed accurately. Depending on the data distribution, domain adaptation can be successful (Case 1), fail (Case 2), or be undetermined (Case 3). Case 3 has two global minima with drastically different target domain performance, even though representations in both cases minimize the error on the source distribution as well as align the marginal distributions of the two domains. 
  } \vspace{-3mm}
  \label{fig:different_cases_for_uda}
\end{figure}
\subsection{Illustrative examples showing the sensitivity of UDA methods to data distributions}\label{sec:uda_cases}
Assume mixture-of-Gaussian distributions $p_S(x)$ and $p_T(x)$ for the source and the target domains in the input space, shown as blue and green blobs in Fig.~\ref{fig:different_cases_for_uda}.
We consider a linear representation map $g(x)=u^Tx$ from the input space $\mathcal{X} \subset \mathbb{R}^2$ to the feature space $\mathcal{Z} \subset \mathbb{R}$ (dashed line) that minimizes the source classification loss plus the marginal mismatch loss in Eq.~\ref{eq:UDA}.
The optimal solution $g$ to the minimization problem can be found accurately using a mix of analytical and numerical optimizations (detailed in Appendix~\ref{app:UDA_cases}).
We demonstrate the performance of UDA on three different data distributions.
In Case 1 (favorable case), the true labeling function for source and target is $f_S((x_1,x_2))=f_T((x_1,x_2))=I[x_2\geq 0]$, that is, $f$ is 1 in the upper halfspace and 0 in the bottom halfspace.
One can verify (Appendix~\ref{app:UDA_cases}) that the optimal $u$ is the vertical direction ($u=[0,1]^T$) and the best hypothesis is $h(z)=I[z\geq 0]$, in which case $e_S(h)=0$ and $D_1(\tilde{p}_S,\tilde{p}_T)=0$.
That is, perfect source classification and a perfect alignment of the marginals are achieved.
Furthermore, the true labeling function in $\mathcal{Z}$ is $\tilde{f}_S(z) = \tilde{f}_T(z) = I[z\geq0]$ (from Eq.~\ref{eq:labeling function}) and therefore $e(\tilde{f}_S,\tilde{f}_T)=0$ as well as the target loss $e_T(h)=0$.
In other words, the representation that minimizes Eq.~\ref{eq:UDA} simultaneously {\bf minimizes} $e(\tilde{f}_S,\tilde{f}_T)$, achieving the goal of reducing $e_T(h)$.
However in Case 2 (unfavorable case), the true labeling function for target is upside-down $f_T((x_1,x_2))=I[x_2\leq 0]$. Since UDA does not use the target label nor the true labeling function, the optimal $g$ is exactly the same as Case 1 ($u=[0,1]^T$), in which case we still have $e_S(h)=0$ and $D_1(\tilde{p}_S,\tilde{p}_T)=0$, but the labeling function mismatch becomes $e(\tilde{f}_S,\tilde{f}_T)=1$ as well as $e_T(h)=1$ which is the worst case.
In other words, the representation that minimizes Eq.~\ref{eq:UDA} simultaneously {\bf maximizes} $e(\tilde{f}_S,\tilde{f}_T)$, totally failing at the goal of reducing $e_T(h)$.
Case 3 exemplifies an ambiguous case. The optimal projection minimizing $e_S$ is still the vertical direction $u=[0,1]^T$ but the optimal projection minimizing the alignment loss $D_1(\tilde{p}_S,\tilde{p}_T)$ can be either of the $\pm 45^\circ$ directions $(u=[\pm 1/\sqrt{2}, 1/\sqrt{2}]^T)$ with no preference of one over the other. Therefore the optimal solution $u$ for Eq.~\ref{eq:UDA} that trades off the source error and the mismatch loss has two equal-valued global solutions $[\pm u_1,u_2]^T$ for some $u_1,u_2$. One solution (Fig.~\ref{fig:different_cases_for_uda}, Case 3, left) yields a small $e(\tilde{f}_S,\tilde{f}_T)$ and the other (Fig.~\ref{fig:different_cases_for_uda}, Case 3, right) yields a large $e(\tilde{f}_S,\tilde{f}_T)$. As can be intuitively seen, UDA is successful in the former but fails in the latter, and which equal-valued solution will be chosen is undetermined. A similar example of the failure of UDA methods due to the presence of two global optima was presented in \cite{johansson2019support}. Unlike their example, we use mixture-of-Gaussian distributions and  
analytically compute the representation that will be obtained from the minimization of Eq.~\ref{eq:UDA}. Using the obtained representations, we evaluate the quantities appearing in our lower bound and show that it is predictive of the performance of UDA methods on different data distributions. 
Details of the analysis and the results are in Appendix~\ref{app:UDA_cases}.

Motivated from this extreme sensitivity of the performance of UDA methods on simple data distributions, we set out to explore the effects of small changes in real-world data distributions (via data poisoning) on state-of-the-art UDA methods.
The results of our poisoning attacks in the next section suggest that slight changes in the data distributions could have a drastic impact on the target domain generalization performance of state-of-the-art UDA methods. Our findings highlight the importance of using adversarial settings (such as evaluating the performance of UDA methods in presence of poisoned data) to gauge the effectiveness of future UDA methods at learning in the UDA setting.

\if0


\JH{I have to think if I can derive lower bounds for $H$-distance.}

\subsection{lower bound using other distances (UNFINISHED)}

What other divergences are upper bounds to $D_1$?

Check \url{https://www.stat.uchicago.edu/~lekheng/work/probdist.pdf}, \url{https://arxiv.org/pdf/1508.00335.pdf},
\url{http://people.lids.mit.edu/yp/homepage/data/LN_fdiv.pdf}

\subsubsection*{$H$-divergence}

Well, the direction is wrong here.

$D_H(S,T) \leq D_1(S,T)$ because
$D_H(S,T) = 2 \sup_{h\in H}|Pr_S[I(h)] - Pr_T[I(h)]|
\leq 2 \sup_{B\in \mathcal{B}} |Pr_S[B] - Pr_T[B]| = D_1(S,T)$.

Try to derive a different lower bound.

\subsubsection*{$H\Delta H$-divergence}

It's not helping either.
$D_{H\Delta H}(S,T) \leq D_{H}(S,T)$

For all $h,h' \in H$,  $e_S(h,h')-e_T(h,h')\leq \frac{1}{2} D_{H\Delta H}(S,T)= \sup_{h,h'\in H} |e_s(h,h')-e_T(h,h')|$.

\subsubsection*{Jensen-Shannon divergence}

It's not helping either.
$D_{JS}(S,T) \leq D_1(S,T)$

\subsubsection*{Wasserstein distance}

$D_{Wass}(S,T)$

\[
e_T(h,h') - e_S(h,h') =E_T|h-h'| - E_S|h-h'|\leq \sup_{\|f\|_L\leq 2K} E_T[f] - E_S[f]\leq 2K D_W(S,T). 
\]
Similar, but not directly comparable with $D_{H\Delta H}$. (Different constraint sets)
\fi

\if0 
\section{Poisoning attack on domain adaptation}

Assumption: the attacker has access to a small portion of target-domain data with labels.

Intuition: make $f_s$ and $f_t$ disagree, and/or prevent the alignment of the marginals. For example, if $f_s$ and $f_t$ disagree, then it's okay to align the marginals. It'll usually not happen.

\subsection{Poisoning using mislabeled target data}

So how do we make domain adaptation fail?
Challenges. First, to make the attack practically feasible, we cannot influence the training procedure directly, but only through poisoning of data. Second, the attacker cannot know target-domain examples used for training but only a small number of examples from the target domain.

How do we make $e_T(f_S,f_T)$ large as a result of domain-adversarial training?
That is, we would like to poison the source data $S'$ so that after domain-adversarial training, we have $e_{S'}(h)$ and $D_1(S',T)$ are small but $e_T(f_{S'},f_T)$ remains large. 

We propose a particular construction below.
Assume $p_{S'} = (1-\alpha)p_S + \alpha p_T$, and suppose $supp(S)\cap supp(T)=\empty$ and consequently
$f_{S'}(z) = I[z\in supp(S)] f_S(z) + I[z \not\in supp(S)] (1-f_T(z))$.
That is, we replace $\alpha$-portion of the original data with the target distribution $p_T(z)$ with the wrong label $1-f_T(z)$, 
and that the original source and the original target distribution are disjoint.

First, we claim the disjointness can be achieved by the `mode collapse' in the next section.
Second, the adversarial training procedure will minimize $e_{S'}(h)$ which is
\begin{eqnarray*}
e_{S'}(h) &=&  E_{S'}[|f_{S'}-h|] \\ 
&=& \alpha E_{P}[|f_{S'}-h|] + (1-\alpha) E_{S}[|f_{S'}-h|] \\ 
&=& \alpha E_{P}[|f_{P}-h|] + (1-\alpha) E_{S}[|f_{S}-h|] \\
&=& \alpha E_{T}[|1-f_T-h|] + (1-\alpha) E_{S}[|f_S-h|].
\end{eqnarray*}
Minimizing this will yield $h$ which is 
\[
h(z) \simeq I[z\in supp(S)] f_S + I[z \notin supp(S)] (1-f_S),
\]
which, therefore incurs the following target-domain error
\[
e_T(h) \simeq E_T[|f_T-h|] = E_T[|2f_T-1|].
\]
If the true labeling function is confident on the $supp(T)$, that is, either $f_T(z)=0$ or $f_T(z)=1$ for $z\in supp(T)$,
then we have $e_T(h) = 1$, that is, we achieve 100\% error on the target domain.
\fi
\section{Breaking unsupervised domain adaptation methods with data poisoning}\label{sec:experiments}
In this section, we present our novel data poisoning attacks to evaluate the ease with which current UDA methods can be fooled into producing a representation that leads to a large error on the target domain\footnote{Our code can be found at \url{https://github.com/akshaymehra24/LimitsOfUDA}.}.
We propose three methods to generate poisoned data which will be added to the clean source domain data.
The first poisoning attack uses mislabeled data as poisons.
Under this attack, we evaluate two approaches (a) adding mislabeled source domain data (wrong-label correct-domain poisoning) and (b) adding mislabeled target domain data (wrong-label incorrect-domain poisoning) as poisons. 
The second attack adds images from the source domain watermarked with images from the target domain with incorrect labels (watermarking attack) as poisons.
The last poisoning attack uses poisoned data with clean labels. We evaluate two approaches for this attack (a) using source domain data (clean-label correct-domain poisoning) and (b) using target domain data (clean-label wrong-domain poisoning) to initialize the poison data.  
The intuitive pictures of how these poisoning attacks hurt UDA methods are shown in Figs.~\ref{fig:explanation_of_attacks_a},~\ref{fig:explanation_of_attacks_b}, and ~\ref{fig:explanation_of_attacks_c} (details in Appendix~\ref{app:fig_explanation}).
We evaluate the effect of poisoning on popular UDA methods namely, DANN\cite{ganin2016domain}, CDAN\cite{long2017conditional}, MCD\cite{saito2018maximum}, SSL\cite{xu2019self} (with rotation-angle prediction task), IW-DAN\cite{combes2020domain}, and IW-CDAN\cite{combes2020domain}.
We compare the difference in the target accuracy attainable by these methods when using clean versus poisoned data. 
Two benchmark datasets are used in our experiments, namely Digits and Office-31. 
We evaluate four tasks using SVHN, MNIST, MNIST\_M, and USPS datasets under Digits and six tasks under the Office-31 using Amazon (A), DSLR (D), and Webcam (W) datasets.
Additional experiments evaluating the performance of our poisoning attacks against other popular UDA methods are present in Appendix~\ref{app:additional_experiments}.
For all experiments, we train UDA algorithms 
using neural networks whose architectures are similar to those used by previous works
(see Appendix~\ref{app:experiments_details} for details). 
\if0
\begin{figure}[tb]
  \centering
  \subfigure[]{\includegraphics[width=0.25\columnwidth]{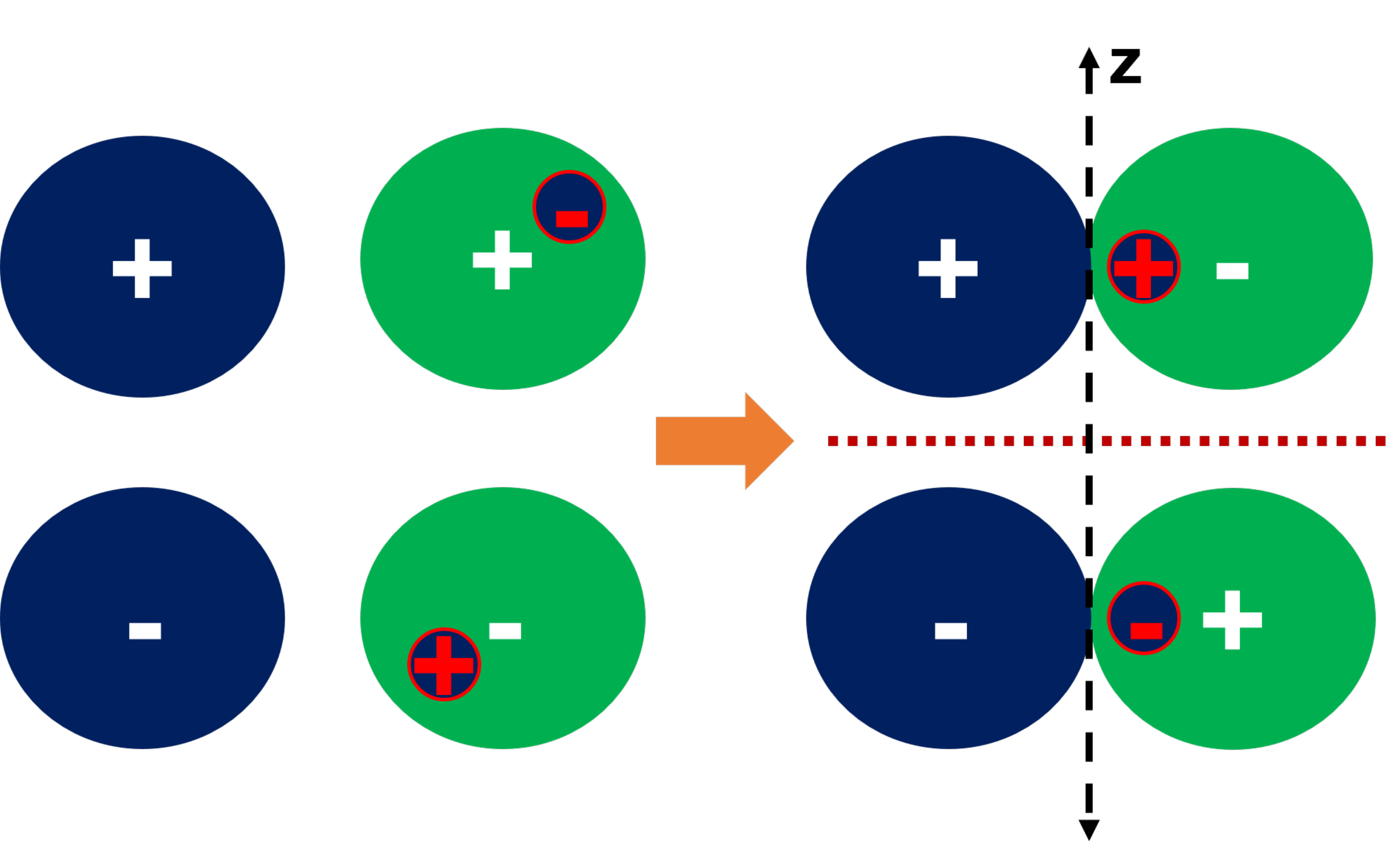}}
  \hfill
  \subfigure[]{\includegraphics[width=0.25\columnwidth]{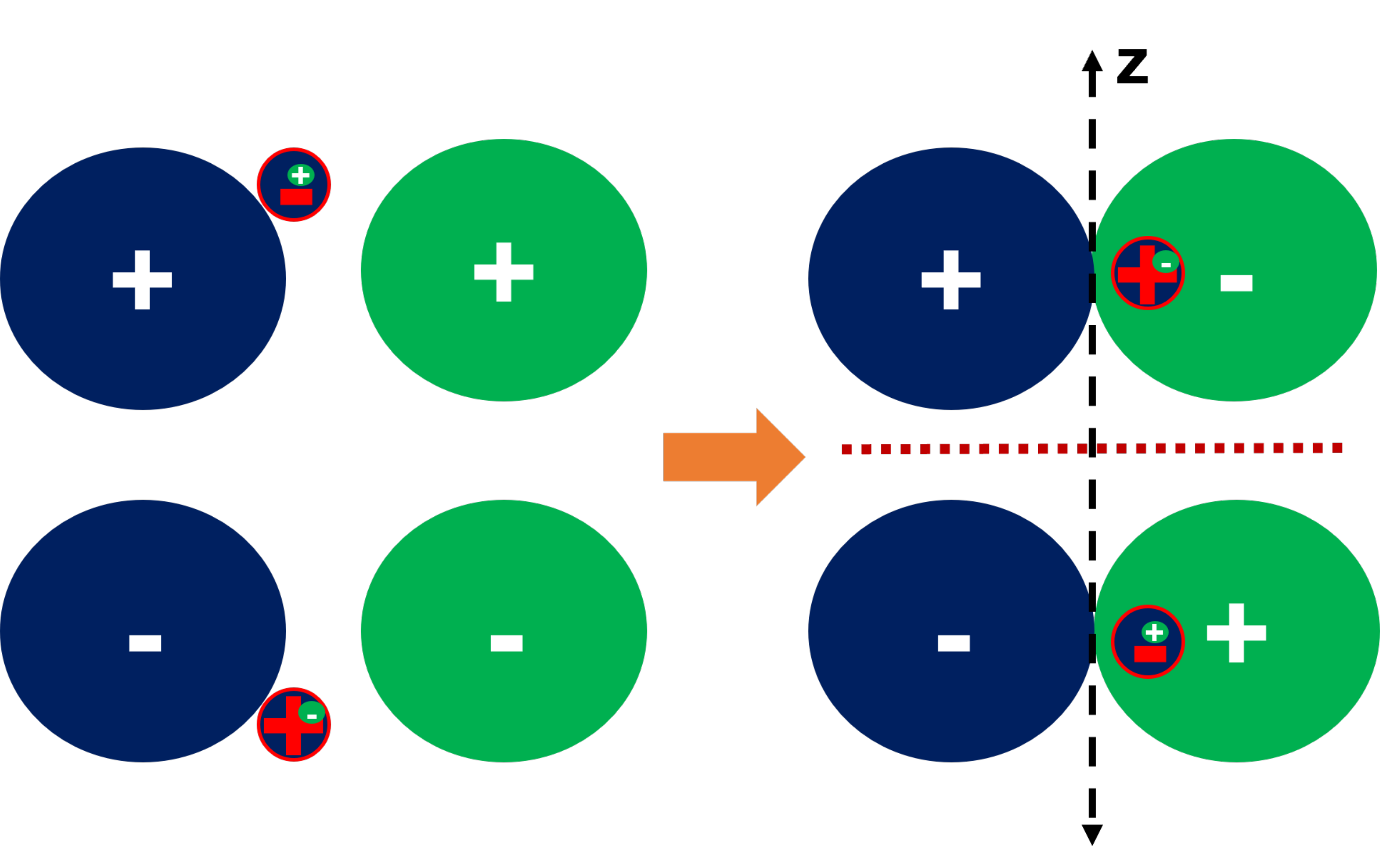}}
  \hfill
  \subfigure[Clean-label Attack]{\includegraphics[width=0.25\columnwidth]{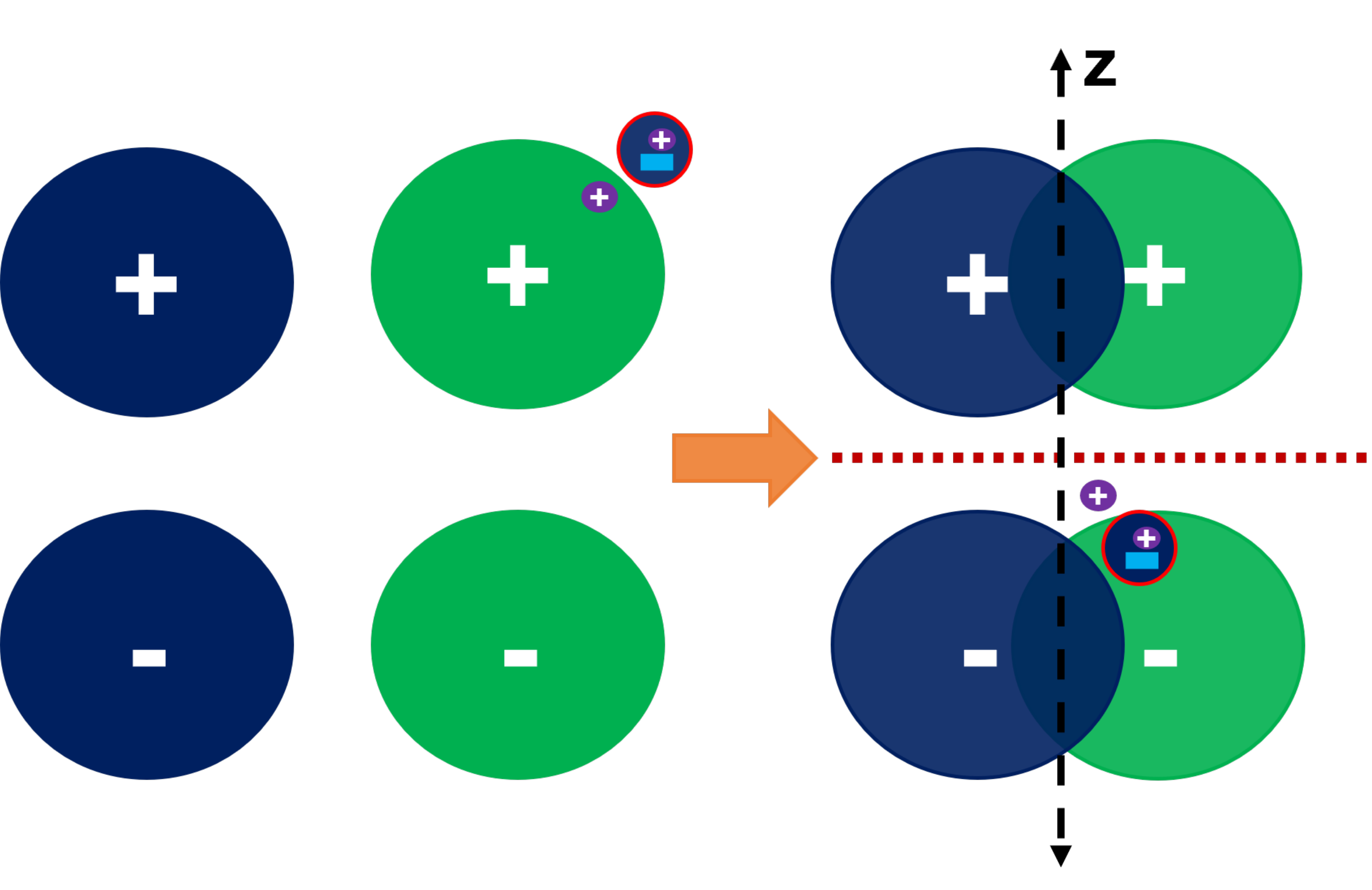}}\\
  \includegraphics[width=0.2\textwidth]{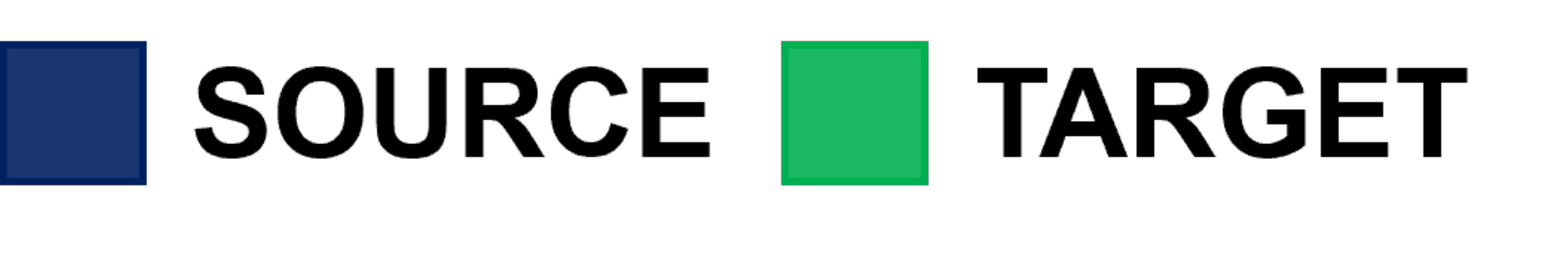}
  \caption{Poisoning discriminator-based approaches.}
  \label{fig:explanation_of_attacks}
\end{figure}
\fi 
\subsection{Poisoning using mislabeled source and target domain data}\label{sec:attack_1}
In this experiment, mislabeled data is added to the clean source domain as poison.  
The labels provided by the attacker to the poison data are chosen to fool UDA methods into learning a representation that incurs large errors on the target domain.
We use two simple and effective labeling functions for this. 
For the Digits dataset, we use a labeling function that systematically labels the poison data to the class next to their true class (e.g. poison points with true class one are labeled as two, points with true class two are labeled as three, and so on). 
For the Office-31 dataset, we use a labeling function that assigns the poison data the label of the closest (in the representation space learned using the clean source domain data) incorrect source domain class. 
Here the attacker is limited to adding only 10\% poisoned data with respect to the size of the available target domain data (for experiments with different poison percentages see Appendix~\ref{app:poison_percentage}).
In wrong-label correct-domain poisoning, mislabeled source domain data are used as poisons. 
The results in rows marked with Poison$_\mathrm{source}$ in Tables~\ref{Table:digits_experiment_1} and~\ref{Table:office31_experiment_1}, show that UDA methods suffer only a minor decrease in target domain accuracy with this approach. This happens because the presence of a large amount of correctly labeled source domain data prevents the small amount of poisoned data from affecting the performance of UDA methods.
A similar effect is observed when a small amount of mislabeled data is used for poisoning in the traditional single domain setting \cite{mehra2019penalty, munoz2017towards}. 
However, if the relative size of poisoned data is larger or comparable to the size of clean source domain data, poisoning can be effective. This is observed in Table~\ref{Table:office31_experiment_1} when Amazon is the target data. The Amazon dataset is roughly 5 times bigger than both DSLR and Webcam datasets. Due to this, the permissible amount of poisoned data (10\% of the target domain) makes the size of clean and poisoned data comparable, leading to successful poisoning. 
Thus, this attack causes UDA methods to fail in presence of a large amount of poisoned data.  
\begin{figure}[tb]
  \centering
  \subfigure[DANN trained on clean data]{\includegraphics[width=0.23\columnwidth]{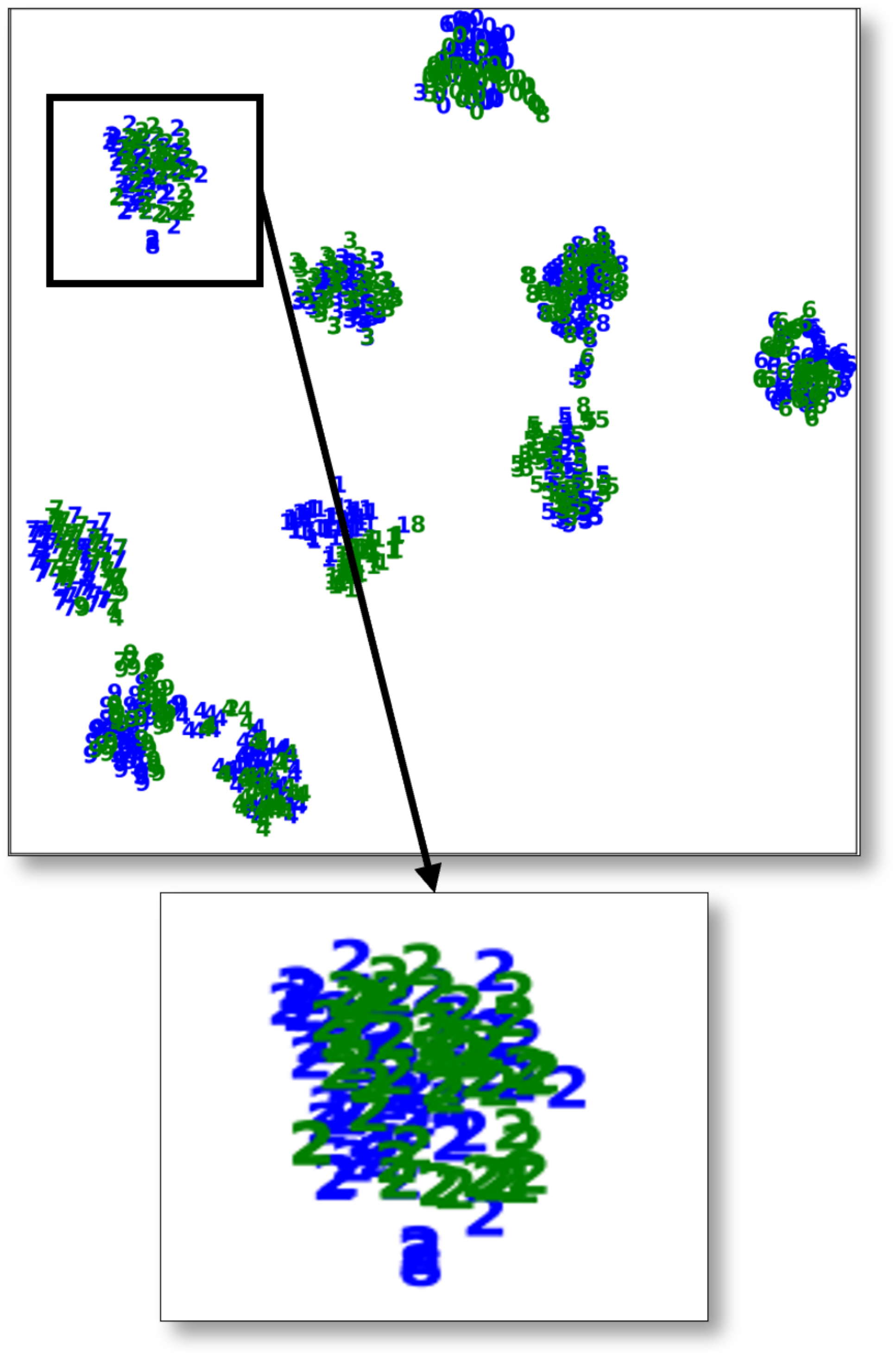}}
  \hfill
  \subfigure[DANN trained on poisoned data]{\includegraphics[width=0.23\columnwidth]{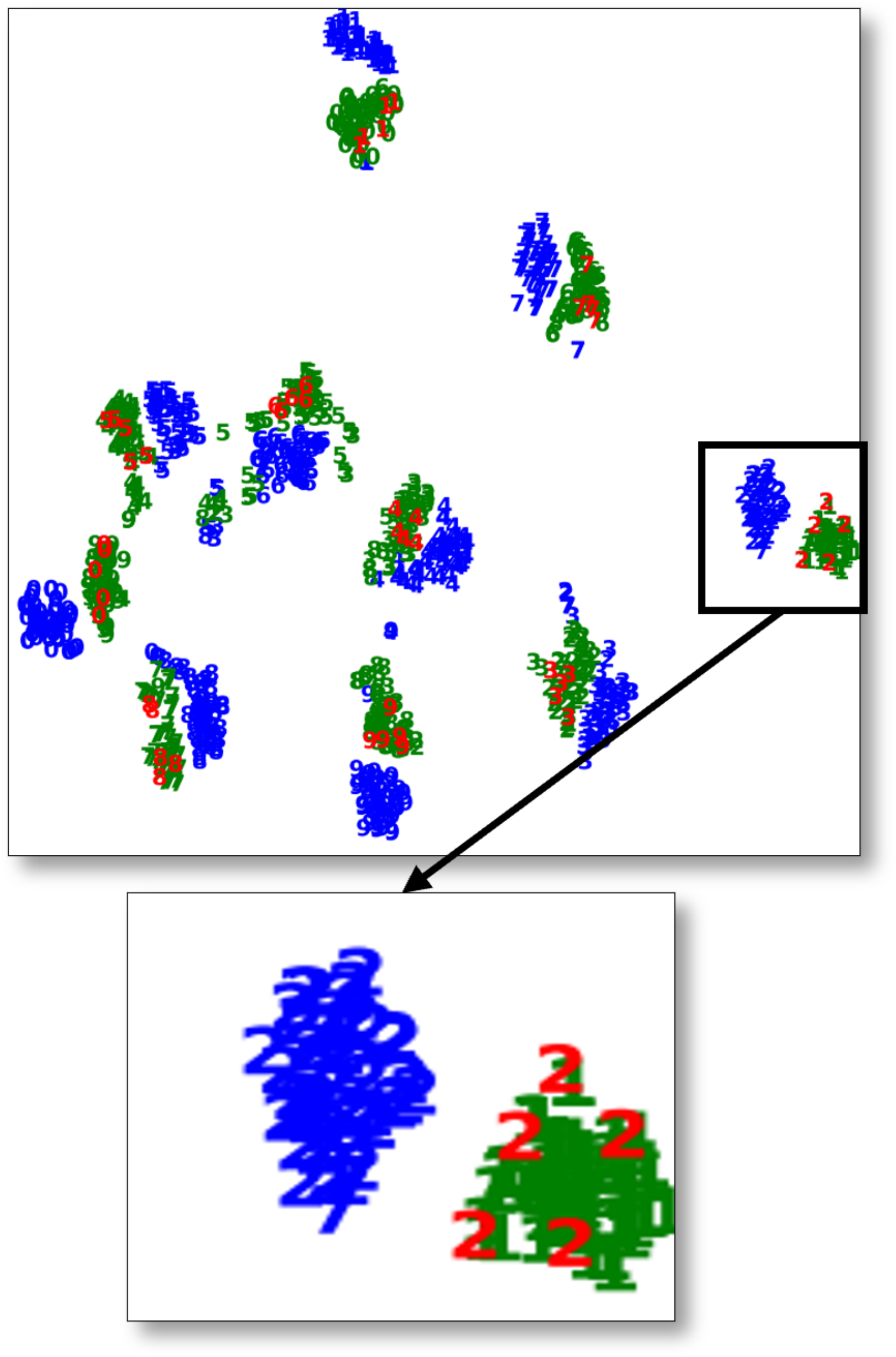}}
  \hfill
  \subfigure[CDAN trained on clean data]{\includegraphics[width=0.23\columnwidth]{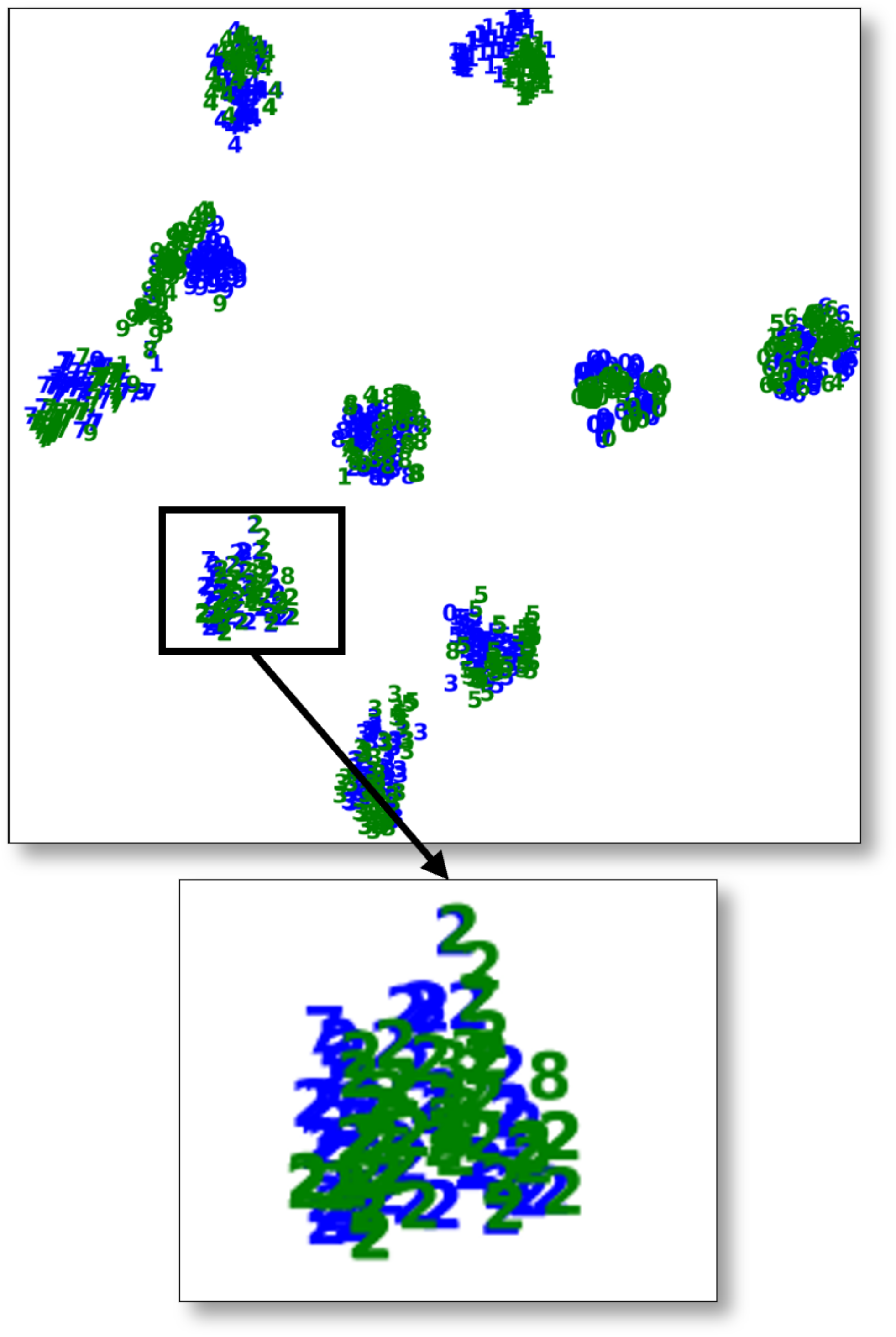}}
  \hfill
  \subfigure[CDAN trained on poisoned data]{\includegraphics[width=0.23\columnwidth]{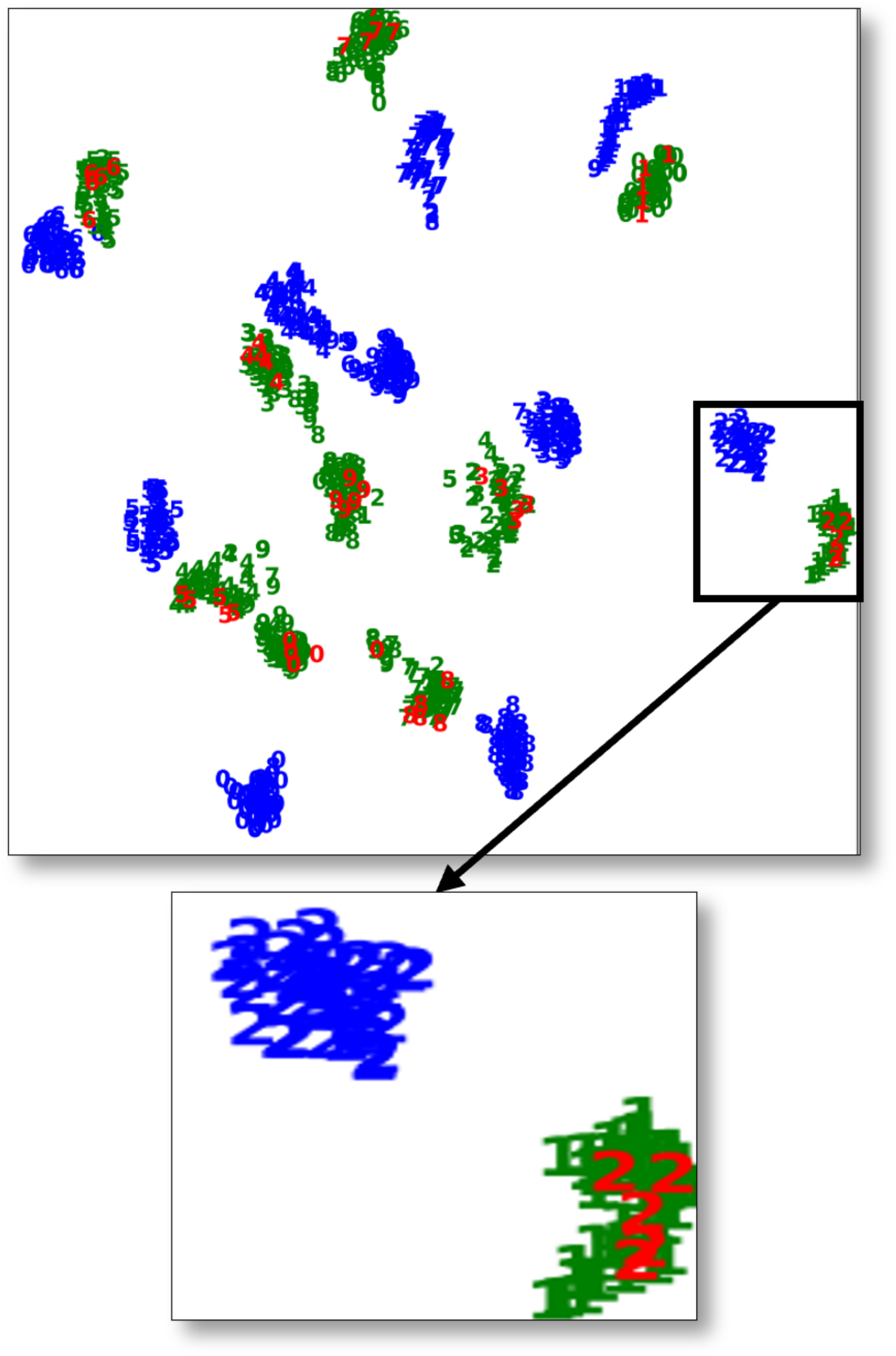}}
  \includegraphics[width=0.3\textwidth]{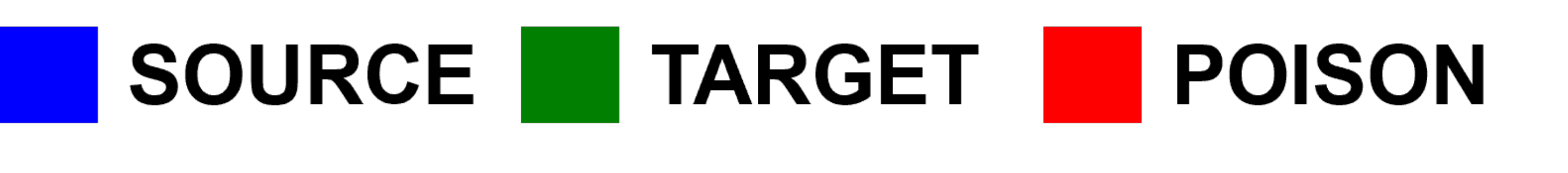}
  \caption{ 
  (Best viewed in color.) t-SNE embedding of the data in the representation space (for MNIST $\rightarrow{}$USPS task) learned using DANN and CDAN on clean and poisoned source domain data. Without poisoning, correct classes (data from source class 2 is zoomed in) from two domains are aligned ((a) and (c)). The presence of poisoned data fools the methods into aligning incorrect classes from the two domains ((b) and (d)). The mismatch between the source and target classes is dependent on the labels of the poison data (due to which the target class 1 is aligned to the source class 2).}
  \label{fig:exp_1_a}
\end{figure}
\begin{figure}[tb]
  \centering
  \subfigure[MCD trained on clean data]{\includegraphics[width=0.23\columnwidth]{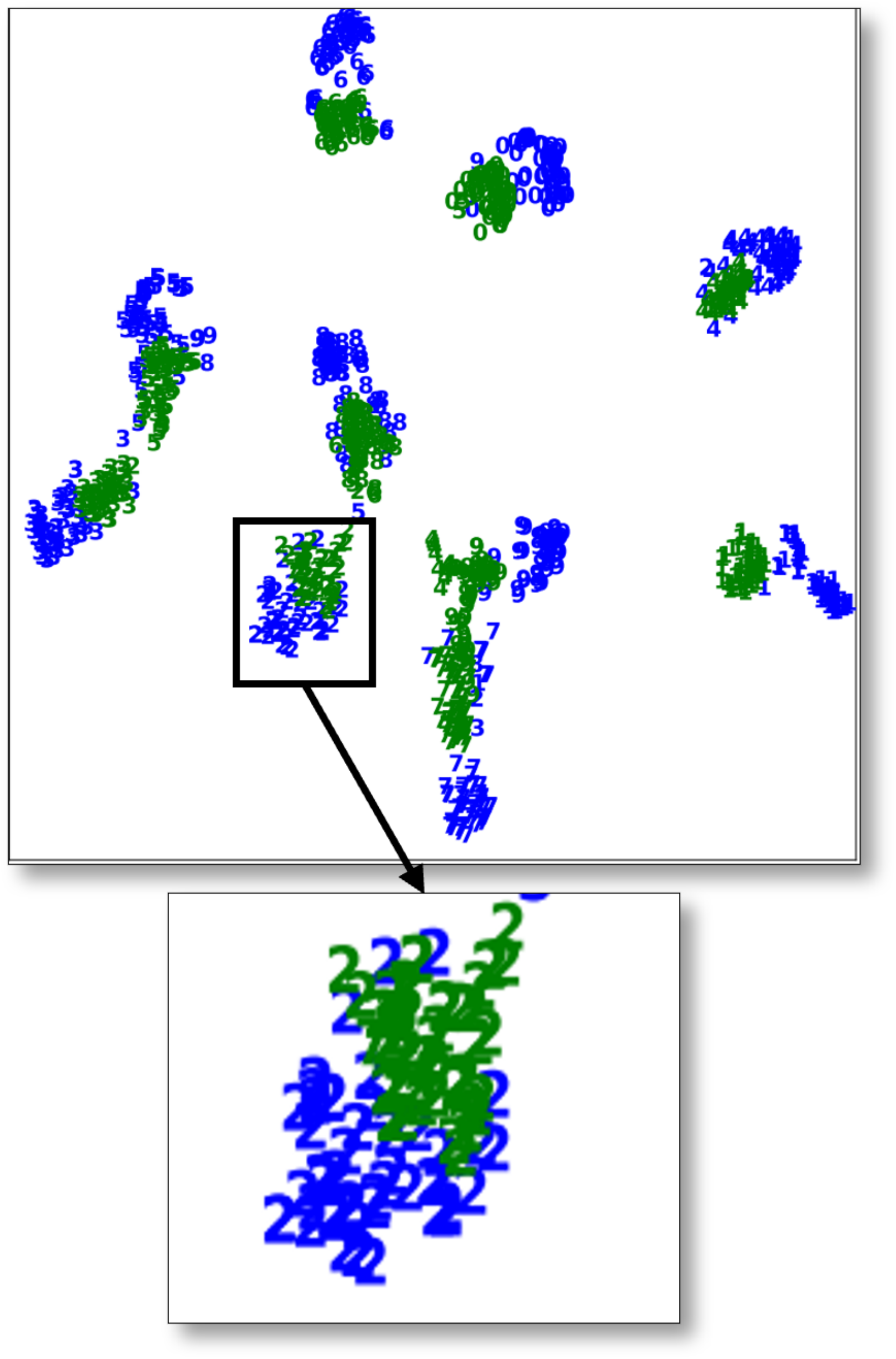}}
  \hfill
  \subfigure[MCD trained on poisoned data]{\includegraphics[width=0.23\columnwidth]{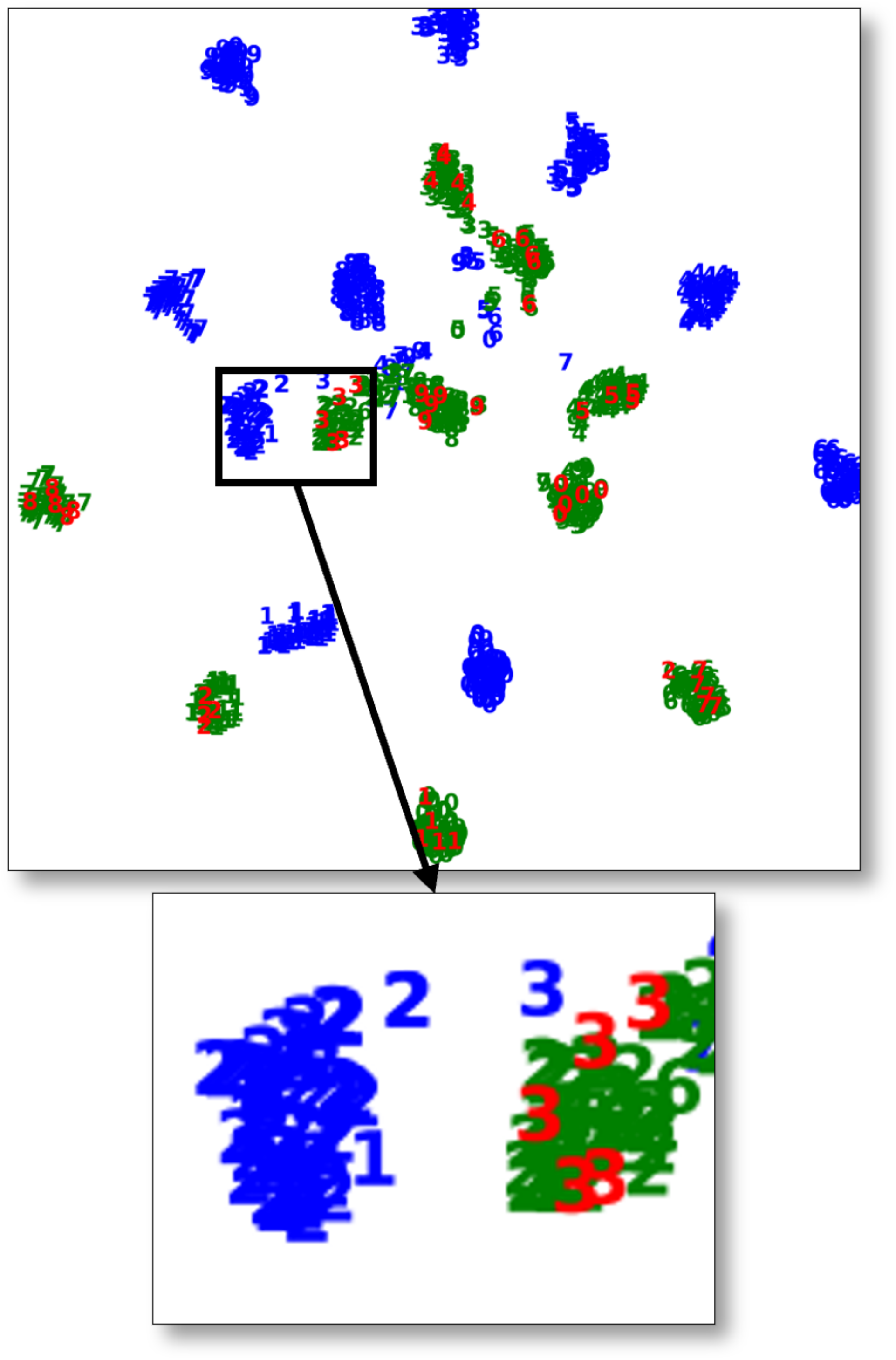}}
  \hfill
  \subfigure[SSL trained on clean data]{\includegraphics[width=0.23\columnwidth]{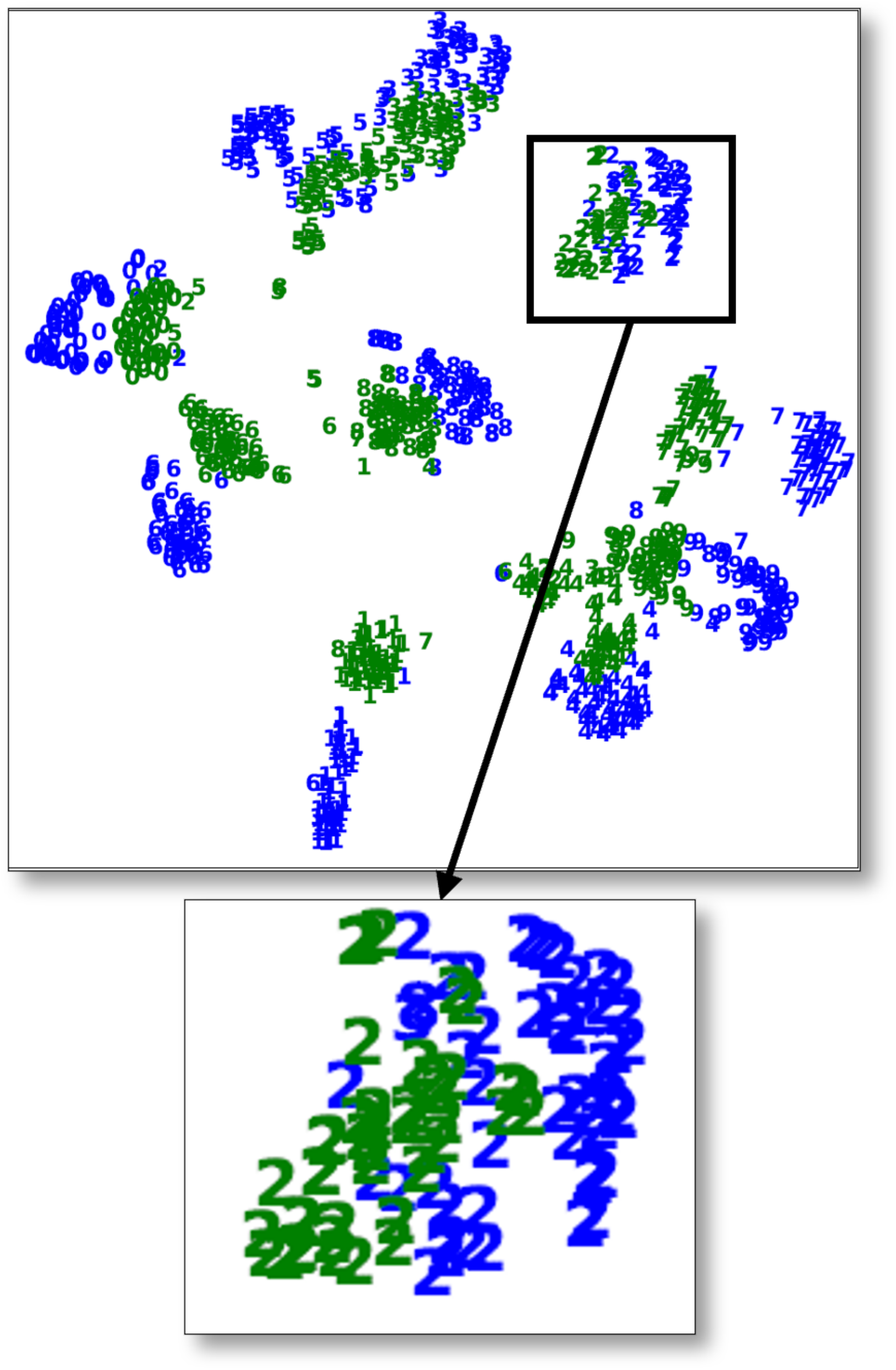}}
  \hfill
  \subfigure[SSL trained on poisoned data]{\includegraphics[width=0.23\columnwidth]{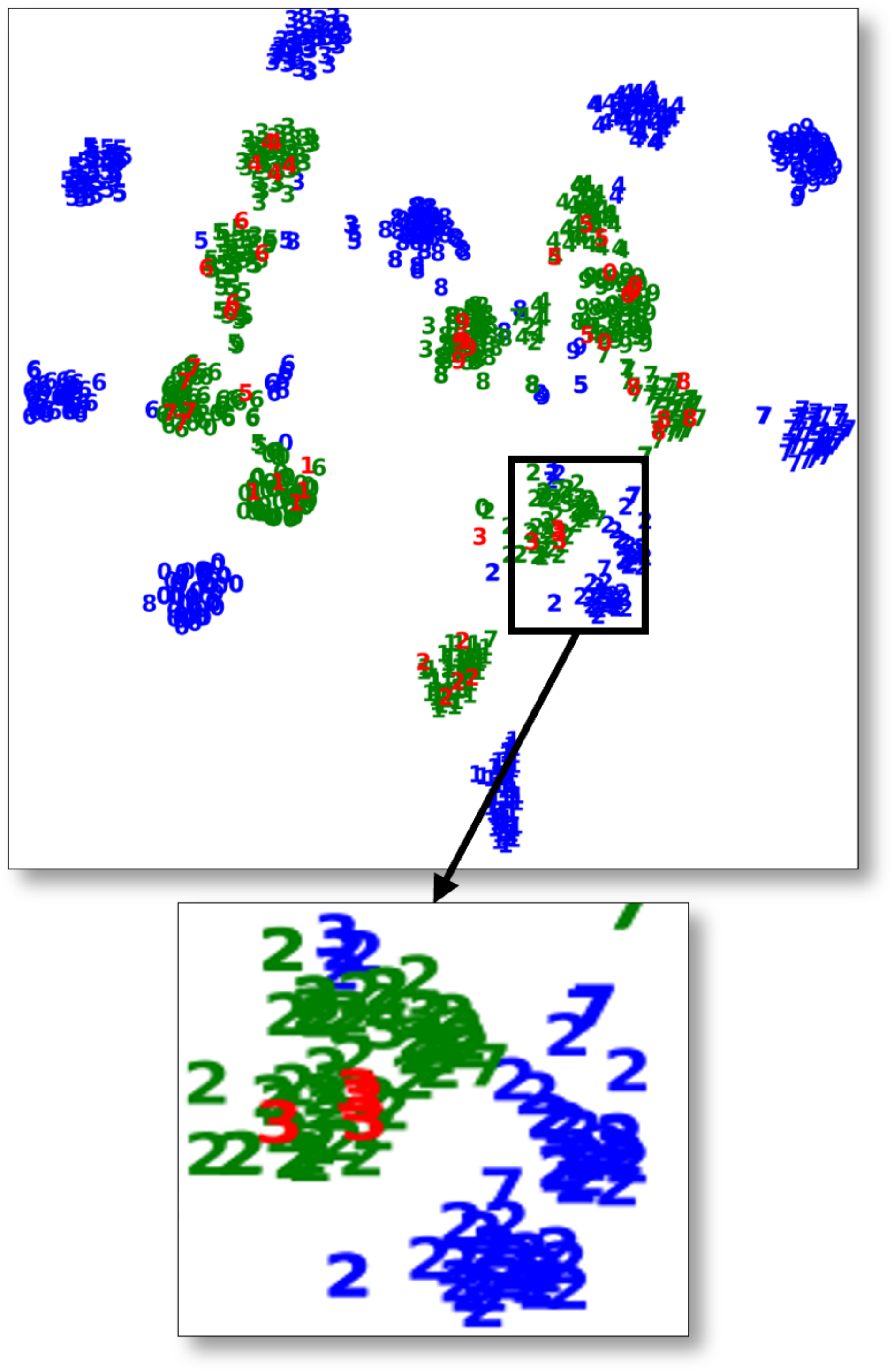}}
  \includegraphics[width=0.3\textwidth]{images/Legend_2.pdf}
  \caption{
  (Best viewed in color.) t-SNE embedding of the data in the representation space (for MNIST $\rightarrow{}$USPS task) learned using MCD and SSL on clean and poisoned source domain data. Without poisoning, correct classes (data from source class 2 is zoomed in) from two domains are aligned ((a) and (c)). The presence of poisoned data prevents the methods from aligning correct classes from the two domains ((b) and (d)).}
  \label{fig:exp_1_b}
\end{figure}
\begin{table}[tb]
  \caption{Decrease in the target domain accuracy for UDA methods trained on poisoned source domain data (with poisons sampled from source/target domains) compared to accuracy attained with clean data on the Digits tasks (mean$\pm$s.d. of 5 trials). 
  }
  \label{Table:digits_experiment_1}
  \centering
  \small
  \resizebox{0.9\columnwidth}{!}{
    \begin{tabular}{c|c|cccc}
    \toprule
    Method & Data & 
    SVHN $\rightarrow{}$ MNIST & MNIST $\rightarrow{}$  MNIST\_M & MNIST $\rightarrow{}$ USPS 
    & USPS $\rightarrow{}$ MNIST \\
    \midrule
    Source only & Clean & 72.42$\pm$1.44 & 39.05$\pm$2.30 & 87.13$\pm$1.75 & 78.6$\pm$1.45\\
    \midrule
    \multirow{3}{*}{DANN} & Clean & 78.05$\pm$1.15 & 76.22$\pm$2.38 & 92.17$\pm$0.73 & 92.73$\pm$0.71 \\
    & Poison$_\mathrm{source}$ & 70.26$\pm$2.84 & 69.98$\pm$3.49 & 93.44$\pm$0.84 & 92.08$\pm$0.68\\
    & Poison$_\mathrm{target}$ & {\bf 1.46$\pm$1.12} & {\bf0.48$\pm$0.04} & {\bf0.97$\pm$0.53} & {\bf5.83$\pm$0.82} \\
    \midrule
    \multirow{3}{*}{CDAN} & Clean & 79.19$\pm$0.70 & 73.88$\pm$1.10 & 93.92$\pm$0.97 & 95.94$\pm$0.71\\
    & Poison$_\mathrm{source}$ & 73.67$\pm$4.19 & 73.36$\pm$1.31 & 92.06$\pm$0.59 & 92.85$\pm$0.31\\
    & Poison$_\mathrm{target}$ & {\bf12.27$\pm$5.02} & {\bf0.59$\pm$0.12} & {\bf1.92$\pm$0.42} & {\bf2.96$\pm$0.71} \\
    \midrule
    \multirow{3}{*}{MCD} & Clean & 96.18$\pm$1.53 & 93.95$\pm$0.33 & 89.96$\pm$2.04 & 88.34$\pm$2.50\\
    & Poison$_\mathrm{source}$ & 85.86$\pm$5.66 & 93.33$\pm$0.71 & 87.99$\pm$1.05 & 83.19$\pm$2.98 \\
    & Poison$_\mathrm{target}$ & {\bf0.97$\pm$0.94} & {\bf0.37$\pm$0.06} & {\bf0.66$\pm$0.16} & {\bf2.07$\pm$0.69}\\
    \midrule
    \multirow{3}{*}{SSL} & Clean & 66.85$\pm$2.30 & 92.76$\pm$0.91 & 88.69$\pm$1.28 & 82.23$\pm$1.59\\
    & Poison$_\mathrm{source}$ & 61.97$\pm$1.62 & 91.35$\pm$1.13 & 85.74$\pm$2.92 & 82.56$\pm$0.84\\
    & Poison$_\mathrm{target}$ & {\bf0.31$\pm$0.03} & {\bf0.36$\pm$0.02} & {\bf7.76$\pm$1.52} & {\bf9.88$\pm$1.07}\\
    \bottomrule
    \end{tabular}
    }
\end{table}

\begin{table}[tb]
  \caption{Decrease in the target domain accuracy for UDA methods trained on poisoned source domain data (with poisons sampled from source/target domains) compared to accuracy attained with clean data on the Office tasks (mean$\pm$s.d. of 3 trials).}
  \label{Table:office31_experiment_1}
  \centering
  \small
  \resizebox{0.9\columnwidth}{!}{
    \begin{tabular}{c|c|cccccc}
    \toprule
    Method & Dataset & 
    A $\rightarrow{}$ D & A $\rightarrow{}$ W & D $\rightarrow{}$ A 
    & D $\rightarrow{}$ W & W $\rightarrow{}$ A & W $\rightarrow{}$ D\\
    \midrule
    Source Only & Clean & 79.61 & 73.18 & 59.33 & 96.31 &	58.75 &	99.68\\
    \midrule
    \multirow{3}{*}{DANN} & Clean & 84.06 &	85.41 &	64.67 &	96.08 &	66.77 &	99.44 \\
    & Poison$_\mathrm{source}$ & 79.11$\pm$0.35 & 83.98$\pm$1.19 & 44.31$\pm$2.94 & 95.22$\pm$0.22 & 43.35$\pm$1.65 & 96.58$\pm$0.87\\
    & Poison$_\mathrm{target}$ & {\bf59.83$\pm$0.20} & {\bf63.18$\pm$1.96} & {\bf17.58$\pm$0.39} & {\bf76.43$\pm$0.62} & {\bf19.82$\pm$0.33} & {\bf84.20$\pm$0.71}\\
    \midrule
    \multirow{3}{*}{CDAN} & Clean & 89.56 &	93.01 &	71.25 &	99.24 &	70.32 &	100\\
    & Poison$_\mathrm{source}$ & 90.16$\pm$0.61 & 90.94$\pm$0.13 & 53.68$\pm$0.37 & 98.45$\pm$0.07 & 57.27$\pm$0.57 & 99.66$\pm$0.23\\
    & Poison$_\mathrm{target}$ & {\bf71.88$\pm$0.20} & {\bf71.94$\pm$0.76} & {\bf11.19$\pm$1.47} & {\bf86.37$\pm$0.36} & {\bf18.54$\pm$0.45} & {\bf89.08$\pm$1.23} \\
    \midrule
    \multirow{3}{*}{IW-DAN} & Clean & 84.3 &	86.42 &	68.38 &	97.13 &	67.16 &	100\\
    & Poison$_\mathrm{source}$ & 81.25$\pm$0.91 & 83.27$\pm$0.45 & 50.76$\pm$1.58 & 96.68$\pm$0.29 & 48.31$\pm$2.02 & 99.73$\pm$0.12\\
    & Poison$_\mathrm{target}$ & {\bf61.64$\pm$0.53} & {\bf63.43$\pm$1.14} & {\bf15.69$\pm$1.76} & {\bf80.29$\pm$0.07} & {\bf26.54$\pm$0.48} & {\bf88.62$\pm$0.23}\\
    \midrule
    \multirow{3}{*}{IW-CDAN} & Clean & 88.91	& 93.23 & 71.9 & 99.3 &	70.43 &	100\\
    & Poison$_\mathrm{source}$ & 89.83$\pm$0.31 & 90.77$\pm$1.27 & 57.51$\pm$0.06 & 98.41$\pm$0.07 & 61.16$\pm$1.21 & 99.66$\pm$0.12\\
    & Poison$_\mathrm{target}$ & {\bf72.62$\pm$0.42} & {\bf70.15$\pm$2.21} & {\bf14.36$\pm$0.66} & {\bf88.26$\pm$0.15} & {\bf22.36$\pm$0.96} & {\bf87.75$\pm$0.53} \\
    \bottomrule
    \end{tabular}
    }
\end{table}
In the wrong-label wrong-domain approach, mislabeled target domain data is used for poisoning.
The effect of poisoning on discriminator-based methods \cite{ganin2016domain,long2017conditional} is shown in Fig.~\ref{fig:explanation_of_attacks_a}. 
The domain discriminator used in these methods is maximally confused when marginal distributions of the source and target domains are aligned. 
However, alignment of the marginal distributions does not ensure alignment of the conditional distributions \cite{zhao2019learning,long2017conditional}. 
The objective of achieving low source domain error pushes the source domain classifier to correctly classify the poison data. 
Thus placing the poison and source data with the same labels close in the representation space.
\begin{wrapfigure}[15]{r}{0.28\textwidth}
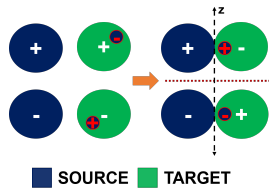

\vspace{-0.3cm}
\centering
\includegraphics[width=0.25\columnwidth]{images/Exp1.pdf}
\includegraphics[width=0.2\columnwidth]{images/Legend_0.pdf}
\vspace{-0.3cm}
\caption{ 
Wrong-label incorrect-domain poisoning causes discriminator-based UDA approaches to align wrong classes (+ to -) from the two domains, leading to a significant decline in the target domain accuracy.}
\label{fig:explanation_of_attacks_a}
\end{wrapfigure}
Since the poison data is mislabeled target domain data, poisoning makes UDA methods align wrong source and target domain classes. 
This leads to a significant decline in the target domain accuracy. 

This is also evident from the t-SNE embedding for DANN and CDAN methods in Fig.~\ref{fig:exp_1_a}. 
In absence of poisoning, correct source and target classes are aligned (Fig.~\ref{fig:exp_1_a} (a) and (c)), whereas in presence of poisoning wrong classes from the two domains are closer (Fig.~\ref{fig:exp_1_a} (b) and (d)). 
For MCD \cite{saito2018maximum}, which uses use classifier discrepancy to detect and align source and target domains, our poisoned data prevents the method from detecting target examples. This happens because the term that minimizes the error on the poisoned source domain reduces the discrepancy of the classifiers on poison data, which are from the target domain.
Thus, both the generator (common representation) and discriminator (in the form of two classifiers) become optimal and there is no signal for the generator to align the two domains. 
The t-SNE embedding in Fig.~\ref{fig:exp_1_b} shows this effect. In presence of poisoned data Fig.~\ref{fig:exp_1_b} (b), we see twenty distinct clusters rather than just ten (we have ten classes in the Digits) as seen in the absence of poisoning in Fig.~\ref{fig:exp_1_b} (a).
In SSL \cite{xu2019self}, 
the generator must work well on the main task, i.e., should correctly classify all data in the poisoned source domain data. 
The auxiliary task ensures that representations of the source and target domains become similar as seen on clean data (Fig.~\ref{fig:exp_1_b} (c)). 
But in presence of poisoned data, similar representations of correct source and target domain classes leads to a drop in the accuracy of the main task on the poisoned data. 
This creates a conflict between the main and auxiliary tasks due to which correct source and target domain classes cannot be aligned (Fig.~\ref{fig:exp_1_b} (d)). The objective of making the main task accurate on poisoned data leads to target domain data being assigned the labels of the poisoned points. Thereby leading to a significant drop in the target-domain accuracy. 
The results of wrong-label wrong-domain poisoning 
present in rows marked with Poison$_\mathrm{target}$ in Tables~\ref{Table:digits_experiment_1} and~\ref{Table:office31_experiment_1} 
show a significant reduction in the target domain accuracy compared to the accuracy obtained on clean data.
On Digits, poisoning makes the target domain accuracy close to 0\% on most tasks. 
On Office-31, poisoning causes at least a 20\% reduction in the target domain accuracy in most cases. 
The reason tasks in Office-31 are less hurt by poisoning is because of the use of a pre-trained representation.   
Similar to previous works for Office-31, we use all labeled source and unlabeled target domain data for training and fine-tune from a representation pre-trained on the ImageNet dataset.
The slowly changing representation pre-trained on a massive dataset weakens the effect of poisoning but cannot eliminate it. 
For the two tasks D $\rightarrow{}$ W and W $\rightarrow{}$ D, the fine-tuned representation, trained just on the clean source dataset (Source Only in Table~\ref{Table:office31_experiment_1}) achieves high accuracy indicating the domains are already well aligned in terms of conditional distributions as well. 
As a result, UDA methods can
easily align correct classes and suffer a drop close to 10\% which is the amount of poisoned data added. However, this is not an interesting case for the evaluation of UDA methods as domains can be aligned just by training on the source domain data without the need for target domain data. 

\begin{table}
  \caption{Decrease in target accuracy when training different domain adaptation methods on poisoned watermarked data in comparison to the target accuracy obtained with clean data on the Digits task (mean$\pm$s.d. of 5 trials).}
  \label{Table:digits_experiment_2}
  \centering
  \small
  \resizebox{0.9\columnwidth}{!}{
    \begin{tabular}{c|c|cccc}
    \toprule
    Method & Dataset & SVHN $\rightarrow{}$ MNIST & MNIST $\rightarrow{}$  MNIST\_M & MNIST $\rightarrow{}$ USPS & USPS $\rightarrow{}$ MNIST \\
    \midrule
    \multirow{4.5}{*}{DANN} & Clean & 78.05$\pm$1.15 & 76.22$\pm$2.38 & 92.17$\pm$0.73 & 92.73$\pm$0.71 \\
    \cmidrule{2-6}
    & \multirow{3}{*}{Poisoned$_\alpha$} & 68.76$\pm$3.91$_{0.05}$ & 27.36$\pm$15.77$_{0.05}$ & 91.84$\pm$0.55$_{0.10}$ & 88.93$\pm$4.36$_{0.10}$\\
    & & 57.96$\pm$5.84$_{0.10}$ & 7.19$\pm$2.59$_{0.10}$ & 85.51$\pm$3.01$_{0.20}$ & 78.29$\pm$8.52$_{0.20}$ \\
    & & 33.33$\pm$4.38$_{0.15}$ & 4.73$\pm$0.38$_{0.15}$ & 39.29$\pm$1.34$_{0.30}$ & 41.52$\pm$7.43$_{0.30}$ \\
    
    \midrule
    \multirow{4.5}{*}{CDAN} & Clean & 79.19$\pm$0.70 & 73.88$\pm$1.10 & 93.92$\pm$0.97 & 95.94$\pm$0.71\\
    \cmidrule{2-6}
    & \multirow{3}{*}{Poisoned$_\alpha$} & 65.77$\pm$4.82$_{0.05}$ & 55.47$\pm$3.87$_{0.05}$ & 92.05$\pm$0.96$_{0.10}$ & 86.53$\pm$1.55$_{0.10}$\\
    & & 57.57$\pm$3.11$_{0.10}$ & 7.37$\pm$1.26$_{0.10}$ & 86.54$\pm$2.43$_{0.20}$ & 77.39$\pm$4.84$_{0.20}$ \\
    & & 44.83$\pm$4.09$_{0.15}$ & 6.68$\pm$1.64$_{0.15}$ & 88.67$\pm$0.44$_{0.30}$ & 79.54$\pm$7.02$_{0.30}$ \\

    \midrule
    \multirow{4.5}{*}{MCD} & Clean & 96.18$\pm$1.53 & 93.95$\pm$0.33 & 89.96$\pm$2.04 & 88.34$\pm$2.50\\
    \cmidrule{2-6}
    & \multirow{3}{*}{Poisoned$_\alpha$} & 74.96$\pm$3.20$_{0.05}$ & 92.18$\pm$0.78$_{0.05}$ & 6.75$\pm$4.81$_{0.10}$ & 30.35$\pm$2.30$_{0.10}$\\
    & & 35.85$\pm$3.23$_{0.10}$ & 85.38$\pm$3.57$_{0.10}$ & 0.77$\pm$0.22$_{0.20}$ & 11.34$\pm$0.77$_{0.20}$ \\
    & & 17.01$\pm$1.52$_{0.15}$ & 70.34$\pm$11.49$_{0.15}$ & 0.71$\pm$0.22$_{0.30}$ & 3.28$\pm$0.94$_{0.30}$ \\

    \midrule
    \multirow{4.5}{*}{SSL} & Clean & 66.85$\pm$2.30 & 92.76$\pm$0.91 & 88.69$\pm$1.28 & 82.23$\pm$1.59\\
    \cmidrule{2-6}
    & \multirow{3}{*}{Poisoned$_\alpha$} & 44.64$\pm$2.01$_{0.05}$ & 53.33$\pm$13.48$_{0.05}$ & 32.38$\pm$10.77$_{0.10}$ & 34.72$\pm$1.71$_{0.10}$\\
    & & 10.86$\pm$1.21$_{0.10}$ & 26.64$\pm$10.1$_{0.10}$ & 6.12$\pm$2.13$_{0.20}$ & 21.86$\pm$1.01$_{0.20}$ \\
    & & 3.4$\pm$1.11$_{0.15}$ & 12.14$\pm$4.66$_{0.15}$ & 2.42$\pm$0.41$_{0.30}$ & 11.90$\pm$0.81$_{0.30}$ \\

    \bottomrule
    \end{tabular}
    }
\end{table}

\subsection{Poisoning with mislabeled watermarked data}\label{sec:attack_2}
\begin{wrapfigure}[14]{r}{0.27\textwidth}
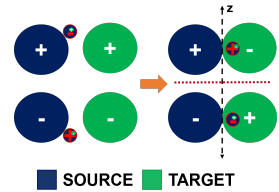

\vspace{-0.8cm}
\centering
\includegraphics[width=0.25\columnwidth]{images/Exp2.pdf}
\includegraphics[width=0.2\columnwidth]{images/Legend_0.pdf}
\vspace{-0.1cm}
\caption{Successful poisoning with mislabeled watermarked data prevents discriminator-based UDA approaches from aligning correct classes from the source and target domains.}
\label{fig:explanation_of_attacks_b}
\end{wrapfigure}
In this experiment, we evaluate the effect of using poisoned data that looks like the source domain data. 
The poisoned data is generated by superimposing an image from the source domain with an image from the target domain.
This method of generating poison data is known as watermarking 
\cite{shafahi2018poison}.
To generate watermarked poison data we select an image from the target domain ($t$) and a base image from the source domain ($s$) such that it has the same class as the target domain image and lies closest to the target image (in the input space).  
The poisoned image ($p$) is obtained by a convex combination of the base and target images i.e., $p = \alpha t + (1 - \alpha) s$ where $\alpha \in [0, 1]$. 
$\alpha$ is selected such that the target image is not visible in the poison image ensuring the poisoned image looks like the image from the source domain. 
We use the same labeling function as discussed in the previous section to label the poisoned image and add 10\% poison data to the source. 
The illustrative picture of the effect of poisoning in this scenario is presented in Fig.~\ref{fig:explanation_of_attacks_b}. 
Successful poisoning, in this case, works just like in the previous experiment i.e., by making the representations of the data from wrong classes in source and target domains similar for DANN/CDAN, by reducing the discrepancy between the classifiers on target data for MCD and inducing a conflict between the supervised and auxiliary task for SSL.
The t-SNE embedding showing the effect of poisoning (Fig.~\ref{fig:exp_2}) and watermarked poison data (Fig.~\ref{fig:watermarking_poison}) are shown in the Appendix.
We evaluate the effectiveness of this method on the Digits dataset for different values of $\alpha$. The results in Table~\ref{Table:digits_experiment_2} show a significant decrease in the target domain accuracy even with a small watermarking percentage for all methods except CDAN. 
This is because the success of CDAN is dependent on the correctness of the pseudo-labels on the target domain data (output of the classifier), which are used in the discriminator. 
Correct pseudo-labels provide CDAN a positive reinforcement to align correct classes from the two domains,
leading to a failure of poisoning. 
However, as we increase the amount of watermarking, the quality of pseudo labels deteriorates. 
Thus, providing a negative reinforcement to CDAN that causes the alignment of wrong classes from the two domains.

\subsection{Poisoning using clean-label source and target domain data\label{sec:attack_3}}
\begin{wrapfigure}[11]{r}{0.27\textwidth}
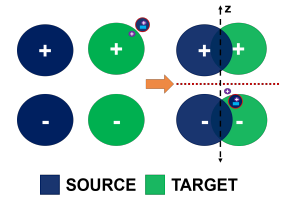

\vspace{-0.9cm}
\centering
\includegraphics[width=0.25\columnwidth]{images/Exp3.pdf}
\includegraphics[width=0.2\columnwidth]{images/Legend_0.pdf}
\vspace{-0.3cm}
\caption{Successful poisoning using clean-label poisoned data aligns the target point (purple) close to the wrong class (-).}
\label{fig:explanation_of_attacks_c}
\end{wrapfigure}
In this experiment, correctly labeled data is used for poisoning.
To generate clean-label poison data that can affect the performance of UDA methods we must affect the features of the poison data. This requires solving a bilevel optimization problem \cite{huang2020metapoison, mehra2019penalty, mehra2020robust} which we present in the Appendix~\ref{app:clean_label_bilevel}. 
Due to the high computational complexity involved in solving the bilevel problem, we propose to use a simple alternating optimization to demonstrate the feasibility of a clean label poisoning attack against UDA. 
We use the setting of previous works \cite{huang2020metapoison,shafahi2018poison} and consider misclassification of a single target domain test point ($x_\mathrm{test}^\mathrm{target}, y_\mathrm{test}^\mathrm{target}$) rather than affecting the accuracy of the entire target domain as done in the previous two experiments.
Let $u=\{u_1,...,u_n\}$ denote the poisoned data.
To ensure a clean label, each poison point $u_i$ must have a bounded perturbation from a base point
$x_i^\mathrm{base}$ i.e, $\|u_i - x_i^\mathrm{base}\| =\|\delta_i\| \leq \epsilon$ and has label of the base i.e., $y_i^\mathrm{base}$. 
Thus, $\hat{\mathcal{D}}^\mathrm{poison} = \{(u_i, y_i^\mathrm{base})\}_{i=1}^{N_{\mathrm{poison}}}$, $\hat{\mathcal{D}}_\mathrm{source} = \{(x_i^\mathrm{source}, y_i^\mathrm{source})\}_{i=1}^{N_\mathrm{source}}$ and $\hat{\mathcal{D}}_\mathrm{target} = \{(x_i^\mathrm{target}, y_i^\mathrm{target})\}_{i=1}^{N_\mathrm{target}}$.
The clean-label poison data $u$ is such that when the victim uses $\hat{\mathcal{D}}^\mathrm{source} \bigcup \hat{\mathcal{D}}^\mathrm{poison}$ and $\hat{\mathcal{D}}^\mathrm{target}$ for UDA, 
the target domain test point ($x_\mathrm{test}^\mathrm{target}, y_\mathrm{test}^\mathrm{target}$) is misclassified.
The optimization problem for the clean-label attack is as follows.
\begin{equation}
    \begin{split}
    \small
        \min_{u}\; \sum_{i=1}^{N_\mathrm{poison}}\|g(x_\mathrm{test}^\mathrm{target}; &\theta) - g(u_i; \theta)\|_2^2 \;\;\mathrm{s.t.}\;\; \|x^\mathrm{base}_{i} - u_i\| \leq \epsilon, \;\;\mathrm{for \; i=1,..., N_{poison}},\\
        &\min_{\theta}\; \mathcal{L}_\mathrm{UDA}(\hat{\mathcal{D}}^\mathrm{source} \bigcup \hat{\mathcal{D}}^\mathrm{poison}, \hat{\mathcal{D}}^\mathrm{target}; \theta).
    \end{split}
    \label{eq:alternating_clean_label}
\end{equation}

The first problem minimizes the distance between the representations of the poison and the target domain test data (first term) while ensuring the poison data is not too far from the base data (second term). The second problem optimizes the parameters of the representation using UDA methods.
\begin{wrapfigure}[12]{r}{0.3\textwidth}
\vspace{-0.3cm}
\centering
\includegraphics[width=0.3\columnwidth]{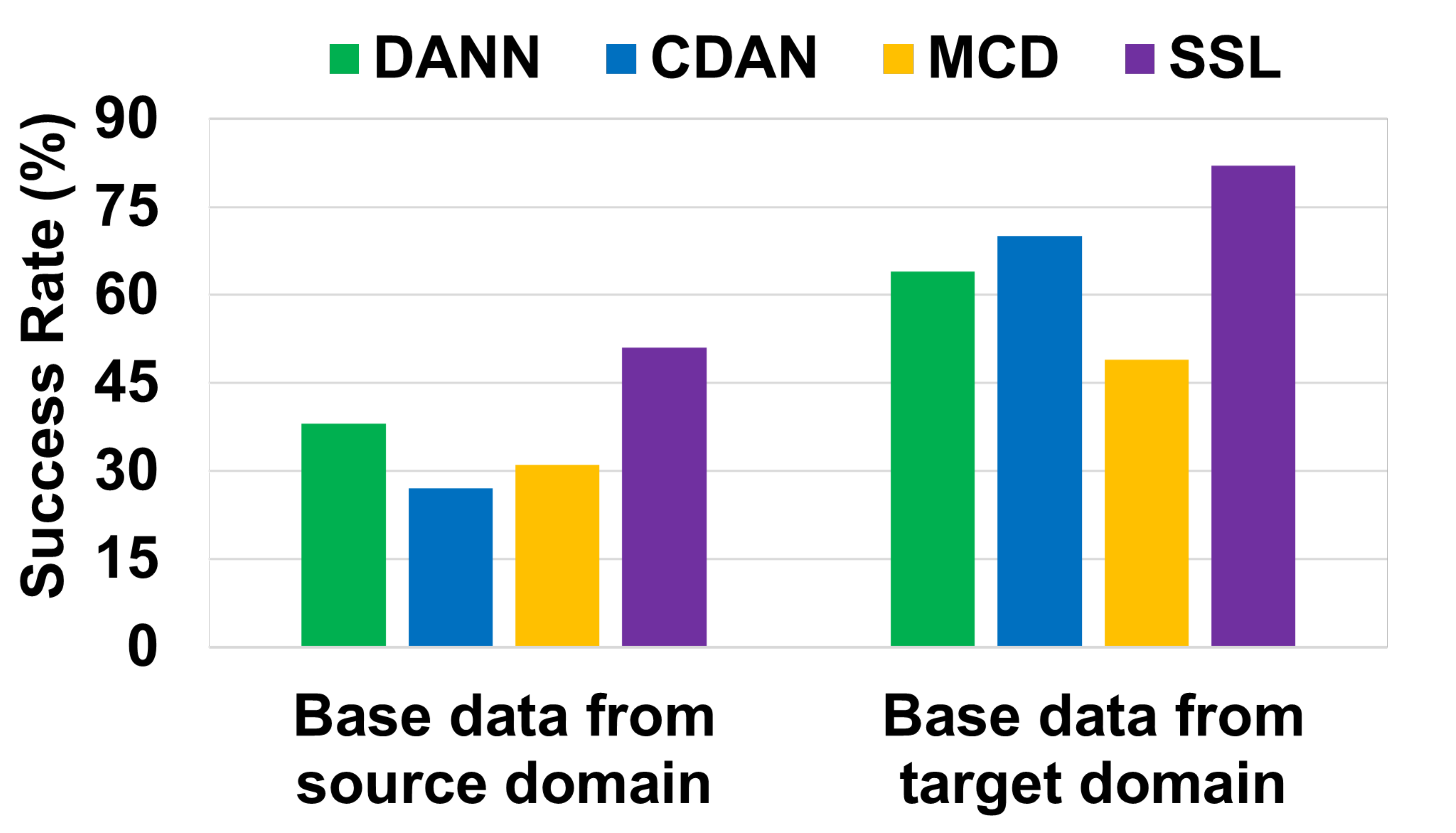}
\vspace{-0.6cm}
\caption{Attack success rate of clean-label poisoning using base data from source/target for a two-class problem in MNIST $\rightarrow{}$  MNIST\_M.}
\label{fig:clean_label_poison}
\end{wrapfigure}
Attack success is evaluated by solving the second problem in Eq.~\ref{eq:alternating_clean_label} from scratch and evaluating the classification of $x_\mathrm{test}^\mathrm{target}$. 
This is illustrated in Fig.~\ref{fig:explanation_of_attacks_c}. The left part shows the case before retraining using the poison data generated from Eq.~\ref{eq:alternating_clean_label} and the right part shows how poisoning induces misclassification. 
We use two approaches for poisoning. The first uses source domain data and the second uses target domain data as base data.  
We add 1\% poisoned data and test the effect of poisoning on a two-class (3 vs 8) domain adaptation problem on MNIST $\rightarrow{}$MNIST\_M (see Appendix~\ref{app:experiments_details}).
The results in Fig.~\ref{fig:clean_label_poison} show that using target domain data as base data is significantly more successful under small permissible perturbation ($\epsilon$). Using base data from the source domain requires larger distortion to keep the poison data close to the target point in the representation space and is hence less successful.
This shows the feasibility of clean label attacks against UDA methods. 
We believe the attack success can be improved by solving the bilevel level problem (Eq.~\ref{eq:bilevel_clean_label} in Appendix~\ref{app:clean_label_bilevel}) and is left as future work.

\if0
\begin{wraptable}[5]{r}{7cm}
  \caption{Success rate (\%) of clean-label poisoning attack by using different base data to initialize the poison data for a two-class (3 and 8) domain adaption problem on the MNIST $\rightarrow{}$  MNIST\_M task from the Digits dataset.}
  \label{Table:digits_experiment_3}
  \centering
  \small
  \resizebox{0.5\textwidth}{!}{
  \begin{tabular}{c|cccc}
    \toprule
   \multirow{2}{*}{Base Data} & \multicolumn{4}{c}{\makecell{UDA method}} \\
    & DANN & CDAN & MCD & SSL \\
    \midrule
    Source domain & 59 & 48 & 31 & 51 \\
    Target domain & 68 & 68 & & 86\\
    \bottomrule
  \end{tabular}
  }
\end{wraptable}
\fi

\section{Conclusion}
We studied the problem of UDA and highlighted the limitations of learning under this setting.
We proposed a simple lower bound on the target domain error for UDA, dependent on the labeling function induced by the representation. 
The lower bound demonstrated that learning a domain invariant representation while minimizing error on the source domain cannot guarantee good generalization on the target domain.
We analyzed a simple model and showed the existence of cases where UDA can naturally succeed or fail. 
The analysis also highlighted a case where, without access to any labeled target domain data, the success and the failure are equally likely.
In such a case, the presence of even a small amount of poisoned data can make the data distribution unfavorable for UDA methods,
making them fail 
dramatically in comparison to the case without poisoning. 
We proposed novel data poisoning attacks to demonstrate the failure of popular UDA methods with a small amount of poisoned data.
Our results suggest that the performance of a UDA method in presence of poisoned data indicates how well the method aligns the conditional distributions across the two domains. 
Thus, we believe our attacks can be used for evaluating UDA methods, beyond simple benchmark datasets, to reveal their robustness to data distributions inherently unfavorable for UDA.

\section{Acknowledgment}
We thank the anonymous reviewers for their insightful comments and suggestions. This work was supported by the NSF EPSCoR-Louisiana Materials Design Alliance (LAMDA) program \#OIA-1946231 and by LLNL Laboratory Directed Research and Development project 20-ER-014 (LLNL-CONF-824206). 
This work was performed under the auspices of the U.S. Department of Energy by the Lawrence Livermore National Laboratory under Contract No. DE-AC52-07NA27344, Lawrence Livermore National Security, LLC
\footnote{This document was prepared as an account of the work sponsored by an agency of the United States Government. Neither the United States Government nor Lawrence Livermore National Security, LLC, nor any of their employees make any warranty, expressed or implied, or assumes any legal liability or responsibility for the accuracy, completeness, or usefulness of any information, apparatus, product, or process disclosed, or represents that its use would not infringe privately owned rights. Reference herein to any specific commercial product, process, or service by trade name, trademark, manufacturer, or otherwise does not necessarily constitute or imply its endorsement, recommendation, or favoring by the United States Government or Lawrence Livermore National Security, LLC. The views and opinions of the authors expressed herein do not necessarily state or reflect those of the United States Government or Lawrence Livermore National Security, LLC, and shall not be used for advertising or product endorsement purposes.}.  


\bibliographystyle{plain}
\bibliography{neurips_2021}


\medskip

\section*{Checklist}

\if0
The checklist follows the references.  Please
read the checklist guidelines carefully for information on how to answer these
questions.  For each question, change the default \answerTODO{} to \answerYes{},
\answerNo{}, or \answerNA{}.  You are strongly encouraged to include a {\bf
justification to your answer}, either by referencing the appropriate section of
your paper or providing a brief inline description.  For example:
\begin{itemize}
  \item Did you include the license to the code and datasets? \answerYes{See Section~\ref{gen_inst}.}
  \item Did you include the license to the code and datasets? \answerNo{The code and the data are proprietary.}
  \item Did you include the license to the code and datasets? \answerNA{}
\end{itemize}
Please do not modify the questions and only use the provided macros for your
answers.  Note that the Checklist section does not count towards the page
limit.  In your paper, please delete this instructions block and only keep the
Checklist section heading above along with the questions/answers below.
\fi
\begin{enumerate}

\item For all authors...
\begin{enumerate}
  \item Do the main claims made in the abstract and introduction accurately reflect the paper's contributions and scope?
    \answerYes{See Sec.~\ref{sec:analysis} and~\ref{sec:experiments}.}
  \item Did you describe the limitations of your work?
    \answerYes{See Sec.~\ref{sec:analysis}.}
  \item Did you discuss any potential negative societal impacts of your work?
    \answerNA{}
  \item Have you read the ethics review guidelines and ensured that your paper conforms to them?
    \answerYes{}{}
\end{enumerate}

\item If you are including theoretical results...
\begin{enumerate}
  \item Did you state the full set of assumptions of all theoretical results?
    \answerYes{See Sec.~\ref{sec:analysis} and Appendix~\ref{app:analysis}.}
	\item Did you include complete proofs of all theoretical results?
    \answerYes{See the Appendix.}
\end{enumerate}

\item If you ran experiments...
\begin{enumerate}
  \item Did you include the code, data, and instructions needed to reproduce the main experimental results (either in the supplemental material or as a URL)?
    \answerYes{See Appendix~\ref{app:experiments_details}.}
  \item Did you specify all the training details (e.g., data splits, hyperparameters, how they were chosen)?
    \answerYes{See Appendix~\ref{app:experiments_details}.}
	\item Did you report error bars (e.g., with respect to the random seed after running experiments multiple times)?
    \answerYes{}
	\item Did you include the total amount of compute and the type of resources used (e.g., type of GPUs, internal cluster, or cloud provider)?
    \answerYes{See Appendix~\ref{app:experiments_details}.}
\end{enumerate}

\item If you are using existing assets (e.g., code, data, models) or curating/releasing new assets...
\begin{enumerate}
  \item If your work uses existing assets, did you cite the creators?
    \answerYes{}
  \item Did you mention the license of the assets?
    \answerNA{}
  \item Did you include any new assets either in the supplemental material or as a URL?
    \answerNA{}
  \item Did you discuss whether and how consent was obtained from people whose data you're using/curating?
    \answerNA{}
  \item Did you discuss whether the data you are using/curating contains personally identifiable information or offensive content?
    \answerNA{}
\end{enumerate}

\item If you used crowdsourcing or conducted research with human subjects...
\begin{enumerate}
  \item Did you include the full text of instructions given to participants and screenshots, if applicable?
    \answerNA{}
  \item Did you describe any potential participant risks, with links to Institutional Review Board (IRB) approvals, if applicable?
    \answerNA{}
  \item Did you include the estimated hourly wage paid to participants and the total amount spent on participant compensation?
    \answerNA{}
\end{enumerate}

\end{enumerate}

\clearpage
\appendix
\begin{center}
{\LARGE \bf Appendix}
\end{center}
We present the proof of Theorem~\ref{thm:lower_bound} in Appendix~\ref{app:analysis} followed by the analysis of the illustrative cases of UDA failure in Appendix~\ref{app:UDA_cases}. Then we present the results for the experiment of using different poison percentages when mislabeled data is used for poisoning in Appendix~\ref{app:poison_percentage} followed by the proposed bilevel formulation for clean label attacks in Appendix~\ref{app:clean_label_bilevel}. We discuss additional related work in Appendix~\ref{app:additional_related_work} and present additional experiments in Appendix~\ref{app:additional_experiments}. We conclude in Appendix~\ref{app:experiments_details} by providing the details of the datasets used, model architectures, and the clean label experiment. 

\section{Proof of the lower bound on the target domain loss}\label{app:analysis}
\begin{theoremrep}[\ref{thm:lower_bound}]
Let $\mathcal{H}$ be the hypothesis class and $\mathcal{G}$ be the class  representation maps. Then, for all $h\in \mathcal{H}$ and $g \in \mathcal{G}$,
\[
e_T(h) \geq \max\{e_S(\tilde{f}_S,\tilde{f}_T),e_T(\tilde{f}_S,\tilde{f}_T)\}  - e_S(h) - D_1(\tilde{p}_S,\tilde{p}_T).
\]
\end{theoremrep}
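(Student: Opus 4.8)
The plan is to mimic the derivation of the Ben-David et al.\ upper bound but run the triangle inequality ``in the other direction.'' Two elementary facts do all the work. First, for any fixed distribution $\tilde{\mathcal{D}}$ on $\mathcal{Z}$, the quantity $e(\phi,\psi)=E_{z\sim\tilde{\mathcal{D}}}[|\phi(z)-\psi(z)|]$ is an $L^1$-type distance on $[0,1]$-valued functions and hence obeys the triangle inequality $e(\phi,\psi)\le e(\phi,\chi)+e(\chi,\psi)$. Second, for any $[0,1]$-valued function $\phi$ on $\mathcal{Z}$, the values $e_S(\cdot)$ and $e_T(\cdot)$ computed under $\tilde{p}_S$ and $\tilde{p}_T$ differ by at most the total variation distance: $|\,E_{\tilde{p}_S}[\phi]-E_{\tilde{p}_T}[\phi]\,|=|\int\phi(z)(\tilde{p}_S(z)-\tilde{p}_T(z))\,dz|\le\int|\tilde{p}_S(z)-\tilde{p}_T(z)|\,dz=D_1(\tilde{p}_S,\tilde{p}_T)$, using $0\le\phi\le 1$.

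First I would derive the bound with $e_S(\tilde{f}_S,\tilde{f}_T)$. Applying the triangle inequality under $\tilde{p}_S$ with the intermediate function $h$ gives $e_S(\tilde{f}_S,\tilde{f}_T)\le e_S(\tilde{f}_S,h)+e_S(h,\tilde{f}_T)=e_S(h)+e_S(h,\tilde{f}_T)$, where the last equality is just the definition $e_S(h)=e_S(h,\tilde{f}_S)$. Now apply the change-of-measure fact to the function $z\mapsto|h(z)-\tilde{f}_T(z)|$ to get $e_S(h,\tilde{f}_T)\le e_T(h,\tilde{f}_T)+D_1(\tilde{p}_S,\tilde{p}_T)=e_T(h)+D_1(\tilde{p}_S,\tilde{p}_T)$. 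Chaining these yields $e_S(\tilde{f}_S,\tilde{f}_T)\le e_S(h)+e_T(h)+D_1(\tilde{p}_S,\tilde{p}_T)$, i.e.\ $e_T(h)\ge e_S(\tilde{f}_S,\tilde{f}_T)-e_S(h)-D_1(\tilde{p}_S,\tilde{p}_T)$.

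Next I would derive the companion bound with $e_T(\tilde{f}_S,\tilde{f}_T)$ by the symmetric argument: the triangle inequality under $\tilde{p}_T$ gives $e_T(\tilde{f}_S,\tilde{f}_T)\le e_T(\tilde{f}_S,h)+e_T(h,\tilde{f}_T)=e_T(\tilde{f}_S,h)+e_T(h)$, and the change-of-measure fact applied to $z\mapsto|\tilde{f}_S(z)-h(z)|$ gives $e_T(\tilde{f}_S,h)\le e_S(\tilde{f}_S,h)+D_1(\tilde{p}_S,\tilde{p}_T)=e_S(h)+D_1(\tilde{p}_S,\tilde{p}_T)$, so $e_T(h)\ge e_T(\tilde{f}_S,\tilde{f}_T)-e_S(h)-D_1(\tilde{p}_S,\tilde{p}_T)$. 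Taking the maximum of the two lower bounds gives the claimed inequality, and Corollary~\ref{cor:two} follows by pairing each half with the matching half of the known upper bound.

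There is no real obstacle here; the only points that need care are bookkeeping ones: keeping track of which distribution each $e(\cdot,\cdot)$ is taken under, noting that $\tilde{f}_S,\tilde{f}_T,h$ are all $[0,1]$-valued so the change-of-measure step is valid, and recalling that the induced objects $\tilde{f}_S,\tilde{f}_T,\tilde{p}_S,\tilde{p}_T$ from Eq.~\ref{eq:labeling function} are exactly what make $e_S(h)$ and $e_T(h)$ the quantities appearing in the statement. The same two facts (triangle inequality plus total-variation change of measure) are precisely what yield the Ben-David upper bound, so the whole argument is essentially one short computation applied twice.
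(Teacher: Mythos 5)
Your proposal is correct and uses exactly the same two ingredients as the paper's proof in Appendix~\ref{app:analysis} (the triangle inequality for the $L^1$ error under a fixed induced distribution, and the change-of-measure bound $|e_S(\phi)-e_T(\phi)|\leq D_1(\tilde{p}_S,\tilde{p}_T)$ for $[0,1]$-valued integrands); the paper merely packages the same steps as a single chain starting from the identity $e_S(h)+e_T(h)=e_T(h,\tilde{f}_T)+e_T(h,\tilde{f}_S)+e_S(h,\tilde{f}_S)-e_T(h,\tilde{f}_S)$. No substantive difference.
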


\begin{proof}
\begin{eqnarray*}
e_S(h) + e_T(h) 
&=& e_T(h,\tilde{f}_T)) + e_T(h,\tilde{f}_S) + e_S(h,\tilde{f}_S) - e_T(h,\tilde{f}_S)\\
&\geq & e_T(\tilde{f}_S,\tilde{f}_T) + e_S(h,\tilde{f}_S) - e_T(h,\tilde{f}_S)\\
&=& e_T(\tilde{f}_S,\tilde{f}_T) + \int (\tilde{p}_S(z)-\tilde{p}_T(z))|h(z)-\tilde{f}_S(z)|\;dz\\
&\geq& e_T(\tilde{f}_S,\tilde{f}_T) - \int |\tilde{p}_S(z)-\tilde{p}_T(z)|\;dz\\
&=& e_T(\tilde{f}_S,\tilde{f}_T) - D_1(\tilde{p}_S,\tilde{p}_T).
\end{eqnarray*}
Similarly, we can also show that 
\[
e_S(h) + e_T(h) \geq e_S(\tilde{f}_S,\tilde{f}_T) - D_1(\tilde{p}_S,\tilde{p}_T).
\]
Combining the two results gives us the statement of the theorem.
\end{proof}

\begin{corollaryrep}[\ref{cor:two}]
For all $h\in \mathcal{H}$ and $g \in \mathcal{G}$,
\begin{equation*}
|e_T(h) - {e_S(\tilde{f}_S,\tilde{f}_T)| \leq e_S(h) + D_1(\tilde{p}_S,\tilde{p}_T)},\;\;\mathrm{and}\;\;
|e_T(h) - {e_T(\tilde{f}_S,\tilde{f}_T)| \leq e_S(h) + D_1(\tilde{p}_S,\tilde{p}_T)}.
\end{equation*}
\end{corollaryrep}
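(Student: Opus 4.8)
The plan is to derive Corollary~\ref{cor:two} by simply combining the upper bound quoted earlier in the text with the lower bound of Theorem~\ref{thm:lower_bound}. Recall that the upper bound (from \cite{ben2010theory,ben2007analysis}, restated just before the theorem) reads $e_T(h) \leq \min\{e_S(\tilde{f}_S,\tilde{f}_T),e_T(\tilde{f}_S,\tilde{f}_T)\} + e_S(h) + D_1(\tilde{p}_S,\tilde{p}_T)$, while Theorem~\ref{thm:lower_bound} gives $e_T(h) \geq \max\{e_S(\tilde{f}_S,\tilde{f}_T),e_T(\tilde{f}_S,\tilde{f}_T)\} - e_S(h) - D_1(\tilde{p}_S,\tilde{p}_T)$. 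Write $a = e_S(\tilde{f}_S,\tilde{f}_T)$, $b = e_T(\tilde{f}_S,\tilde{f}_T)$, and $\rho = e_S(h) + D_1(\tilde{p}_S,\tilde{p}_T)$, so the two bounds say $\min\{a,b\} - \rho \le e_T(h) \le \min\{a,b\} + \rho$ for the upper side is actually sharper on the $\min$, but for the corollary I only need the pair of weaker facts $e_T(h) \le a + \rho$, $e_T(h) \le b + \rho$ (immediate from $\min\{a,b\} \le a$ and $\le b$) together with $e_T(h) \ge a - \rho$ and $e_T(h) \ge b - \rho$ (immediate from $\max\{a,b\} \ge a$ and $\ge b$).

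From here the argument is routine. Fix the labeling-mismatch value, say $a$. The lower bound on $e_T(h)$ gives $e_T(h) - a \ge -\rho$, i.e. $a - e_T(h) \le \rho$, and the upper bound gives $e_T(h) - a \le \rho$. Together these two inequalities are exactly $|e_T(h) - a| \le \rho$, which is the first claimed inequality $|e_T(h) - e_S(\tilde{f}_S,\tilde{f}_T)| \le e_S(h) + D_1(\tilde{p}_S,\tilde{p}_T)$. Repeating verbatim with $b$ in place of $a$ gives $|e_T(h) - e_T(\tilde{f}_S,\tilde{f}_T)| \le e_S(h) + D_1(\tilde{p}_S,\tilde{p}_T)$, the second claimed inequality. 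Since $h \in \mathcal{H}$ and $g \in \mathcal{G}$ were arbitrary, both hold for all such $h,g$.

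There is essentially no obstacle here: the only thing to be careful about is that the upper bound really does dominate $a$ and $b$ individually (which it does, via $\min\{a,b\}\le a$ and $\min\{a,b\}\le b$) and that the lower bound really is dominated by $a$ and $b$ individually (via $\max\{a,b\}\ge a$ and $\max\{a,b\}\ge b$) — so no case analysis on which of $a,b$ is larger is needed. If one wanted to be fully self-contained one could also re-derive the upper bound in the same style as the proof of Theorem~\ref{thm:lower_bound}: starting from $e_T(h) \le e_T(h,\tilde{f}_S) + e_T(\tilde{f}_S,\tilde{f}_T)$ by the triangle inequality for $e_T(\cdot,\cdot)$, then bounding $e_T(h,\tilde{f}_S) \le e_S(h,\tilde{f}_S) + |e_T(h,\tilde{f}_S) - e_S(h,\tilde{f}_S)| \le e_S(h) + D_1(\tilde{p}_S,\tilde{p}_T)$ using the same $\int(\tilde{p}_S - \tilde{p}_T)|h - \tilde{f}_S|$ estimate as in the theorem's proof, and symmetrically through the target side to get the $e_T(\tilde{f}_S,\tilde{f}_T)$ and $e_S(\tilde{f}_S,\tilde{f}_T)$ versions; but since the upper bound is already cited in the main text, invoking it directly is cleanest.
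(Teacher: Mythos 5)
Your proposal is correct and matches the paper's own proof: the paper likewise combines the cited upper bound $e_T(h) \leq \min\{e_S(\tilde{f}_S,\tilde{f}_T),e_T(\tilde{f}_S,\tilde{f}_T)\} + e_S(h) + D_1(\tilde{p}_S,\tilde{p}_T)$ with the lower bound of Theorem~\ref{thm:lower_bound}, reading off the two one-sided inequalities for each of $e_S(\tilde{f}_S,\tilde{f}_T)$ and $e_T(\tilde{f}_S,\tilde{f}_T)$ and assembling them into the absolute values. Your extra remark that $\min\{a,b\}\le a,b$ and $\max\{a,b\}\ge a,b$ make any case analysis unnecessary is exactly the (implicit) content of the paper's one-line combination step.
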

\begin{proof}
From the upper bound we have,
\begin{eqnarray*}
e_T(h) - e_S(\tilde{f}_S,\tilde{f}_T)) \leq e_S(h) + D_1(\tilde{p}_S,\tilde{p}_T) \;\;\mathrm{and}\;\; 
e_T(h) - e_T(\tilde{f}_S,\tilde{f}_T)) \leq e_S(h) + D_1(\tilde{p}_S,\tilde{p}_T)
\end{eqnarray*}

From the lower bound (Eq.~\ref{eq:lower bound}) we have,
\begin{eqnarray*}
 e_T(h) - e_S(\tilde{f}_S,\tilde{f}_T) \geq -e_S(h)  - D_1(\tilde{p}_S,\tilde{p}_T) \;\;\mathrm{and}\;\; 
 e_T(h) - e_T(\tilde{f}_S,\tilde{f}_T) \geq -e_S(h)  - D_1(\tilde{p}_S,\tilde{p}_T)
\end{eqnarray*}
Combining the results from the upper and the lower bounds gives us the statement of the corollary.
\end{proof}

\if0
\section{Bounds relating errors on source, poison and target domains}
Let $\mathcal{D}_{S'} = \alpha \mathcal{D}_{P} + (1 - \alpha) \mathcal{D}_{S}$ and
$\epsilon_{S'}(h) = \mathbb{E}_{{\bf x} \sim \mathcal{D}_{S'}}\big[ |h({\bf x}) - f_{S'}({\bf x})|\big]$. 

Assuming $\mathcal{D}_{S}$ and $\mathcal{D}_{P}$ don't have common support, we can form $f_{S'} = f_{S}$ or $f_{P}$.

Fact: $d_1(\mathcal{D_{S'}}, \mathcal{D_{T}})=\int |P_{D_{S'}}(x)-P_{D_{T}}(x)|dx =\int |\alpha P_{D_{p}}(x)+(1-\alpha)P_{D_{S}}(x)-P_{D_{T}}(x)|dx
\leq \alpha d_1(\mathcal{D_{P}}, \mathcal{D_{T}}) + (1-\alpha) d_1(\mathcal{D_{S}}, \mathcal{D_{T}})$.

\subsection{Bound from lemma~\ref{theorem_1_bendavid}}
From lemma~\ref{theorem_1_bendavid} we have,\\
\begin{equation*}
    \begin{split}
        \epsilon_{T}(h, f_{T}) & \leq  \epsilon_{S'}(h, f_{S'})  + d_1(\mathcal{D_{S'}}, \mathcal{D_{T}}) + \min \big\{\mathbb{E}_{\mathcal{D_{S'}}}\big[ |f_{S'}({\bf x}) - f_{T}({\bf x})|\big], \mathbb{E}_{\mathcal{D_{T}}}\big[ |f_{S'}({\bf x}) - f_{T}({\bf x})|\big] \big\} \\
 & \leq  \alpha \epsilon_{P}(h, f_{P}) + (1-\alpha) \epsilon_{S}(h, f_{S})  + \alpha d_1(\mathcal{D_{P}}, \mathcal{D_{T}}) + (1-\alpha) d_1(\mathcal{D_{S}}, \mathcal{D_{T}}) + \\
 & \min \big\{\alpha \mathbb{E}_{\mathcal{D_{P}}}\big[ |f_{P}({\bf x}) - f_{T}({\bf x})|\big]+ (1 - \alpha)\mathbb{E}_{\mathcal{D_{S}}}\big[ |f_{S}({\bf x}) - f_{T}({\bf x})|\big], \mathbb{E}_{\mathcal{D_{T}}}\big[ |f_{S'}({\bf x}) - f_{T}({\bf x})|\big] \big\}
\end{split}
\end{equation*}

The victim finds the best model by minimizing xxx using samples from $D_{S'}$ with the labeling function $f_{S'}$.
Note that $X$ is the representation space that changes during domain adaptation. 
$|f_P(h(x)) - f_T(h(x))|$
However, $E_{D_p}|f_P(h(x))-f_T(h(x))|$ is still large.
Also know $\alpha \epsilon_P + (1-\alpha)\epsilon_S$

Mode collapsing happening ...

\subsection{Bound from lemma~\ref{theorem_2_bendavid}}
For a bound in terms if $\mathcal{H}-divergence$. 

Let $h^\ast = \arg\min_{h \in \mathcal{H}} \alpha \epsilon_{P}(h) + (1-\alpha)\epsilon_{S}(h) + \epsilon_{T}(h).$ 

The error of $h^\ast$ is $\lambda_{S'T} =  \alpha \epsilon_{P}(h^\ast) + (1-\alpha)\epsilon_{S}(h^\ast) + \epsilon_{T}(h^\ast)$, then
\begin{equation*}
    \begin{split}
    \epsilon_T(h)
        & \leq \epsilon_T(h, h^\ast) + \epsilon_T(h^\ast)\\
        & \leq \epsilon_{S'}(h, h^\ast) + |\epsilon_T(h, h^\ast) - \epsilon_{S'}(h, h^\ast)| + \epsilon_T(h^\ast) \\
        & \leq \epsilon_{S'}(h) + \epsilon_{S'}(h^\ast) + \frac{1}{2}d_{\mathcal{H}\Delta\mathcal{H}}(\mathcal{D}_{S'}, \mathcal{D}_{T}) + \epsilon_T(h^\ast) \\
        & = \epsilon_{S'}(h) + \frac{1}{2}d_{\mathcal{H}\Delta\mathcal{H}}(\mathcal{D}_{S'}, \mathcal{D}_{T}) + \lambda_{S'T} \\
    \end{split}
\end{equation*}

Instead of measuring the error in terms of the combined divergence between the poisoned source and the target, we can measure the error in terms of the pairwise divergence between poison and target and source and target.

\begin{equation*}
    \begin{split}
        \epsilon_T(h)
        & \leq \epsilon_T(h, h^\ast) + \epsilon_T(h^\ast)\\
        & \leq \epsilon_{S'}(h, h^\ast) + |\epsilon_T(h, h^\ast) - \epsilon_{S'}(h, h^\ast)| + \epsilon_T(h^\ast) \\
        & \leq \alpha \epsilon_{P}(h, h^\ast) + (1 - \alpha) \epsilon_{S}(h, h^\ast) + \alpha|\epsilon_T(h, h^\ast) - \epsilon_{P}(h, h^\ast)| + (1 - \alpha) |\epsilon_T(h, h^\ast) - \epsilon_{S}(h, h^\ast)| + \epsilon_T(h^\ast) \\
        & \leq \alpha \epsilon_{P}(h, h^\ast) + (1 - \alpha) \epsilon_{S}(h, h^\ast) + \frac{\alpha}{2}d_{\mathcal{H}\Delta\mathcal{H}}(\mathcal{D}_{P}, \mathcal{D}_{T}) + \frac{1 - \alpha}{2}d_{\mathcal{H}\Delta\mathcal{H}}(\mathcal{D}_{S}, \mathcal{D}_{T})+ \epsilon_T(h^\ast) \\
        & \leq \alpha \epsilon_{P}(h) + (1 - \alpha) \epsilon_{S}(h)+ \frac{\alpha}{2}d_{\mathcal{H}\Delta\mathcal{H}}(\mathcal{D}_{P}, \mathcal{D}_{T}) + \frac{1 - \alpha}{2}d_{\mathcal{H}\Delta\mathcal{H}}(\mathcal{D}_{S}, \mathcal{D}_{T}) + \epsilon_T(h^\ast) + \alpha \epsilon_{P}(h^\ast) + (1 - \alpha) \epsilon_{S}(h^\ast) \\
        & = \epsilon_{S'}(h) + \frac{\alpha}{2}d_{\mathcal{H}\Delta\mathcal{H}}(\mathcal{D}_{P}, \mathcal{D}_{T}) + \frac{1 - \alpha}{2}d_{\mathcal{H}\Delta\mathcal{H}}(\mathcal{D}_{S}, \mathcal{D}_{T}) +  \lambda_{S'T}\\
    \end{split}
\end{equation*}

\subsection{Bound from Theorem 3 in~\cite{ben2010theory}}

Empirical source error on $m$ points in the training set where $\alpha m$ points are drawn independently from $\mathcal{D}_{P}$ and $(1 - \alpha) m$ points are drawn independently from $\mathcal{D}_{S}$.
Let the labeling function of the modified source domain ($S'$) be defined as 
\( f_{S'} = \begin{cases} 
      f_{S} & x \in \mathcal{D}_S \\
      f_{P} & x \in \mathcal{D}_P
   \end{cases}
\).

$\hat{\epsilon}_{S'}(h, f_{S'}) = \frac{1}{m}[\Sigma_{x \in P} |h(x) - f_P(x)| + \Sigma_{x \in S} |h(x) - f_S(x)|] = \alpha \hat{\epsilon}_{P}(h, f_P) + (1 - \alpha) \hat{\epsilon}_{S}(h, f_{S}).$

$\epsilon_{S'}(h, f_{S'}) = \mathbb{E}(\hat{\epsilon}_{S'}) = \alpha \mathbb{E}(\hat{\epsilon}_{P}(h, f_P)) + (1 - \alpha) \mathbb{E}(\hat{\epsilon}_{S}(h, f_S)) = \alpha \epsilon_{P}(h, f_P) + (1 - \alpha) \epsilon_{S}(h, f_S).$

{\bf Similar to lemma 4 in \cite{ben2010theory}}
\begin{equation*}
\begin{split}
& |\epsilon_{S'}(h, f_{S'}) - \epsilon_{T}(h, f_{T})| \\
& \leq \alpha|\epsilon_{P}(h, f_{P}) - \epsilon_{T}(h, f_{T})| + (1 - \alpha)|\epsilon_{S}(h, f_{S}) - \epsilon_{T}(h, f_{T})|\\
& \leq \alpha\big[|\epsilon_{P}(h, f_{P}) - \epsilon_{P}(h, h^\ast)| + |\epsilon_{P}(h, h^\ast) - \epsilon_{T}(h, h^\ast)| + |\epsilon_{T}(h, h^\ast) - \epsilon_{T}(h, f_{T})|\big] + (1 - \alpha)|\epsilon_{S}(h, f_{S}) - \epsilon_{T}(h, f_{T})|\\
& \leq \alpha\big[\epsilon_{P}(h^\ast, f_{P}) + \frac{1}{2}d_{\mathcal{H}\Delta\mathcal{H}}(\mathcal{D}_{P}, \mathcal{D}_{T}) + \epsilon_{T}(h^\ast, f_{T})\big] + (1-\alpha)\big[\epsilon_{S}(h^\ast, f_{S}) + \frac{1}{2}d_{\mathcal{H}\Delta\mathcal{H}}(\mathcal{D}_{S}, \mathcal{D}_{T}) + \epsilon_{T}(h^\ast, f_{T})\big]\\
\end{split}
\end{equation*}

If $\mathcal{D}_P = \mathcal{D}_T$ and $f_P(x) = f_T(x)$, then the target error is smaller compared to the case when $\mathcal{D}_P$ and $f_P(x)$ are arbitrary.
\begin{equation*}
|\epsilon_{S'}(h, f_{S'}) - \epsilon_{T}(h, f_{T})| 
\leq
(1-\alpha)\big[\epsilon_{S}(h^\ast, f_{S}) + \frac{1}{2}d_{\mathcal{H}\Delta\mathcal{H}}(\mathcal{D}_{S}, \mathcal{D}_{T}) + \epsilon_{T}(h^\ast, f_{T})\big]
\end{equation*}

If $\mathcal{D}_P = \mathcal{D}_T$ but $f_P(x) = 1 - f_T(x)$, then the presence of such mislabeled target data in the source hurts upper bound on the target error.
\begin{equation*}
\begin{split}
& |\epsilon_{S'}(h, f_{S'}) - \epsilon_{T}(h, f_{T})| \\
& \leq  \alpha|\epsilon_{P}(h, f_{P}) - \epsilon_{T}(h, f_{T})|+ 
(1-\alpha)\big[\epsilon_{S}(h^\ast, f_{S}) + \frac{1}{2}d_{\mathcal{H}\Delta\mathcal{H}}(\mathcal{D}_{S}, \mathcal{D}_{T}) + \epsilon_{T}(h^\ast, f_{T})\big]\\
& =  \alpha|\epsilon_{T}(h, 1 - f_{T}) - \epsilon_{T}(h, f_{T})|+ 
(1-\alpha)\big[\epsilon_{S}(h^\ast, f_{S}) + \frac{1}{2}d_{\mathcal{H}\Delta\mathcal{H}}(\mathcal{D}_{S}, \mathcal{D}_{T}) + \epsilon_{T}(h^\ast, f_{T})\big]\\
& \leq  \alpha\mathbb{E}_{\mathcal{D}_T}(|f_T - (1 - f_{T})|)+ 
(1-\alpha)\big[\epsilon_{S}(h^\ast, f_{S}) + \frac{1}{2}d_{\mathcal{H}\Delta\mathcal{H}}(\mathcal{D}_{S}, \mathcal{D}_{T}) + \epsilon_{T}(h^\ast, f_{T})\big]\\
& =  \alpha \mathbb{E}_{\mathcal{D}_T}(|2f_T - 1|)+ 
(1-\alpha)\big[\epsilon_{S}(h^\ast, f_{S}) + \frac{1}{2}d_{\mathcal{H}\Delta\mathcal{H}}(\mathcal{D}_{S}, \mathcal{D}_{T}) + \epsilon_{T}(h^\ast, f_{T})\big]\\
& =  \alpha + 2 \alpha \mathbb{E}_{\mathcal{D}_T}(|f_T|)+ 
(1-\alpha)\big[\epsilon_{S}(h^\ast, f_{S}) + \frac{1}{2}d_{\mathcal{H}\Delta\mathcal{H}}(\mathcal{D}_{S}, \mathcal{D}_{T}) + \epsilon_{T}(h^\ast, f_{T})\big]
\end{split}
\end{equation*}

{\bf Similar to lemma 5 in \cite{ben2010theory}}\\
Let $X_1, ..., X_{\alpha m}$ be random variables that take on the values $|h(x) - f_P(x)|$, for the $\alpha m$ instance $x \in P$. Similarly, let $X_{\alpha m + 1}, ..., X_{m}$ be random variables that take on the values $|h(x) - f_S(x)|$, for the $(1-\alpha)m$ instance $x \in S$.

Since $X_1, ..., X_{\alpha m} \in [0, 1]$ and $X_{\alpha m + 1}, ..., X_{m} \in [0, 1]$. then
\begin{equation*}
\begin{split}
\hat{\epsilon}_{S'}(h) & = \alpha \hat{\epsilon}_{P} + (1 - \alpha) \hat{\epsilon}_{S} \\
& = \alpha \frac{1}{\alpha m} \Sigma_{x \in P} |h(x) - f_P(x)| + (1-\alpha)\frac{1}{(1-\alpha) m} \Sigma_{x \in S} |h(x) - f_S(x)| \\
& = \frac{1}{m}\Sigma_{i=1}^{m}X_{i}.
\end{split}
\end{equation*}

Since $\mathbb{E}(\hat{\epsilon}_{S'})  = \epsilon_{S'}$, then by Hoeffdings's inequality the following holds for every $h$. \\
$\mathbb{P}(|\hat{\epsilon}_{S'}(h) - \epsilon_{S'}(h)| \geq \epsilon) \leq 2\exp(-2m^2\epsilon^2)$.

{\bf Similar to Theorem 3 in \cite{ben2010theory}}
\begin{theorem}
Let $\mathcal{H}$ be a hypothesis space of VC dimension $d$. Let $\mathcal{U}_S$, $\mathcal{U}_P$ and $\mathcal{U}_{T}$ be unlabeled samples of size $m'$ each, drawn from $\mathcal{D}_S$, $\mathcal{D}_P$ and $\mathcal{D}_T$, respectively. Let $S$ be the labeled sample of size $m$ generated by drawing $\alpha m$ points from $\mathcal{D}_P$ and $(1 - \alpha)m$ points from $\mathcal{D}_S$ labeled according to $f_P$ and $f_S$. If $\hat{h} \in \mathcal{H}$ is the empirical minimizer of $\hat{\epsilon}_{S'}(h)$ on $S$ and $h^{\ast\ast}_T = \min_{h\in\mathcal{H}}\epsilon_T(h)$ is the target error minimizer, then for any $\delta \in (0, 1)$, with probability at least $1 - \delta$ (over the choice of the samples),
\end{theorem}
\begin{proof}
\begin{equation*}
\begin{split}
\epsilon_T(\hat{h}) & \leq \epsilon_{S'}(\hat{h}) + \frac{\alpha}{2}d_{\mathcal{H}\Delta\mathcal{H}}(\mathcal{D}_{P}, \mathcal{D}_{T}) + \frac{1-\alpha}{2}d_{\mathcal{H}\Delta\mathcal{H}}(\mathcal{D}_{S}, \mathcal{D}_{T}) + \lambda_{S'T} \\
& \leq \hat{\epsilon}_{S'}(\hat{h}) + 2 O(\sqrt{\frac{d}{m}}) + \frac{\alpha}{2}d_{\mathcal{H}\Delta\mathcal{H}}(\mathcal{D}_{P}, \mathcal{D}_{T}) + \frac{1-\alpha}{2}d_{\mathcal{H}\Delta\mathcal{H}}(\mathcal{D}_{S}, \mathcal{D}_{T}) + \lambda_{S'T}\\
& \leq \hat{\epsilon}_{S'}(h^{\ast\ast}) + 2 O(\sqrt{\frac{d}{m}}) + \frac{\alpha}{2}d_{\mathcal{H}\Delta\mathcal{H}}(\mathcal{D}_{P}, \mathcal{D}_{T}) + \frac{1-\alpha}{2}d_{\mathcal{H}\Delta\mathcal{H}}(\mathcal{D}_{S}, \mathcal{D}_{T}) + \lambda_{S'T} \\
& \leq \epsilon_{S'}(h^{\ast\ast}) + 4 O(\sqrt{\frac{d}{m}}) + \frac{\alpha}{2}d_{\mathcal{H}\Delta\mathcal{H}}(\mathcal{D}_{P}, \mathcal{D}_{T}) + \frac{1-\alpha}{2}d_{\mathcal{H}\Delta\mathcal{H}}(\mathcal{D}_{S}, \mathcal{D}_{T}) + \lambda_{S'T}\\
& \leq \epsilon_{T}(h^{\ast\ast}) + 4 O(\sqrt{\frac{d}{m}}) + 2\big(  \frac{\alpha}{2}d_{\mathcal{H}\Delta\mathcal{H}}(\mathcal{D}_{P}, \mathcal{D}_{T}) + \frac{1-\alpha}{2}d_{\mathcal{H}\Delta\mathcal{H}}(\mathcal{D}_{S}, \mathcal{D}_{T}) + \lambda_{S'T} \big)\\
\end{split}
\end{equation*}
We can now replace the $d_{\mathcal{H}\Delta\mathcal{H}}$ with their empirical estimates based on $\mathcal{U}_S$, $\mathcal{U}_P$ and $\mathcal{U}_{T}$.
\end{proof}

{\bf Target error induced by poisoning}
Let $\hat{h}$ be the empirical minimizer of the $\hat{\epsilon}_{S}$ and $\hat{h}'$ be the empirical minimizer of $\hat{\epsilon}_{S'}$. Assuming $\hat{h}$ and $\hat{h}'$ can minimize the error on the clean source completely, then,
\begin{equation*}
\begin{split}
\epsilon_T(\hat{h}) - \epsilon_T(\hat{h}') & \leq \epsilon_{S}(\hat{h}) + \frac{1}{2}d_{\mathcal{H}\Delta\mathcal{H}}(\mathcal{D}_{S}, \mathcal{D}_{T}) + \lambda_{ST} - \epsilon_{S'}(\hat{h}) + \frac{\alpha}{2}d_{\mathcal{H}\Delta\mathcal{H}}(\mathcal{D}_{P}, \mathcal{D}_{T}) + \frac{1-\alpha}{2}d_{\mathcal{H}\Delta\mathcal{H}}(\mathcal{D}_{S}, \mathcal{D}_{T}) + \lambda_{S'T} \\
& = \epsilon_{S}(\hat{h}) - \epsilon_{S'}(\hat{h}') + \frac{2 -\alpha}{2}d_{\mathcal{H}\Delta\mathcal{H}}(\mathcal{D}_{S}, \mathcal{D}_{T}) + \frac{\alpha}{2}d_{\mathcal{H}\Delta\mathcal{H}}(\mathcal{D}_{P}, \mathcal{D}_{T}) + \lambda_{ST} + \lambda_{S'T} \\
\end{split}
\end{equation*}
\if
Thus,
\begin{equation*}
\begin{split}
\epsilon_T(\hat{h'}) - \epsilon_T(\hat{h}) & \geq \alpha\big\{\epsilon_{P}(\hat{h'}) + \frac{1}{2}(d_{\mathcal{H}\Delta\mathcal{H}}(\mathcal{D}_{P}, \mathcal{D}_{T}) - d_{\mathcal{H}\Delta\mathcal{H}}(\mathcal{D}_{S}, \mathcal{D}_{T})) + (\lambda_{PT} - \lambda_{ST})\big\}
\end{split}
\end{equation*}
\fi

\section{Lower Bounds}

\begin{equation*}
    \begin{split}
        &\epsilon_{S'}(h, f_{S'})\\
        & \leq \epsilon_{S'}(h, h^\ast) + \epsilon_{S'}(h^\ast, f_{S'})\\
        & \leq \epsilon_{T}(h, h^\ast) + |\epsilon_{S'}(h, h^\ast) - \epsilon_{T}(h, h^\ast)| + \epsilon_{S'}(h^\ast, f_{S'}) \\
        & \leq \epsilon_{T}(h, f_T) + \epsilon_{T}(h^\ast, f_T) + \alpha|\epsilon_P(h, h^\ast) - \epsilon_{T}(h, h^\ast)| + (1 - \alpha) |\epsilon_S(h, h^\ast) - \epsilon_{T}(h, h^\ast)| + \epsilon_{S'}(h^\ast, f_{S'}) \\
        & \leq \epsilon_{T}(h, f_T) + \frac{\alpha}{2}d_{\mathcal{H}\Delta\mathcal{H}}(\mathcal{D}_{P}, \mathcal{D}_{T}) + \frac{1 - \alpha}{2}d_{\mathcal{H}\Delta\mathcal{H}}(\mathcal{D}_{S}, \mathcal{D}_{T})+ \epsilon_T(h^\ast, f_T) +  \epsilon_{S'}(h^\ast, f_{S'})\\
        & =\epsilon_{T}(h, f_T) + \frac{\alpha}{2}d_{\mathcal{H}\Delta\mathcal{H}}(\mathcal{D}_{P}, \mathcal{D}_{T}) + \frac{1 - \alpha}{2}d_{\mathcal{H}\Delta\mathcal{H}}(\mathcal{D}_{S}, \mathcal{D}_{T})+ \alpha\lambda_{PT} +   (1-\alpha)\lambda_{ST}\\
    \end{split}
\end{equation*}
Thus,

$\epsilon_{T}(h,f_T)\geq\max\big\{0,\epsilon_{S'}(h,f_{S'})-\alpha(\frac{1}{2}d_{\mathcal{H}\Delta\mathcal{H}}(\mathcal{D}_{P},\mathcal{D}_{T})+\lambda_{PT})-(1-\alpha)(\frac{1}{2}d_{\mathcal{H}\Delta\mathcal{H}}(\mathcal{D}_{S},\mathcal{D}_{T})+\lambda_{ST})\big\}$.

{\bf If $\mathcal{D}_P = \mathcal{D}_T$ and $f_P(x) = f_T(x)$}, then
\begin{equation*}
    \begin{split}
    \epsilon_{S'}(h, f_{S'}) - \epsilon_{T}(h, f_{T}) & \leq |\epsilon_{S'}(h, f_{S'}) - \epsilon_{T}(h, f_{T})|\\
    & = (1-\alpha)|\epsilon_{S}(h, f_{S}) - \epsilon_{T}(h, f_{T})|\\
    & \leq (1 - \alpha) (\frac{1}{2}d_{\mathcal{H}\Delta\mathcal{H}}(\mathcal{D}_{S}, \mathcal{D}_{T}) + \lambda_{ST})
\end{split}
\end{equation*}
Thus,
$\epsilon_{T}(h,f_T)\geq\max\big\{0, \epsilon_{S'}(h, f_{S'}) - (1 - \alpha) (\frac{1 }{2}d_{\mathcal{H}\Delta\mathcal{H}}(\mathcal{D}_{S}, \mathcal{D}_{T}) + \lambda_{ST})\big\}$.

{\bf If $\mathcal{D}_P = \mathcal{D}_T$ but $f_P(x) = 1 - f_T(x)$}, then
\begin{equation*}
    \begin{split}
    & \epsilon_{S'}(h, f_{S'}) - \epsilon_{T}(h, f_{T}) \\ 
    & \leq |\epsilon_{S'}(h, f_{S'}) - \epsilon_{T}(h, f_{T})|\\
    & \leq \alpha|\epsilon_{P}(h, f_P) - \epsilon_{T}(h, f_{T})| + (1-\alpha)|\epsilon_{S}(h, f_S) - \epsilon_{T}(h, f_{T})|\\
    & \leq  \alpha|\epsilon_{T}(h, 1 - f_T) - \epsilon_{T}(h, f_{T})| + (1 - \alpha) (\frac{1}{2}d_{\mathcal{H}\Delta\mathcal{H}}(\mathcal{D}_{S}, \mathcal{D}_{T}) + \lambda_{ST})\\
    & \leq  \alpha\mathbb{E}_{\mathcal{D}_T}(|f_T - (1 - f_T)|) + (1 - \alpha) (\frac{1}{2}d_{\mathcal{H}\Delta\mathcal{H}}(\mathcal{D}_{S}, \mathcal{D}_{T}) + \lambda_{ST})\\
    & \leq 2\alpha\mathbb{E}_{\mathcal{D}_T}(|f_T|) + \alpha + (1 - \alpha) (\frac{1}{2}d_{\mathcal{H}\Delta\mathcal{H}}(\mathcal{D}_{S}, \mathcal{D}_{T}) + \lambda_{ST})
\end{split}
\end{equation*}
Thus,
$\epsilon_{T}(h,f_T)\geq\max\big\{0, \epsilon_{S'}(h, f_{S'}) - 2\alpha\mathbb{E}_{\mathcal{D}_T}(|f_T|) - \alpha - (1 - \alpha) (\frac{1}{2}d_{\mathcal{H}\Delta\mathcal{H}}(\mathcal{D}_{S}, \mathcal{D}_{T}) + \lambda_{ST})\big\}$.

\section{Other Bounds}

\begin{theorem}
For a hypothesis $h$,
\begin{equation}
\begin{split}
    2 \epsilon_{T}(h, f_{T}) \leq  \epsilon_{S}(h, f_{S}) + & \epsilon_{P}(h, f_{P}) + d_1(\mathcal{D_{S}}, \mathcal{D_{T}}) + d_1(\mathcal{D_{P}}, \mathcal{D_{T}}) \\
 + \min \big\{ & \mathbb{E}_{\mathcal{D_{S}}}\big[ |f_{S}({\bf x}) - f_{T}({\bf x})|\big], \mathbb{E}_{\mathcal{D_{T}}}\big[ |f_{S}({\bf x}) - f_{T}({\bf x})|\big]\big\} \\
 + \min \big\{ & \mathbb{E}_{\mathcal{D_{P}}}\big[ |f_{P}({\bf x}) - f_{T}({\bf x})|\big], \mathbb{E}_{\mathcal{D_{T}}}\big[ |f_{P}({\bf x}) - f_{T}({\bf x})|\big]\big\}.
\end{split}
\end{equation}
\end{theorem}
\begin{proof}
follows from Lemma~\ref{theorem_1_bendavid}.
\end{proof}

\begin{theorem}
For a hypothesis $h$,
\begin{equation}
\begin{split}
    2 \epsilon_{T}(h, f_{T}) \leq  \epsilon_{S}(h, f_{S}) + & \epsilon_{P}(h, f_{P}) + d_1(\mathcal{D_{S}}, \mathcal{D_{T}}) + d_1(\mathcal{D_{P}}, \mathcal{D_{T}}) \\
 + \min \big\{ & \mathbb{E}_{\mathcal{D_{S}}}\big[ |f_{S}({\bf x}) - f_{T}({\bf x})|\big] + \mathbb{E}_{\mathcal{D_{P}}}\big[ |f_{P}({\bf x}) - f_{T}({\bf x})|\big], \\
 & \mathbb{E}_{\mathcal{D_{T}}}\big[ |f_{P}({\bf x}) - f_{T}({\bf x})|\big] + \mathbb{E}_{\mathcal{D_{T}}}\big[ |f_{S}({\bf x}) - f_{T}({\bf x})|\big]\big\}.
\end{split}
\end{equation}
\end{theorem}
\begin{proof}
follows from the proof of Lemma~\ref{theorem_1_bendavid}.
\end{proof}

\begin{theorem}
For a hypothesis $h$,
\begin{equation}
\begin{split}
    \epsilon_{T}(h, f_{T}) \leq  \min\big\{&
    \epsilon_{S}(h, f_{S})  + d_1(\mathcal{D_{S}}, \mathcal{D_{T}}) 
 + \min \big[\mathbb{E}_{\mathcal{D_{S}}}\big[ |f_{S}({\bf x}) - f_{T}({\bf x})|\big], \mathbb{E}_{\mathcal{D_{T}}}\big[ |f_{S}({\bf x}) - f_{T}({\bf x})|\big] \big], \\
 & 
    \epsilon_{P}(h, f_{P})  + d_1(\mathcal{D_{P}}, \mathcal{D_{T}}) 
 + \min \big[\mathbb{E}_{\mathcal{D_{P}}}\big[ |f_{P}({\bf x}) - f_{T}({\bf x})|\big], \mathbb{E}_{\mathcal{D_{T}}}\big[ |f_{P}({\bf x}) - f_{T}({\bf x})|\big] \big] \big\}.
\end{split}
\end{equation}
\end{theorem}
\begin{proof}
follows from Lemma ~\ref{theorem_1_bendavid}.
\end{proof}

\begin{definition}
The ideal joint hypothesis for source, poison and target domains is a hypothesis that minimizes the error, $h^\ast = \arg\min_{h \in \mathcal{H}} \frac{1}{2}(\epsilon_{S}(h) + \epsilon_{P}(h)) + \epsilon_{T}(h).$ The error of $h^\ast$ is $\lambda_{SPT} = \frac{1}{2}(\epsilon_{S}(h^\ast) + \epsilon_{P}(h^\ast)) + \epsilon_{T}(h^\ast).$
\end{definition}

\begin{theorem}
For a hypothesis $h$,
\begin{equation}
    \epsilon_{T}(h, f_{T}) \leq \frac{1}{2}(\epsilon_{S}(h) + \epsilon_{P}(h))
    + \frac{1}{2}(\frac{1}{2}d_{\mathcal{H}\Delta\mathcal{H}}(\mathcal{D}_{S}, \mathcal{D}_{T}) + \frac{1}{2}d_{\mathcal{H}\Delta\mathcal{H}}(\mathcal{D}_{P}, \mathcal{D}_{T})) 
    + \lambda_{SPT}
\end{equation}
\end{theorem}
\begin{proof}
follows from the proof of Lemma~\ref{theorem_2_bendavid}.
\end{proof}
\fi

\section{Illustrative examples of UDA failure}\label{app:UDA_cases}
In this section, we provide the details of the analysis of the illustrative cases in the main paper.
As described in Sec.~\ref{sec:uda_cases}, the input space $\mathcal{X}$ is in $\mathbb{R}^2$ and
the source and the target distributions are Gaussian mixtures
\[
p_S(x) = 0.5 p_{S+}(x) + 0.5 p_{S-}(x)\;\;\mathrm{and}\;\;p_T(x) = 0.5 p_{T+}(x) + 0.5 p_{T-}(x),
\]
where 
$p_{S+}(x)=\mathcal{N}(\mu_{S+},\sigma^2 I)$, 
$p_{S-}(x)=\mathcal{N}(\mu_{S-},\sigma^2 I)$, 
$p_{T+}(x)=\mathcal{N}(\mu_{T+},\sigma^2 I)$, and
$p_{T-}(x)=\mathcal{N}(\mu_{T-},\sigma^2 I)$.
The true labeling function $f(x)$ in the input space is assumed linear:
$f(x) = I[v^Tx >0]$ where $v$ is the unit normal vector to the decision boundary.
The representation space $\mathcal{Z}$ is in $\mathbb{R}$ and the representation map $g:\mathcal{X} \to\mathcal{Z}$ is linear:  $g(x)=u^Tx$ where $\|u\|=1$.
For the hypothesis, we use $h(z)=\Phi(a z + b)$ which is a linear model $az+b$ followed by a saturating function which can be the cumulative normal distribution $\Phi$ (or others such as the logistic function $l$).

The representation map $g$ induces the distributions $\tilde{p}(z)$ over $\mathcal{Z}$ as
\[
\tilde{p}_S(z)=0.5\mathcal{N}(u^T\mu_{S+},\sigma^2)+0.5\mathcal{N}(u^T\mu_{S-},\sigma^2),\;\;\mathrm{and}
\]
\[
\tilde{p}_T(z)=0.5\mathcal{N}(u^T\mu_{T+},\sigma^2)+0.5\mathcal{N}(u^T\mu_{T-},\sigma^2).
\]
The map $g$ also induces the labeling function $\tilde{f}(z)$ on $\mathcal{Z}$ defined as $\tilde{f}(z)=E_{\mathcal{D}}[f(x)|g(x)=z]$ \cite{ben2007analysis}.
Computing this quantity can be complex in general but is relatively straightforward for a mixture of Gaussians and a simple half-space labeling function $f(x)$.
Following the definition, we have 
\begin{equation}\label{eq:induced_labeling_function}
\tilde{f}(z) = E_{\mathcal{D}}[f(x)|g(x)=z] = \int_{\mathcal{Z}} f(x)\;I[u^Tx = z]\;p(x|z=g(x)) dx.
\end{equation}
In our example, the integral $\int_{\mathbb{R}^2}\;\cdot\; dx$ can be decomposed into $\int_{-\infty}^{\infty} \int_{-\infty}^{\infty}\;\cdot\;dz dw$ where $z$ and $w$ are the coordinates along the rotated axes $u$ and $u^\perp$ (see Fig.~\ref{fig:failure_proof}).

\begin{figure}[tb]
  \centering
    {\includegraphics[width=.5\columnwidth]{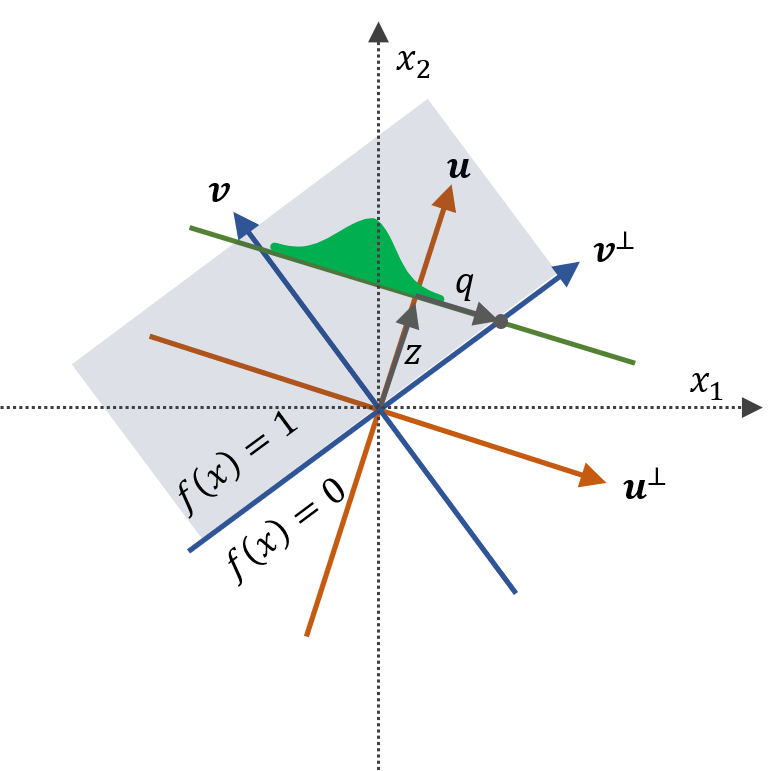}}  
  \caption{This figure provides a visual help for deriving the induced labeling function $\tilde{f}(z)$ in Eq.~\ref{eq:induced_labeling_function}. $u$ is the direction of the 1-D projection $g(z)=u^Tx$, $v$ is the direction of the labeling function $f(x)$ in the input space where we assumed $f(x)=I[v^Tx > 0]$, and $q$ is the intersection of the two lines $v^Tx=0$ and $u^T(x-z u)=0$ projected along the $u^\perp$ direction. Evaluating Eq.~\ref{eq:induced_labeling_function_intermediate} using the help of this figure results in Eq.~\ref{eq:induced_labeling_function_final}. 
  }
  \label{fig:failure_proof}
\end{figure}
We therefore have
\begin{eqnarray}
\tilde{f}(z) &=& \int_{-\infty}^{\infty} \int_{-\infty}^{\infty} I[v^Tx>0]\;I[u^Tx = z]\;p(x|z=g(x)) dz dw \nonumber\\ 
&=& \int_{-\infty}^{\infty} I[v^Tx>0]\;(0.5\mathcal{N}_w(\mu_{+}^Tu^{\perp},\sigma^2I) + 0.5\mathcal{N}_w(\mu_{+}^Tu^{\perp},\sigma^2I)) dw. \label{eq:induced_labeling_function_intermediate}
\end{eqnarray}
This integral can be evaluated as 
\begin{equation}\label{eq:induced_labeling_function_final}
\tilde{f}(z) = \left(
\begin{array}{ll}
0.5 \Phi(\frac{q-\mu_{+}^Tu^\perp}{\sigma}) + 0.5 \Phi(\frac{q-\mu_{+}^Tu^\perp}{\sigma}) & \textrm{if}\;\; v^Tu^\perp<0 \\
0.5 [1-\Phi(\frac{q-\mu_{+}^Tu^\perp}{\sigma})] + 0.5 [1-\Phi(\frac{q-\mu_{+}^Tu^\perp}{\sigma})] & \textrm{if}\;\; v^Tu^\perp>0 \\
0.5(1+sign(z)) & \textrm{if}\;\; v^Tu^\perp=0\;\;\textrm{and}\;\;v^Tu>0 \\
0.5(1-sign(z)) & \textrm{if}\;\; v^Tu^\perp=0\;\;\textrm{and}\;\;v^Tu<0 
\end{array}
\right.
\end{equation}
where $\Phi$ is the cumulative normal distribution and $q$ is the intersection of the two lines $v^Tx=0$ and $u^T(x-z u)=0$ projected along the $u^\perp$ direction. More concretely, 
\[
q = \frac{u_1v_1 + u_2 v_2}{u_1 v_2 - u_2 v_1}z
\]
where $u=[u_1,u_2]^T$ and $v=[v_1,v_2]^T$.

The UDA minimization problem is 
\begin{equation}\label{eq:example_min}
\min_{u,a,b}\; e_S(h) + \lambda D(\tilde{p}_S,\tilde{p}_T) + \eta (\|u\|^2-1)^2,
\end{equation}
where the last term was added to enforce $\|u\|=1$.
For differentiability, we consider the squared loss instead of the absolute loss:
\[
e_S(h) = E_S[(\Phi(az+b) - \tilde{f}_S(z))^2] = \int_{\mathbb{R}} \tilde{p}_S(z) \left(\Phi(az+b)-\tilde{f}_S(z)\right)^2 dz
\]
and also 
\[
D(\tilde{p},\tilde{p}') = \int_{\mathbb{R}} (\tilde{p}(z)-\tilde{p}'(z))^2 dz.
\]
The expectation in $e_S(h)$ can only be computed numerically due to the complex formula for $\tilde{f}(z)$.
On the other hand, the mismatch loss is
\begin{eqnarray*}
D(\tilde{p}_S(z),\tilde{p}_T(z))&=&\int_{\mathbb{R}} (\tilde{p}_S(z) - \tilde{p}_T(z))^2 dz\\
&=&\int_{\mathbb{R}} \left(\frac{0.5}{2\pi \sigma^2}\right)^2\left[e^{-\frac{(z-u^T\mu_{S+})^2}{2\sigma^2}}
+e^{-\frac{(z-u^T\mu_{S-})^2}{2\sigma^2}}-e^{-\frac{(z-u^T\mu_{T+})^2}{2\sigma^2}}-e^{-\frac{(z-u^T\mu_{T-})^2}{2\sigma^2}}\right]^2 dz,
\end{eqnarray*}
which can be computed either numerically or analytically. 

The three cases explained in the main paper are as follows:
\begin{itemize}
    \item[Case 1]: $\mu_{S+}=[-1,1]^T$, $\mu_{S-}=[-1,-1]^T$, $\mu_{T+}=[1,1]^T$, $\mu_{T-}=[1,-1]^T$, $v_S(x) = v_T(x) = [0,1]^T$, $\lambda=10^{-1}$
    \item[Case 2]: $\mu_{S+}=[-1,1]^T$, $\mu_{S-}=[-1,-1]^T$, $\mu_{T+}=[1,-1]^T$, $\mu_{T-}=[1,1]^T$, $v_S(x) = -v_T(x) = [0,1]^T$, $\lambda=10^{-1}$
    \item[Case 3]: $\mu_{S+}=[0,1]^T$, $\mu_{S-}=[0,-1]^T$, $\mu_{T+}=[-1,0]^T$, $\mu_{T-}=[1,0]^T$, $v_S(x) = [0,1]^T$, $v_T(x) = [-1,0]^T$ , $\lambda=10^{-2}$
\end{itemize}
The other shared parameters are $\sigma=1$ and $\eta=10$. The $\lambda$ determines the optimal tradeoff between $E_s$ and $D$ in Eq.~\ref{eq:example_min}.

We solve Eq.~\ref{eq:example_min} numerically using \emph{scipy.optimize.minimize(method=`Nelder-Mead')} function which is stable even if the cost function may be non-differentiable. Starting from random initial conditions and running until convergence, the solution $u$ for both Case 1 and Case 2 converges to $[0,1]^T$. 

For Case 1 (favorable case), we get $\max\{e_S(\tilde{f}_S,\tilde{f}_T),e_T(\tilde{f}_S,\tilde{f}_T)\} < 10^{-3}$ and $e_T(h)<10^{-3}$ which shows UDA was successful.

For Case 2 (unfavorable case), we get $\max\{e_S(\tilde{f}_S,\tilde{f}_T),e_T(\tilde{f}_S,\tilde{f}_T)\}>0.99$ and $e_T(h)>0.99$ which shows UDA was unsuccessful.

For Case 3 (ambiguous case), there is an almost equal chance of 
$u$ converging to $[-0.70,0.72]^T$ or $[0.70,0.72]^T$.
For the former, we get $\max\{e_S(\tilde{f}_S,\tilde{f}_T),e_S(\tilde{f}_S,\tilde{f}_T)\}<10^{-4}$ and $e_T(h)<10^{-3}$ where UDA is successful.
For the latter, we get $\max\{e_S(\tilde{f}_S,\tilde{f}_T),e_S(\tilde{f}_S,\tilde{f}_T)\}\approxeq 0.33$ and $e_T(h) \approxeq 0.33$ where UDA has failed.

\section{Details of the figures explaining the effect of poisoning on UDA methods}\label{app:fig_explanation}
As illustrated in Case 3 of Fig.~\ref{fig:different_cases_for_uda} in Sec.~\ref{sec:uda_cases}, UDA methods can be fooled into producing a representation that causes a large error on the target domain with a small amount of poisoned data. 
The simplest successful poisoning attack to fool UDA methods was shown in Sec.~\ref{sec:attack_1} (wrong-label incorrect-domain poisoning).
In this attack, we added mislabeled data (wrong-label) from the target domain into the source data (incorrect-domain). The left part of Fig.~\ref{fig:explanation_of_attacks_a} shows this setting. 
The right part of Fig.~\ref{fig:explanation_of_attacks_a} shows how the representation learned from discriminator-based UDA methods aligns the incorrect classes closer than the correct ones. Due to the lack of target domain labels, the loss of the discriminator is minimized as long as the green and blue blobs align, regardless of their labels. But to minimize the classification loss on the source domain the representation must classify the poison data correctly. This forces the representation of wrong source and target domain classes to be closer than the correct ones. As a result of this, the source classification loss and domain mismatch loss are minimized but the learned representation still incurs a large target domain error. This is exactly what Case 3 (right) of Fig.~\ref{fig:different_cases_for_uda} illustrated. The t-SNE embeddings in Fig.~\ref{fig:exp_1_a} confirm this on real datasets where representations are learned using popular discriminator-based UDA methods.

To make the poisoning attacks harder to detect we used watermarking-based attacks using poison data that has some features of the target data but still looks like the source data (Fig.~\ref{fig:watermarking_poison}). This setting is illustrated in the left part of Fig.~\ref{fig:explanation_of_attacks_b}. The right part of Fig.~\ref{fig:explanation_of_attacks_b} shows how discriminator-based UDA methods are fooled into producing a representation that fails to generalize on the target domain. Similar to the previous case of wrong-label incorrect-domain poisoning discriminator is optimal when the green and blue blobs align. 
However, since the poison data has incorrect labels the source classification loss prefers to align it with the wrong class (poison data labeled as + is aligned with source class with label +). 
Due to the presence of target features in the poison data (due to watermarking) the representation moves the target domain data closer to the poison data leading to an alignment of wrong source and target domain classes. As the percentage of watermarking increases the poisoning attack becomes more successful (Table~\ref{Table:digits_experiment_2}) indicating that target domain data is being aligned to wrong source domain classes similar to the poison data. Fig.~\ref{fig:exp_2} ((a) and (b)) demonstrate this effect on popular discriminator-based UDA methods.  

To make our attack even stealthier, we consider the effect of using correctly labeled poisoned data on the UDA methods. We generate such poison data by solving a computationally efficient version of the bilevel problem (Eq.~\ref{eq:bilevel_clean_label}), as shown in Eq.~\ref{eq:alternating_clean_label}. The left part of Fig.~\ref{fig:explanation_of_attacks_c} illustrates the poison data generated by solving Eq.~\ref{eq:alternating_clean_label}. The poison data specifically targets a particular target domain test point (shown in purple). Since the poison data is close in the representation space to the target domain test point, UDA methods align it closer to the class of the poison data. This leads to misclassification of the test point. Since the poison data is crafted for a specific test point, they don't have much effect on the entire target domain data. The right part of Fig.~\ref{fig:explanation_of_attacks_c} illustrates this effect and shows that clean labeled poison data can also successfully hurt the performance of UDA methods. 

\section{Effect of poison percentage on attack success with mislabeled poison data}\label{app:poison_percentage}
In this section, we evaluate the effect of using different poison percentages on attack success when mislabeled data is used for poisoning. As can be seen in Tables~\ref{Table:digits_experiment_1} and~\ref{Table:office31_experiment_1}, the success of wrong-label clean-domain poisoning with 10\% poisoned data is very limited. Thus, here we only focus on using a smaller poison percentage to study the attack success of wrong-label wrong-domain poisoning.
The results of the experiment are summarized in Table~\ref{Table:exp1_poison_percentage}. 
For all tasks, the presence of only 6\% poison data causes a significant decrease in the target domain accuracy. 
When the poison percentage is decreased further to only 2\% we still see a drop of at least 20\% in the target accuracy for all methods except CDAN\cite{long2017conditional}. 
The use of a conditional discriminator provides CDAN this robustness. 
However, the success of CDAN is dependent on the quality of the pseudo-labels from the classifier on the target domain data.
Good pseudo-labels provide CDAN a positive reinforcement to align correct source and target domain classes. 
Thus, leading to a failure of poisoning. 
However, as the percentage of poisoned data increases, the classifier begins to easily classify the target domain data into labels intended by the attacker, deteriorating the quality of the pseudo-labels. This provides a negative reinforcement to CDAN causing it to align wrong classes from the source and the target domain. As a result, the poisoning attack becomes successful. Thus, for wrong-label wrong-domain poisoning, increasing the percentage of poison data gradually drives UDA methods from the case favorable to UDA to the unfavorable one.

\begin{table}[tb]
  \caption{Effect of using different percentages of wrong-label incorrect-domain poisoned data on the target domain accuracy when training UDA methods on poisoned source domain data on the Digits tasks (mean$\pm$s.d. of 5 trials).}
  \label{Table:exp1_poison_percentage}
  \centering
  \small
  \resizebox{0.99\columnwidth}{!}{
    \begin{tabular}{c|cc|cc|cc|cc}
    \toprule
    \multirow{2}{*}{Poison$_\mathrm{target}$ (\%)} & \multicolumn{2}{c|}{\makecell{DANN}} & \multicolumn{2}{c|}{\makecell{CDAN}} & \multicolumn{2}{c|}{\makecell{MCD}} & \multicolumn{2}{c}{\makecell{SSL}}\\
    & MNIST $\rightarrow{}$ USPS & USPS $\rightarrow{}$ MNIST & MNIST $\rightarrow{}$ USPS & USPS $\rightarrow{}$ MNIST & MNIST $\rightarrow{}$ USPS & USPS $\rightarrow{}$ MNIST & MNIST $\rightarrow{}$ USPS & USPS $\rightarrow{}$ MNIST \\
    \midrule
    
    0\% (Clean) & 92.17$\pm$0.73 & 92.73$\pm$0.71 & 93.92$\pm$0.97 & 95.94$\pm$0.71 & 89.96$\pm$2.04 & 88.34$\pm$2.50 & 88.69$\pm$1.28 & 82.23$\pm$1.59\\
    \cmidrule{1-9}
    2\% & 63.53$\pm$2.09 & 94.72$\pm$0.63 & 90.54$\pm$0.91 & 88.79$\pm$2.34 & 22.74$\pm$2.17 & 51.02$\pm$3.57 & 65.88$\pm$2.93 & 41.25$\pm$2.32 \\
    
    4\% & 28.39$\pm$4.78 & 34.25$\pm$9.53 & 90.22$\pm$0.74 & 76.55$\pm$2.25 & 2.37$\pm$1.41 & 16.66$\pm$4.73 & 30.82$\pm$1.28 & 28.60$\pm$2.16\\
    
    6\% & 7.32$\pm$4.78 & 12.96$\pm$7.33 & 42.86$\pm$5.09 & 8.61$\pm$4.77 & 2.56$\pm$0.97 & 4.64$\pm$1.34 & 21.29$\pm$2.51 & 18.89$\pm$1.11 \\
    
    8\% & 0.97$\pm$0.44 & 1.63$\pm$0.41 & 7.02$\pm$3.88 & 5.35$\pm$0.94 & 7.04$\pm$0.25 & 4.43$\pm$1.76 & 10.84$\pm$1.52 & 11.11$\pm$2.74 \\
    
    10\% & 0.97$\pm$0.53 & 5.83$\pm$0.82 & 1.92$\pm$0.42 & 2.96$\pm$0.71 & 0.66$\pm$0.16 & 2.07$\pm$0.69 & 7.76$\pm$1.52 & 9.88$\pm$1.07 \\

    \bottomrule
    \end{tabular}
    }
\end{table}

\begin{figure}[tb]
  \centering
  \subfigure[DANN]{\includegraphics[width=0.24\columnwidth]{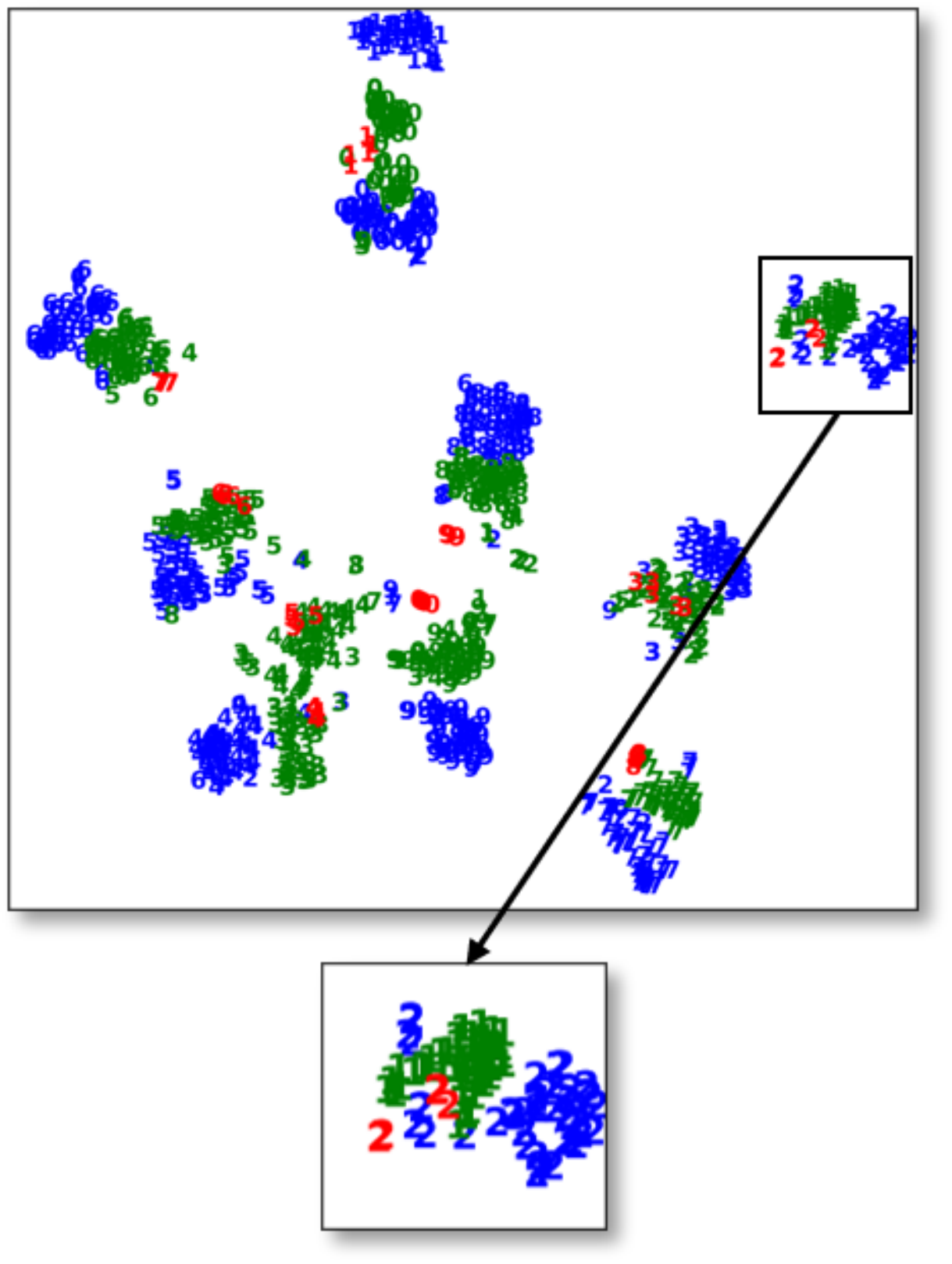}}
  \subfigure[CDAN]{\includegraphics[width=0.24\columnwidth]{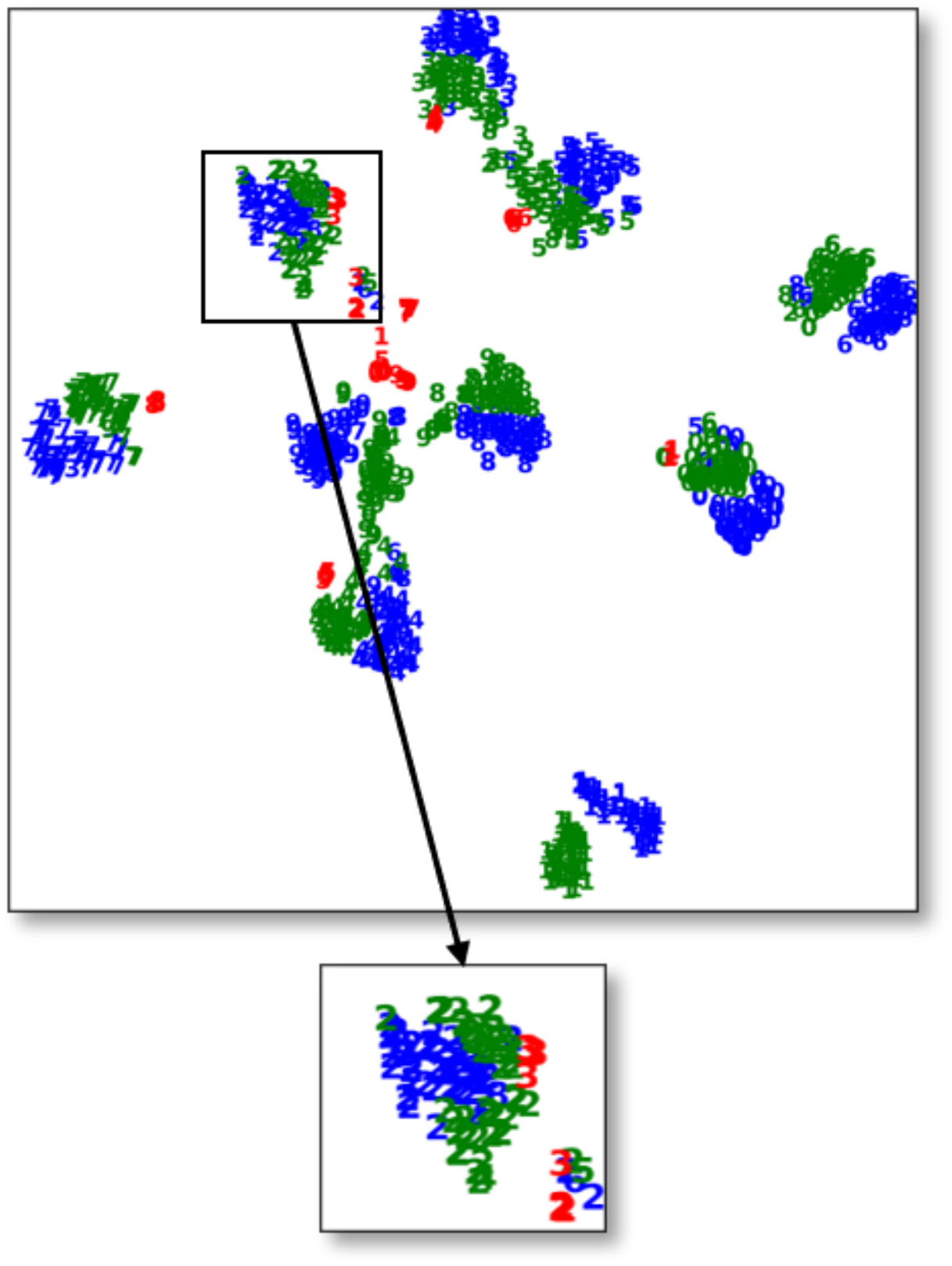}}
  \subfigure[MCD]{\includegraphics[width=0.24\columnwidth]{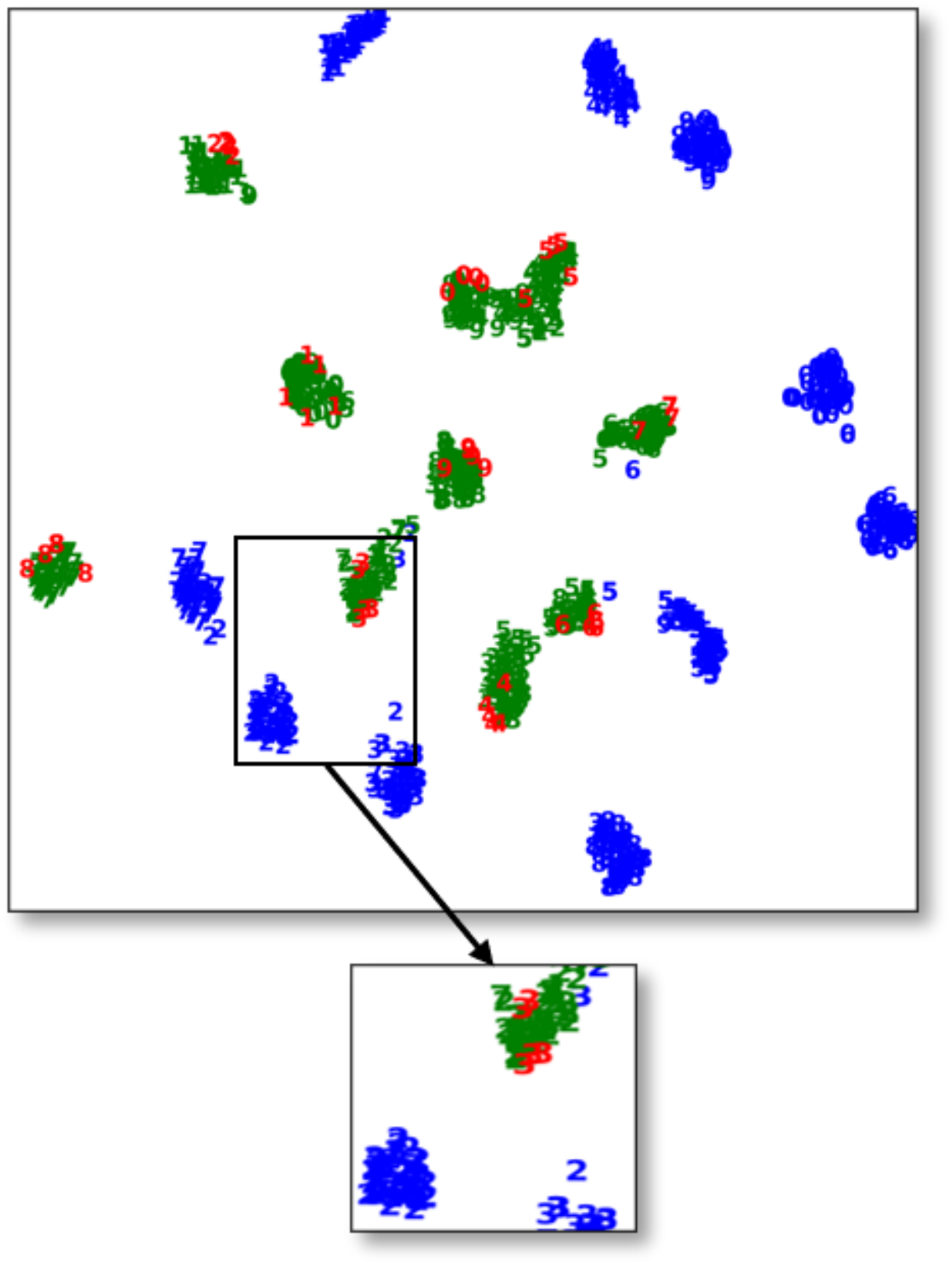}}
  \subfigure[SSL]{\includegraphics[width=0.24\columnwidth]{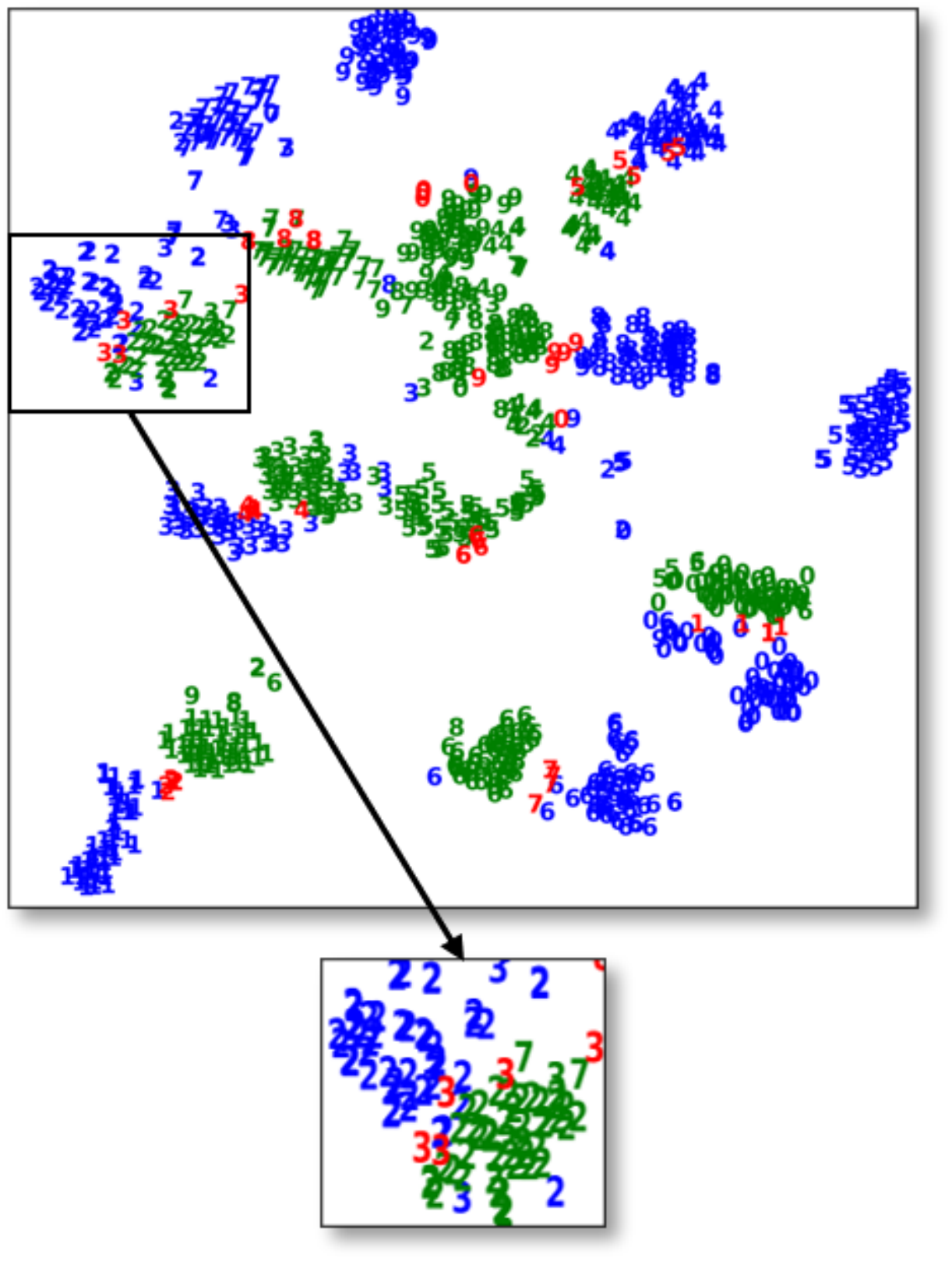}}
  \includegraphics[width=0.3\textwidth]{images/Legend_2.pdf}
  \caption{(Best viewed in color). t-SNE embedding of the data in the representation space (for MNIST $\rightarrow{}$USPS task) learned using DANN, CDAN, MCD, and SSL on source domain data poisoned with watermarked ($\alpha=0.3$) data. Successful poisoning aligns the wrong classes for discriminator-based approaches, as seen in (a) with DANN. Poisoning fails against CDAN because of the pseudo-labels being correct on the target data (b). For MCD, we see 20 distinct clusters highlighting the failure of the method at detecting and aligning target domain data (c). For SSL, the poison data has prevented the correct classes from having very similar representations (d). The failure of most UDA methods with a small amount of watermarked data makes our attack practical and raises serious concerns about the success of these methods.}
  \label{fig:exp_2}
\end{figure}

\section{Bilevel formulation for clean-label attacks}\label{app:clean_label_bilevel}
In this section, we present the bilevel formulation for a clean-label data poisoning attack against UDA methods.
Let $u=\{u_1,...,u_n\}$ denote the poisoned data and $\hat{\mathcal{D}}^\mathrm{val_{target}} = \{(x_i^\mathrm{val_{target}},y_i^\mathrm{val_{target}})\}_{i=1}^{N_\mathrm{val_{target}}}$, be a small set of labeled target domain data accessible to the attacker. 
To ensure a clean label, each poison point $u_i$ must have a bounded perturbation from a base point
$x_i^\mathrm{base}$ i.e, $\|u_i - x_i^\mathrm{base}\| =\|\delta_i\| \leq \epsilon$ and has label of the base i.e., $y_i^\mathrm{base}$. 
Thus, $\hat{\mathcal{D}}^\mathrm{poison} = \{(u_i, y_i^\mathrm{base})\}_{i=1}^{N_{\mathrm{poison}}}$.
The clean-label poison data $u$ is such that when the victim uses $\hat{\mathcal{D}}^\mathrm{source} \bigcup \hat{\mathcal{D}}^\mathrm{poison}$ and $\hat{\mathcal{D}}^\mathrm{target}$ for UDA, 
the accuracy on $\hat{\mathcal{D}}^\mathrm{val_{target}}$ is minimized. 
The bilevel formulation for this attack is as follows:
\begin{equation}
        \max_{u \in \mathcal{U}}\;\; \mathcal{L}(\hat{\mathcal{D}}^\mathrm{val_{target}}; \theta^\ast) \;\;
        \mathrm{s.t.}\;\; \theta^* = \arg\min_{\theta}\; \mathcal{L}_\mathrm{UDA}(\hat{\mathcal{D}}^\mathrm{clean} \bigcup \hat{\mathcal{D}}^\mathrm{poison}, \hat{\mathcal{D}}^\mathrm{target}; \theta).
    \label{eq:bilevel_clean_label}
\end{equation}
The solution to the lower-level problem $\theta^\ast$ are the parameters of the generator and the classifier learned from using a UDA method on the poisoned source domain data and unlabeled target domain data. 
Solving bilevel optimization problems \cite{huang2020metapoison, mehra2019penalty, mehra2020robust} to generate clean-label poison data has previously been shown to be effective. We used an alternating optimization to avoid the computational complexity of solving the bilevel optimization (Eq.~\ref{eq:alternating_clean_label}). However, we believe the attack success can be boosted by solving the bilevel formulation proposed in Eq.~\ref{eq:bilevel_clean_label} and is left for future work.

\section{Additional experiments}\label{app:additional_experiments}
\subsection{Effect of changing the percentage of poison data and divergence between the domains}
We use a simple two moons dataset to illustrate the effect of changing both the poison percentage and the divergence between the two domains. 
The target domain data is the translated version of the source domain data along the x-axis. 
The amount of translation is used to measure the divergence between the source and the target domains.
For this experiment, we use mislabeled target domain data as our poison points.
This is referred to as wrong-label incorrect-domain poisoning in the main paper. The results of this experiment are present in Table~\ref{Table:two_moons}.
Consistent with the results present in rows marked with Poison\_{target} of Table 1 and 2 of the main paper and Table~\ref{Table:exp1_poison_percentage}, we see that increasing the divergence between the source and target domains makes it easy to poison UDA methods with small amounts of poisoned data. Moreover, at a fixed divergence level increases the amount of poisoned data leads to a larger drop in the target domain accuracy.

\begin{table}[tb]
  \caption{Effect of using different percentages of wrong-label incorrect-domain poisoned data on the target domain accuracy when the divergence (D) between the source and target domain is changed. (mean$\pm$s.d. of 5 trials).}
  \label{Table:two_moons}
  \centering
  \small
  \resizebox{0.8\columnwidth}{!}{
    \begin{tabular}{c|c|c|c|c}
    \toprule
    \multirow{2}{*}{Method} &  \multirow{2}{*}{Dataset} & \multicolumn{3}{c}{Target domain accuracy(\%)}  \\
    &  & D=0.25 & D=0.5 & D=0.75 \\
    \midrule
    \multirow{3}{*}{Source Only} & Clean & 99.60$\pm$0.01 & 87.92$\pm$0.96 & 68.4$\pm$0.13 \\
    & Poisoned (5\%) & 79.76$\pm$2.94 & 55.24$\pm$4.42 & 59.44$\pm$2.77\\
    & Poisoned (10\%) & 53.16$\pm$2.44 & 46.52$\pm$3.51 & 42.56$\pm$2.65\\
    \midrule
    \multirow{3}{*}{DANN} & Clean & 98.48$\pm$1.45 & 97.24$\pm$1.27 & 71.80$\pm$3.92\\
    & Poisoned (5\%) & 95.36$\pm$2.54 & 83.08$\pm$1.43 & 62.72$\pm$3.63 \\
    & Poisoned (10\%) & 85.52$\pm$10.1 & 65.56$\pm$8.44 & 40.92$\pm$5.97\\
    \midrule
    \multirow{3}{*}{CDAN} & Clean & 100$\pm$0.0 & 94.52$\pm$1.31 & 76.08$\pm$4.49\\
    & Poisoned (5\%) & 91.36$\pm$2.98 & 77.48$\pm$6.23 & 64.28$\pm$4.95 \\
    & Poisoned (10\%) & 87.12$\pm$1.35 & 73.16$\pm$0.81 & 46.04$\pm$8.73 \\
    \midrule
    \multirow{3}{*}{MCD} & Clean & 100$\pm$0.0 & 91.88$\pm$2.35 & 69.28$\pm$2.77 \\
    & Poisoned (5\%) & 79.68$\pm$5.52 & 70.48$\pm$10.69 & 58.88$\pm$9.96 \\
    & Poisoned (10\%) & 65.81$\pm$2.66 & 44.28$\pm$5.73 & 40.96$\pm$16.4 \\
    \bottomrule
    \end{tabular}
    }
\end{table}

{\bf Effect of poisoning on \cite{long2015learning}:}
Following the original and follow-up works on MMD \cite{combes2020domain}, we also evaluated the effect of our poisoning attacks against this method using the Office-31 dataset. 
The results of poisoning on DAN (Table~\ref{Table:dan}) with 10\% mislabeled target domain data, consistent with the other results shown in Table~\ref{Table:office31_experiment_1} of the main paper, demonstrate the effectiveness of our poisoning attack against this method. 
We used the Pytorch version of the implementation for DAN available from the official code \cite{combes2020domain} to generate these results.

\begin{table}[tb]
  \caption{Decrease in the target domain accuracy for DAN trained on poisoned source domain data (with poisons sampled from the target domain) compared to accuracy attained with clean data on the Office tasks (mean$\pm$s.d. of 3 trials).}
  \label{Table:dan}
  \centering
  \small
  \resizebox{0.8\columnwidth}{!}{
    \begin{tabular}{c|cccccc}
    \toprule
    Dataset & 
    A $\rightarrow{}$ D & A $\rightarrow{}$ W & D $\rightarrow{}$ A 
    & D $\rightarrow{}$ W & W $\rightarrow{}$ A & W $\rightarrow{}$ D\\
    \midrule
    Clean & 82.3 &80.1 &68.9 &98.0 &66.1 &99.0\\
    Poisoned & 59.8 &51.4 &6.7 &53.7 &7.6 &79.9\\
    \bottomrule
    \end{tabular}
    }
\end{table}

{\bf Effect of poisoning on \cite{shu2018dirt}:}
Here we present the results of our poisoning attack against the DIRT-T method proposed by \cite{shu2018dirt}, using the Digits dataset. 
This method uses virtual adversarial training and conditional entropy minimization whose effectiveness is contingent on the quality of the pseudo-labels of the target domain data. 
In the main paper, we presented results of using a similar method, CDAN\cite{long2017conditional}, which relied on the idea of using pseudo-labels for the target domain data in the discriminator.
Consistent with our results of CDAN presented in Table~\ref{Table:digits_experiment_1}, we see our poisoning attacks (Table~\ref{Table:dirt}) are effective at reducing the target domain accuracy obtainable by using the DIRT-T method. 
We used 10\% mislabeled target domain data as our poisons (same as that used in Table~\ref{Table:digits_experiment_1}) and used a similar setting to the official implementation of the paper for our evaluation. 

\begin{table}[tb]
  \caption{Decrease in the target domain accuracy for DIRT-T trained on poisoned source domain data (with poisons sampled from the target domain) compared to accuracy attained with clean data on the Digits tasks (mean$\pm$s.d. of 5 trials). 
  }
  \label{Table:dirt}
  \centering
  \small
  \resizebox{0.9\columnwidth}{!}{
    \begin{tabular}{c|cccc}
    \toprule
    Data & 
    SVHN $\rightarrow{}$ MNIST & MNIST $\rightarrow{}$  MNIST\_M & MNIST $\rightarrow{}$ USPS 
    & USPS $\rightarrow{}$ MNIST \\
    \midrule
    Clean & 92.45$\pm$0.46 & 91.41$\pm$0.16& 98.15$\pm$0.28 & 98.13$\pm$0.09\\
    Poisoned & 0.09$\pm$0.02 &0.37$\pm$0.01 &4.62$\pm$4.33 &0.31$\pm$0.24 \\
    \bottomrule
    \end{tabular}
    }
\end{table}

{\bf Effect of poisoning on \cite{hoffman2018cycada}:}
This work uses a Cycle GAN coupled with a semantic loss dependent on the pseudo-labels for the target domain data, to enforce consistency of the two domains in the pixel space as well as in the representation space. 
When using the poisoned data to train the cycle GAN, we found that it failed to generate good transformations of the source domain. 
However, it could not be concluded whether the failure of cycle GAN was due to poisoning or due to hyperparameters.  
Thus, in our poisoning experiments, we omit the training of the cycle GAN and rely on the transformed version of the source domain images provided in the official repository. 
We treat these transformed images as our original source domain images. 
We present the results of poisoning only the feature level domain adaptation in Table~\ref{Table:cycada} (assuming cycle GAN is trained on clean data). 
The high target accuracy of training using only the transformed source domain data (transformed source only) compared to the target accuracy obtained using feature-level domain adaptation on clean data implies a high similarity between the transformed source and the target domains. 
Due to this high similarity between the transformed source and the target domains, our poisoning attacks (using 10\% mislabeled target domain data as poisoned data) have limited effect. 
For USPS $\rightarrow{}$MNIST, poisoning with 10\% of MNIST training data (6000 images) is comparable to the size of USPS training data (7291 images) and thus we see a significant reduction in the target domain accuracy. 
These results are consistent with the results presented in the main paper, especially on tasks W $\rightarrow{}$D and D $\rightarrow{}$W in Table~\ref{Table:office31_experiment_1}, where poisoning fails since the source and target domains are very similar (indicated by high source only performance). 

\begin{table}[tb]
  \caption{Decrease in the target domain accuracy for CYCADA trained on poisoned source domain data (with poisons sampled from the target domain) compared to accuracy attained with clean data on the Digits tasks (mean$\pm$s.d. of 5 trials). 
  }
  \label{Table:cycada}
  \centering
  \small
  \resizebox{0.9\columnwidth}{!}{
    \begin{tabular}{c|cccc}
    \toprule
    Method & 
    SVHN $\rightarrow{}$ MNIST & MNIST $\rightarrow{}$ USPS 
    & USPS $\rightarrow{}$ MNIST \\
    \midrule
    Transformed Source Only (Clean data) &
74.5$\pm$0.3 &
95.6$\pm$0.2 &
96.4$\pm$0.1 \\
\midrule
Feature level (Clean data) &
90.4$\pm$0.4 &
95.6$\pm$0.2 &
96.5$\pm$0.1 \\

Feature level (Poisoned data) &
84.3$\pm$2.3 &
93.3$\pm$0.5 &
60.7$\pm$3.5 \\
    \bottomrule
    \end{tabular}
    }
\end{table}

\section{Additional related work}\label{app:additional_related_work}
Previous work \cite{zhao2019learning}, presented an information-theoretic lower bound to explain the failure of learning a domain invariant representation when marginal label distributions differ across the source and target domains. 
In comparison, our lower bound does not require any assumption on the data distributions. Our bound presents a necessary condition for successful learning in the UDA setting. 
In particular, our lower bound shows that a UDA method may succeed or fail at target generalization i.e. the term $max \ (e_S( \tilde{f}_S, \tilde{f}_T ), \ e_T ( \tilde{f}_S, \tilde{f}_T ))$ in our lower bound (Eq.~\ref{eq:lower bound}) can be small or large, even when a domain invariant representation is learnt that minimizes source error.

Unlike the bounds presented by previous works \cite{zhao2019learning,johansson2019support} which fail to provide insights into the behavior of UDA when marginal label distributions are the same across the two domains, our bound remains tight.  
In particular, the information-theoretic lower bound of \cite{zhao2019learning} (Theorem 4.3 of \cite{zhao2019learning}) becomes vacuous when $d_{JS}(p^Y_S, p^Y_T) = 0$. 
However, our lower bound will be non-vacuous in this scenario as long as UDA methods are used i.e., $e_S$ and $D_1(\tilde{p}_S, \tilde{p}_T)$ are minimized.
A concrete example where our bound is tighter than the bound of Theorem 4.3 of \cite{zhao2019learning} is their example in section 4.1 where the true target risk is 1. 
As discussed in their paper (last paragraph below Corollary 4.1), their lower bound on the example is vacuous and says that the target risk will be greater than 0. 
In comparison, our lower bound in Theorem 1 says that the target risk is greater or equal to 1 which is tighter and much more informative. 

Compared to the previous work of \cite{wu2020representation} which decomposed the target risk into source risk, representation conditional label divergence, and representation covariate shift and explained the reason for the failure of DANN to be its inability to account for representation conditional label divergence. 
Our lower bound suggests a similar explanation for the failure of DANN i.e, without access to any labeled target domain data, DANN may learn a representation that induces labeling functions $(\tilde{f}_S)$ and $(\tilde{f}_T$ which do not agree with each other on the source and target domains. 
This difference in the induced labeling functions is captured by $max (e_S( \tilde{f}_S, \tilde{f}_T ), \ e_T ( \tilde{f}_S, \tilde{f}_T ))$ term in our lower bound. 
Our explanation for UDA failure also extends beyond DANN as we illustrated through our data poisoning attacks, which are concrete ways to lead UDA algorithms to learn a representation that incurs high error on the target domain.

\section{Details of the experiments}\label{app:experiments_details}
All codes are written in Python using Tensorflow/Keras and were run on Intel Xeon(R) W-2123 CPU with 64 GB of RAM and dual NVIDIA TITAN RTX. Dataset details and model architectures used are described below.

\subsection{Dataset description}
Here we describe the details of the datasets used for the Digits and Office-31 tasks.
\\
\\
{\bf Digits:} 
For this task, we use 4 datasets: MNIST, MNIST\_M, SVHN, and USPS. 
We evaluate four popular tasks under this, namely, SVHN $\rightarrow{}$MNIST, MNIST $\rightarrow{}$MNIST\_M, MNIST $\rightarrow{}$USPS and USPS $\rightarrow{}$MNIST. For SVHN $\rightarrow{}$MNIST, we train on 73,257 images from SVHN and 60,000 images from MNIST while testing on 10,000 MNIST images. For MNIST $\rightarrow{}$MNIST\_M, we use 60,000 from MNIST and MNIST\_M for training and test on 10,000 MNIST images. Lastly, for MNIST $\rightarrow{}$USPS and USPS $\rightarrow{}$MNIST, we use 2,000 images from MNIST and 1,800 images from USPS for training. We test on the 10,000 MNIST images and 1,860 USPS images.
\\\\
{\bf Office-31:}
The dataset contains a total of 4110 images belonging to 31 categories from 3 domains: Amazon (A), DSLR(D), and Webcam(W). We evaluate the performance of UDA on all six tasks, namely, A $\rightarrow{}$D, A $\rightarrow{}$W, D $\rightarrow{}$A, D $\rightarrow{}$W, W $\rightarrow{}$A, W $\rightarrow{}$D.\\

\subsection{Model architecture}
Here we describe the model architectures used for different tasks. To fairly compare the performance of different UDA methods and eliminate the effect of architecture changes in improving the performance of different methods, we make use of similar model architectures for different methods, as described below. The effectiveness of these architectures has also been shown by previous works.

{\bf Digits:}
The architectures used for MNIST $\rightarrow{}$MNIST\_M, MNIST $\rightarrow{}$USPS and USPS $\rightarrow{}$MNIST involves a shared convolution neural network. The output of this shared network is fed into a softmax classifier and the discriminator. The architecture of the shared network consists of a convolution layer with a kernel size of 5x5, 20 filters, and ReLU activation, followed by a max-pooling layer of size 2x2. This is followed by another convolution layer with a 5x5 kernel, 50 filters, and ReLU activation followed by similar max pooling and a dropout. Then we have a fully connected layer with ReLU activation of size 500 followed by a dropout layer. For the discriminator, we use two dense layers with 500 units each followed by a ReLU and a dropout layer. This is followed by a 2 unit softmax layer.
For MCD, we use the following architecture for the generator on MNIST $\rightarrow{}$MNIST\_M task. A convolution layer with a kernel size of 5x5, 32 filters, and ReLU activation, followed by a max-pooling layer of size 2x2. This is followed by another convolution layer with a 5x5 kernel, 48 filters, and ReLU activation followed by a similar max-pooling layer. We use 2 dense layers for the classifier with 100 units followed by ReLU activation and dropout layers. This is followed by the softmax layer. Unlike the original work MCD\cite{saito2018maximum}, we do not use batch normalization layers in these tasks to make architectures consistent across different methods.  
\begin{wrapfigure}[10]{r}{0.4\textwidth}
\vspace{-0.1cm}
\includegraphics[width=0.4\columnwidth]{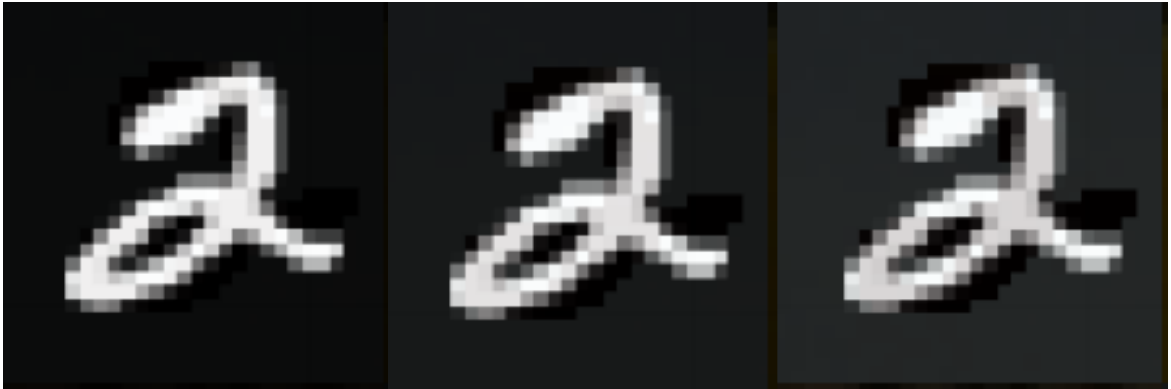}
\vspace{-0.33cm}
\caption{Watermarked poison data for MNIST $\rightarrow{}$ MNIST\_M task with $\alpha$ in \{0.05, 0.10, 0.15\}.}
\label{fig:watermarking_poison}
\end{wrapfigure}
For SVHN $\rightarrow{}$MNIST we use the following architecture for the generator. A convolution layer with a kernel size of 5x5, 64 filters, the stride of 2 followed by batch normalization, dropout, and ReLU activation layer. This is followed by another convolution layer with a kernel size of 5x5, 128 filters, the stride of 2 followed by batch normalization, dropout, and ReLU activation layer. Then another convolution layer with a kernel size of 5x5, 256 filters, the stride of 2 followed by batch normalization, dropout, and ReLU activation layer. This is followed by a dense layer with 512 units followed by batch normalization, ReLU activation, and a dropout layer. We use the softmax layer for classification. For the discriminator, we use two dense layers with 500 units each followed by a ReLU and a dropout layer. This is followed by a 2 unit softmax layer.  For MCD, we use the same architecture for the generator except that we use max-pooling instead of convolution layers with stride 2 to downsample the representation. The classifier uses the output of the generator and feeds into a dense layer with 256 units followed by batch normalization and ReLU activation layers. This is followed by a softmax layer.

{\bf Office-31:} 
For office experiments, we use the publicly available code of the work\footnote{\url{https://bit.ly/34EFb52}} \cite{combes2020domain} and supply the poisoned data by adding them to the input files being used by the code. We use all default options of the code and use DAN, CDAN, IW-DAN, IW-CDAN algorithms. This is done to eliminate the effect of hyperparameters on the performance of the UDA algorithms on the Office-31 dataset and be able to fairly compare the performance of poisoning. To obtain the representation trained only on the source domain data, we initialize a ResNet50 model with weight pre-trained on Imagenet. We then update the representation by training on respective source domain data for different tasks.

\begin{figure}[tb]
  \centering
    \subfigure[Base data chosen from the source domain]{\includegraphics[width=0.45\columnwidth, height=2.5cm]{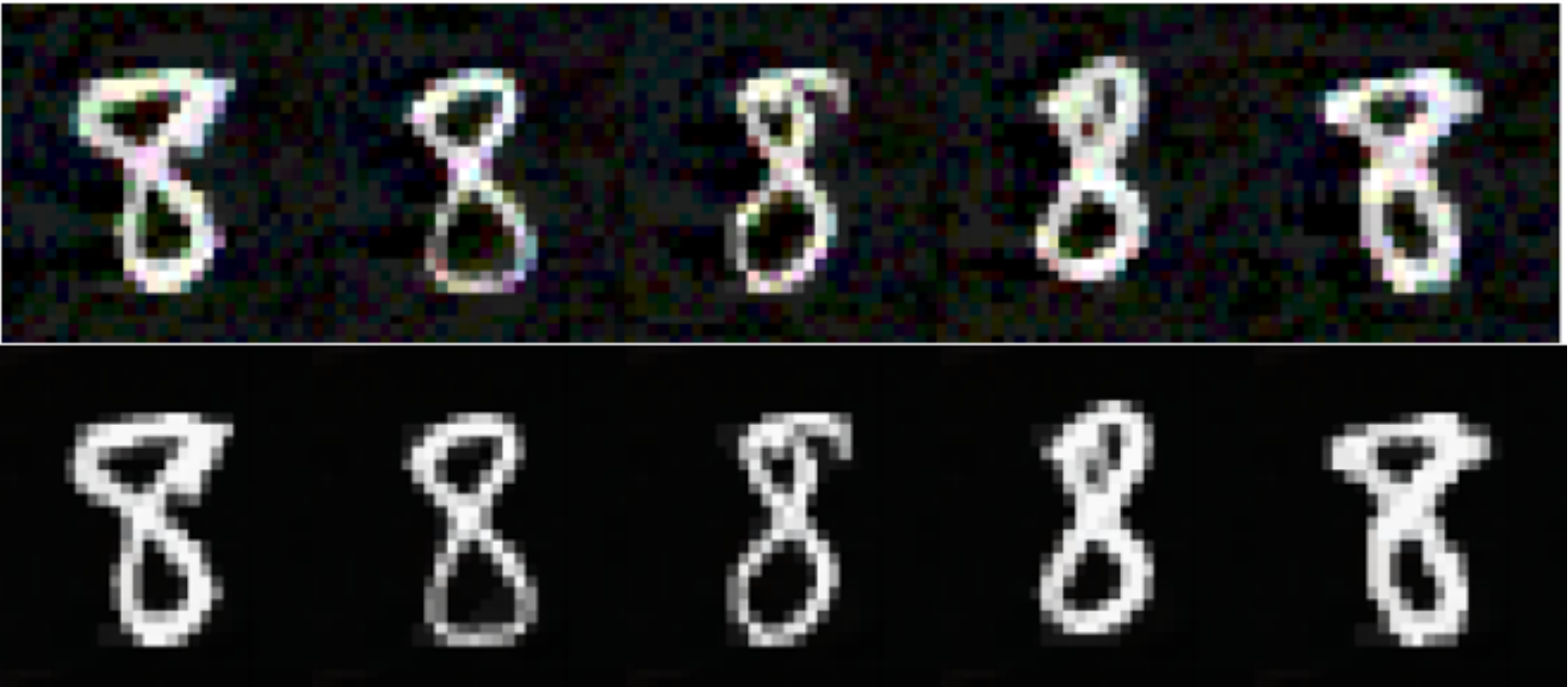}}
    \subfigure[Base data chosen from the target domain]{\includegraphics[width=0.45\columnwidth, height=2.5cm]{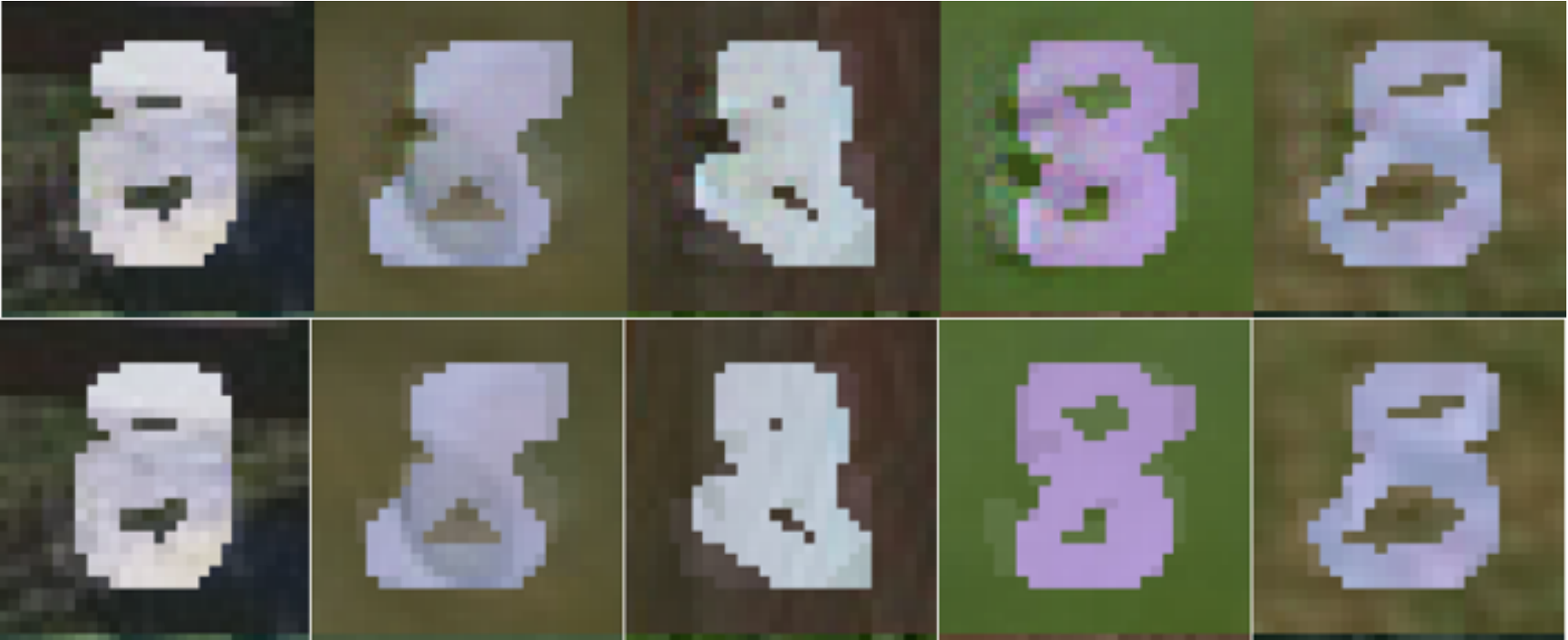}}  
  \caption{Poison data (top rows) obtained after solving Eq.~\ref{eq:alternating_clean_label} by using DANN as the UDA method, with base data (bottom rows) initialized from the source domain (left) and the target domain (right). Attack success with poison data initialized from the target is significantly higher than the attack success obtained with poison data initialized from the source, from under the same maximum permissible distortion constraint ($\epsilon=0.1$ in $\ell_\infty$ norm) as seen in Fig.~\ref{fig:clean_label_poison}. 
  }
  \vspace{-0.5cm}
  \label{fig:clean_label_poison_data}
\end{figure}

\subsection{Clean-label attack on MNIST $\rightarrow{}$MNIST\_M}
For this experiment, 1\% poison data is used to prevent the alignment of a target test point to its correct class. We test the attack on the binary classification problems (3 vs 8). Two approaches to initializing the poison data are evaluated. In the first approach, the poison data is initialized from the source domain data, and in the second approach, it is initialized from the target domain data. In both cases, the poison is picked from the class opposite to the true class of the target test point. Moreover, the poison data is initialized using the points closest in the input space to the target test point. The poison data obtained after solving Eq.~\ref{eq:alternating_clean_label} is added to the source domain data and UDA methods are retrained from scratch. The attack is considered successful if the target test point is misclassified after this retraining. For the results shown in Fig.~\ref{fig:clean_label_poison}, we randomly targeted 20 points and obtained poison data corresponding to each UDA method. Attack success is reported after evaluating UDA methods on five random initializations by adding the generated poison data in the source domain. To control the amount of maximum distortion between experiments, we add a constraint on the maximum permissible distortion to poison data using $\ell_{\infty}$ norm and use a value of $\epsilon=0.1$. The poison data obtained after solving the optimization with base data chosen from the source and target domains with DANN as the UDA method are shown in Fig.~\ref{fig:clean_label_poison_data}. To generate poison data that remains effective even after UDA methods are trained from scratch, we make use of multiple randomly initialized networks during poison generation. Following the work \cite{huang2020metapoison}, we reinitialize the models at different points during optimization. This re-initialization scheme helps train UDA methods with different random initializations and for a different number of epochs making the poison data more resilient to initialization change that can happen at test-time.

\if0
\subsubsection{Analysis for DANN}

A minimax formulation of DANN.
\[
\min_{\theta_f}\left[\min_{\theta_y} L_{y}(\theta_f, \theta_y) - \lambda \min_{\theta_d} L_{d}(\theta_f, \theta_d)\right].
\]
Linear case: the representation map, the classifier, and the discriminator are all linear.

Let $(U,V,W)$ be matrices corresponding to $(\theta_f,\theta_d,\theta_y)$ whose size are
$(D\times d), (d\times 2), (d\times \#\mathrm{class})$ respectively.
Use the $L_2$-loss 
\[
L_d(U,V):=E_{S\cup T}[\|V^TU^Tx-z\|^2],\;\mathrm{and}\;\;
L_y(U,W):=E_{S}[\|W^TU^Tx-y\|^2],
\]
where $z$ and $y$ represent the domain and the class of the sample $x$ and can be $\pm 1$ for binary cases or one-hot encoded vectors.

Analysis similar to \cite{Hamm17jmlr}.

The optimal classifier and the optimal discriminator can be written in closed form as
\[
\hat{V} = \arg\min_V L_d = (U^T \EB[xx^T]U)^{-1}U^T \EB[xz^T]\;\;\;\mathrm{and}
\]
\[
\hat{W} = \arg\min_W L_y = (U^T\ES[xx^T]U)^{-1}U^T \ES[xy^T].
\]
We assume that $E[xx^T]$ is nonsingular.

The corresponding min values are
\[
\min_{V} L_d = Tr\left[-(U^T\EB[xx^T]U)^{-1}U^T \EB[xz^T]\EB^T[xz^T] U + \EB[zz^T] \right]
\]
and 
\[
\min_{W} L_y = Tr\left[-(U^T\ES[xx^T]U)^{-1}U^T\ES[xy^T]\ES^T[xy^T] U + \ES[yy^T]\right].
\]
Ignoring the constant terms, the outer minimization of the representation map becomes
\[
\min_U Tr\left[ 
-(U^T\ES[xx^T]U)^{-1}U^T \ES[xy^T]\ES^T[xy^T] U + \lambda (U^T\EB[xx^T]U)^{-1}U^T \EB[xz^T]\EB^T[xz^T] U]
\right].
\]

To make the analysis simpler, assume the two marginal distributions are positioned symmetrically ($E_S[X] = -E_T[x]$) and have the same covariance $E_S[(x-E_S[x])(x-E_S[x])^T] = E_T[(x-E_T[x])(x-E_T[x])^T]$.
(The problem can still be solved numerically even without these conditions.)
If the prior probability of a training point belonging to the source domain is $\rho$, then $P_{S\cup T}(x) = \rho P_S(x) + (1-\rho) P_T(x)$ and therefore
$\EB[xx^T] = \rho \ES[xx^T] + (1-\rho)\ET[xx^T]=\ES[xx^T]$. 

Then the outer minimization of the representation map can be simplified as
\[
\min_U \Phi(U):=Tr\left[(U^T\ES[xx^T]U)^{-1}U^T C_{xyz} U\right],
\]
where $C_{xyz}:=-\ES[xy^T]\ES^T[xy^T] + \lambda \EB[xz^T]\EB^T[xz^T]$.

Let $Q = C^{1/2}_{xx}U$ be a $D\times d$ full-rank matrix. Then the problem can be rewritten as
\[
\min_U Tr\left[ 
(Q^TQ)^{-1}Q^TC_{xx}^{-1/2}C_{xy}C_{xy}^T C_{xx}^{-1/2}Q - (U^TC'_{xx}U)^{-1}U^T C'_{xz}C'_{xz}^T U]
\right].
\]
Furthermore, the min value Eq.~\ref{} is invariant to the right multiplication of $U$ by any
$d\times d$ nonsingular matrix $R$. So choose $R$ such that $Q^TQ=R^TU^TC_{xx}UR = I_d$ without loss of generality.

\subsubsection*{Aside}
\JH{(Maybe the solution is not an eigenvalue problem. Alternatively, we can look at the angle between $\nabla_U \Phi$ and $\nabla_U e(f_S,f_T)$.)}

Let $\Phi(u):=\frac{u^TAu}{u^TBu}$. Then $\nabla_u \Phi(u)=\frac{2}{u^TBu}(Au -\Phi(u)Bu)$.

So what is $\nabla_u e_{S/T}(\tilde{f}_S,\tilde{f}_T)$?
$e(f_S,f_T) := E_{\tilde{D}} \left| \tilde{f}_S(z) - \tilde{f}_T(z)\right|$ and
$\tilde{f}(z):= E_D[f(x) | g(x)=z]= \int f(x) p(x) I[g(x)=z] dx$, so ...
\[
e(\tilde{f}_S,\tilde{f}_T) := E_{\tilde{D}} \left| 
\int (f_S(x) - f_T(x)) p(x) I[g(x)=z] dx \right|
= \int \left| \int (f_S(x) - f_T(x)) p(x) I[g(x)=z] dx \right| p(z) dz.
\]
UNFINISHED ...
Difficult to get directly. However if $g(x)=U^Tx$ and $p(x)=.5 \mathcal{N}(\mu_{S+},\sigma^2I) + .5\mathcal{N}(\mu_{S-},\sigma^2 I)$, 
then $\tilde{f}(x)=\int f(x) p(x) I[g(x)=z] dx = $ 

By the way, $\int p(z) f(z) dz = \int p(x) f(g(x)) dx.$
\fi

\if0

\subsubsection*{Back to original solution}

Using a procedure similar to \cite{hamm17jmlr}, we have the following results.

\begin{theorem}
Let $C_{xyz}:=-\ES[xy^T]\ES^T[xy^T] + \lambda \EB[xz^T]\EB^T[xz^T]$. 
Under assumption (), the optimal solution $Q=\ES^{1/2}[xx^T]U$ to the domain adaptation problem Eq.~\ref{} is the solution to the following problem
\[
\min_{Q^TQ=I_d}\; {Tr}\left[ Q^T \ES^{-1/2}[xx^T]C_{xyz}\ES^{-1/2}[xx^T] Q\right],
\]
which is the eigenvectors corresponding to $d$ smallest eigenvalues of $\ES^{-1/2}[xx^T]C_{xyz}\ES^{-1/2}[xx^T]$.
\end{theorem}

For comparison, we also compute the target-domain test accuracy when trained with source-only data and with labeled source and target data.

{\bf Source-only training}\\
Assuming the representation map $U$ is full-rank. Then, 
\[
\min_{U,W} L_y = \min_U L_y(U,\hat{W}) = Tr\left[-(U^T\ES[xx^T]U)^{-1}U^T \ES[xy^T]\ES^T[xy^T] U + \ES[yy^T]\right].
\]
Similar to Eq.~\ref{eq}, the minimum value is invariant to the right multiplication of $U$ by an invertible matrix $R\in \mathbb{R}^{d\times d}$ and we have the following results.
\begin{lemma}
The optimal solution to the source-only learning problem Eq.~\ref{} is the solution to the following problem
\[
\min_{Q^TQ=I_d}\; {Tr} \left[-Q^T \ES^{-1/2}[xx^T] \ES[xy^T] \ES^{-1/2}[xx^T] Q\right],
\]
which is the eigenvectors corresponding to $d$ smallest eigenvalues of $-\ES[xx^T]^{-T/2} \ES[xy^T] \ES[xx^T]^{-1/2}$.
\end{lemma}
This is the same as Eq.~\ref{eq} with $\lambda=0$.

{\bf Labeled source and target training}\\
Since we have the labels of target-domain data, we have
$L_y(U,W)=\EB[\|W^TU^Tx-y\|^2]$ and 
\[
\min_{W} L_y = Tr\left[-(U^T\EB[xx^T]U)^{-1}U^T \EB[xy^T]\EB^T[xy^T] U + \EB[yy^T]\right].
\]

\begin{lemma}
Let $C'_{xyz}:=- \EB[xy^T]\EB^T[xy^T] + \lambda \EB[xz^T]\EB^T[xz^T]$.
The optimal solution to the source-only learning problem Eq.~\ref{} is the solution to the following problem
\[
\min_{Q^TQ=I_d}\; {Tr} \left[Q^T \EB^{-1/2}[xx^T] C'_{xyz} \EB^{-1/2}[xx^T] Q \right],
\]
which is the eigenvectors corresponding to $d$ smallest eigenvalues of $\EB^{-1/2}[xx^T] C'_{xyz} \EB^{-1/2}[xx^T]$.
\end{lemma}
\fi

\if0

\subsubsection{Experiments}

{\bf An example where domain adaptation works}\\

Suppose $y=\pm 1$ corresponds to the class and $z=\pm 1$ corresponds to the domain ($-1$ for source and $1$ for target).
Assume uniform prior $P(z=1)=P(y=1)=0.5$ and mixture-of-Gaussian $P(x|y,z) = \mathcal{N}([ z, y]^T,\sigma^2 I_2)$. Also $\rho=.5$ and $\lambda=1$.
(We really don't need the Gaussian assumption.)
In this case, 
$\ES[xx^T]=\ET[xx^T]=\sigma^2 I + [-1,0]^T[-1,0]=[\sigma^2+1,0;0,\sigma^2]$, 
$\EB[xx^T]=.5\ES[xx^T]+.5\ET[xx^T]=[\sigma^2+1,0;0,\sigma^2]$, 

$\ES[xy^T]=.5* [-1,-1]^T*-1 +.5*[-1,1]^T*1 = [0,1]^T$, 
$\ET[xy^T]=.5* [1,-1]^T*-1 +.5*[1,1]^T*1 = [0,1]^T$, 
$\EB[xy^T]=.5\ES[xy^T]+.5\ES[xy^T] = [0,1]^T$, 

$\ES[xz^T]=.5*[-1,-1]^T*-1 +.5*[-1,1]^T*-1 = [1,0]^T$, 
$\ET[xz^T]=.5*[1,-1]^T*1 +.5*[1,1]^T*1 = [1,0]^T$, 
$\EB[xz^T]=.5\ES[xz^T]+.5\ES[xz^T] = [1,0]^T$,

What about the true labeling function? 

True labeling function in $X$ is $f_S(x) = f_T(x) = soft\sign(x_2)$ 


What do I want to show? Target-domain test accuracy when trained with optimal source-only, with DANN, and with source plus labeled target data?
Also measure $e(f_S,f_T)$?

{\bf An example where domain adaptation fails}\\

Two-dimensional two-class problem. 

Assume uniform distribution $P(z=1)=P(y=1)=0.5$ and mixture-of-Gaussian $P(x|y,z) = \mathcal{N}([z, yz]^T,\sigma^2 I_2)$. Also $\rho=.5$ and $\lambda=1$.

In this case, 
$\ES[xx^T]=\ET[xx^T]=\sigma^2 I + [-1,0]^T[-1,0]=[\sigma^2+1,0;0,\sigma^2]$, 
$\EB[xx^T]=.5\ES[xx^T]+.5\ET[xx^T]=[\sigma^2+1,0;0,\sigma^2]$, 

$\ES[xy^T]=.5 [-1,1]^T*-1 +.5[-1,-1]^T*1 = [0,-1]^T$, 
$\ET[xy^T]=.5 [1,-1]^T*-1 +.5[1,1]^T*1 = [0,1]^T$, 
$\EB[xy^T]=.5\ES[xy^T]+.5\ES[xy^T] = [0,0]^T$, 

$\ES[xz^T]=.5 [-1,1]^T*-1 +.5[-1,-1]^T*-1 = [1,0]^T$, 
$\ET[xz^T]=.5 [1,-1]^T*1 +.5[1,1]^T*1 = [1,0]^T$, 
$\EB[xz^T]=.5\ES[xz^T]+.5\ES[xz^T] = [1,0]^T$, 
\fi


\if0

Solve
\[
\min_u\; L_{undetected}(\{(x_p,y_p)\}) + L_{attacksuccess}(\theta)\;\;\mathrm{s.t.}\;\;\theta = \arg\min L_{DA}({(x,y)} \cup {(x_p,y_p)}).
\]
For the representative UDA method DANN, we solve

\[
\min_u\; L_{undetected}(\{(x_p,y_p)\}) + L_{attacksuccess}(\theta)\;\;\mathrm{s.t.}
\]
$\theta$ is the stationary point of the saddle-point problem
\[
\min_{\theta_f,\theta_y} L_{y}(\theta_f, \theta_y) - L_{d}(\theta_f, \theta_d),\;\; \min_{\theta_d} L_{d}(\theta_f, \theta_d).
\]

Simplification:  
\[
\min_u f\;\;\mathrm{s. t.}\;\; \nabla_{v} g=0\;\;\mathrm{and}\;\;\nabla_{w} h=0.
\]
The last two constraints can be joined as one: $[\nabla_v g;\nabla_w h]=0$

Apply implicit function theorem:
\[
du\cdot[\nabla^2_{uv}g;\nabla^2_{uw}h] + [dv;dw]\cdot M= 0,
\]
where $M:=[\nabla^2_{vv}g,\nabla^2_{wv}g\;;\;\nabla^2_{wv}h,\nabla^2_{ww}h]$.
The total derivative is therefore
\[
\frac{df}{du}(u,v^\ast,w^\ast) = \nabla_u f + [\frac{dv}{du};\frac{dw}{du}]\cdot[\nabla_v f;\nabla_w f]
 = \nabla_u f +[\nabla^2_{uv}g,\nabla^2_{uw}h] M^{-1} [\nabla_v f;\nabla_w f].
\]
Instead of inverting $M$ accurately, we solve the linear system of equations
$M a = [\nabla_v f;\nabla_w f]$ for $a$ and then use 
\[
\frac{df}{du}(u,v^\ast,w^\ast) = \nabla_u f +[\nabla^2_{uv}g,\nabla^2_{uw}h] a.
\]
So is $M$ always invertible?

\JH{Simpler heuristic -- alternating minimization without the target error term in the upper level}
\AM{Try bilevel attack for clean label}\\
\if0
\subsection{Gradient ... matching?}
\subsection{Analyses of the saddle point problem}
Consider a binary domain adaptation problem where each domain has 2 classes. Our task is to consider the effect of poisoning (by adding points from the target data with incorrect labels in the source) on domain adaptation.
Assume $\theta_y$ and $\theta_d$ are linear classifiers.
DANN Optimization problem:
\[
\min_{\theta_f,\theta_y} L_{y}(\theta_f, \theta_y) - L_{d}(\theta_f, \theta_d), \\ \min_{\theta_d} L_{d}(\theta_f, \theta_d).
\]
$g(x)=A x$ where $A\in \mathbb{R}^{? \times ?}$
\[
\min_{A,w_y} E_S[|w_y^T A x - y|^2] - E_{S\cup T}[|w_d^T A x - d|^2]\;\;\mathrm{and}\;\;\min_{w_d}  E_{S\cup T}[|w_d^T Ax - d|^2].
\]
Then $w_d = (AE[xx^T]A^T)^{\dag}E[dx^T]A^T$.

Stationarity condition:
$\nabla_A L_u=0$ and $\nabla_{w_y} L_u=0$.
\fi
\section{Extensions}

\subsection{Attacking Maximum Classifier Discrepancy based DA}

\subsection{Attacking self-supervised training based DA}

Any methods that rely on good initial and intermediate matching to generate pseudo labels
will fail using the proposed attack.

\subsection{Attacking Wasserstein distance-based DA}

\subsection{Attacking few-shot DA}

\subsection{Attack on test-time adaptation based on batch normalization}

We poison the model and give it to the test-time trainer.

\subsection{Attack on batch normalization}

\subsection{Backdoor attack?}

How to make the model use non-invariant or non-generalizable features?
Suggestion: Don't let $f(z)$ rely on common features in $supp(p_S(z)) \cap supp(p_T(z))$.

\subsection{Poisoning target-domain data?}

In addition to or without source poisoning, can we poison the target-domain data to attack domain adaptation?

Example: Few-shot DA. Add wrongly-labeled target examples?
Add wrongly-labeled source examples?
\fi

\end{document}